\newtheorem{theorem}{Theorem}[section]
\newtheorem{prop}[theorem]{Proposition}
\newtheorem{lemma}[theorem]{Lemma}
\newtheorem{corollary}[theorem]{Corollary}
\newtheorem{definition}[theorem]{Definition}
\newenvironment{proof}{\noindent{\bf Proof.}}{\hfill$\square$\medskip}
\def\eps{\epsilon}
\def\Pr{{\sf Pr}}
\def\prob{\Pr}
\def\max{\mbox{Max}}
\def\Min{\mbox{Min}}
\def\var{\mbox{Var}}
\def\R{{\mathbb R}}
\def\E{{\sf E}}
\def\Var{\mbox{Var}}
\def\D{{\cal D}}
\def\P{{\cal P}}
\newcommand{\cut}[1]{}
\begin{document}

\title{Chi-Squared Amplification: Identifying Hidden Hubs}
\author{Ravi Kannan\thanks{Microsoft Research India. Email: {\tt kannan@microsoft.com}}
\and
Santosh Vempala\thanks{Georgia Tech. Email: {\tt vempala@gatech.edu}}
}

\maketitle

\begin{abstract}
We consider the following general {\it hidden hubs} model: an $n \times n$ random matrix $A$ with a subset $S$ of $k$ special rows (hubs): entries in rows outside $S$ are generated from the 
(Gaussian) probability distribution $p_0 \sim N(0,\sigma_0^2)$; for each row in $S$, some $k$ of its entries are generated from $p_1 \sim N(0,\sigma_1^2)$, $\sigma_1>\sigma_0$, and the rest of the entries from $p_0$. The special rows with higher variance entries can be viewed as hidden higher-degree hubs. The problem we address is to identify them efficiently. This model includes and significantly generalizes the planted Gaussian Submatrix Model, where the special entries are all in a $k \times k$ submatrix.
There are two well-known barriers: if $k\geq c\sqrt{n\ln n}$, just the row sums are sufficient to find $S$ in the general model. 
For the submatrix problem, this can be improved by a $\sqrt{\ln n}$ factor to $k \ge c\sqrt{n}$ by spectral methods or combinatorial methods.
In the variant with $p_0=\pm 1$ (with probability $1/2$ each) and $p_1\equiv 1$, neither barrier has been broken (in spite of much effort, particularly for the submatrix version, which is called the Planted Clique problem.)

Here, we break both these barriers for the general model with Gaussian entries. 
We give a polynomial-time algorithm to
identify all the hidden hubs with high probability for $k \ge n^{0.5-\delta}$ for some $\delta >0$, when $\sigma_1^2>2\sigma_0^2$.  The algorithm extends easily to the setting where planted entries might have different variances each at least as large as $\sigma_1^2$.
We also show a nearly matching lower bound: for $\sigma_1^2 \le 2\sigma_0^2$, there is no
polynomial-time Statistical Query algorithm for distinguishing between a matrix whose entries are all from $N(0,\sigma_0^2)$ and a matrix with $k=n^{0.5-\delta}$ hidden hubs for any $\delta >0$. The lower bound as well as the algorithm are related to whether the chi-squared distance of the two distributions diverges. At the critical value $\sigma_1^2=2\sigma_0^2$, we show that the general hidden hubs problem can be solved for $k\geq c\sqrt n(\ln n)^{1/4}$, improving on the naive row sum-based method.
\end{abstract}

\thispagestyle{empty}
\newpage
\setcounter{page}{1}

\section{Introduction}

Identifying hidden structure in random graphs and matrices is a fundamental topic in unsupervised machine learning, with many application areas and deep connections to probability, information theory, linear algebra, statistical physics and other disciplines.
A prototypical example is finding a large hidden clique in a random graph, where the best known algorithms can find a clique of size $k=\Omega(\sqrt{n})$ planted in $G_{n,\frac{1}{2}}$, and smaller planted cliques are impossible to find by statistical algorithms \cite{FGRVX13} or using powerful convex programming hierarchies \cite{Barak16}.
A well-known extension to real-valued entries is the Gaussian hidden submatrix: each entry is drawn from  $N(0,\sigma_0^2)$, except for entries from a $k \times k$ submatrix, which are drawn from $N(\mu,\sigma_1^2)$.

Algorithms for both are based on spectral or combinatorial methods.
Information-theoretically, even a planting of size $O(\log n)$ can be found in time $n^{O(\log n)}$ by enumerating subsets of size $O(\log n)$. This raises the question of the threshold for efficient algorithms.
Since the planted part has different variance, it is natural to try to detect the planting using either the
sums of the rows (degrees in the case of graphs) or the spectrum of the matrix. 
However, these approaches can only detect the planting
at rather large separations (when $\mu=\omega(\sigma_0)$ for example)
or for $k = \Omega(\sqrt{n})$ \cite{Bop87, Kucera95,AKS98,Feige10, DekelGP11, BhaskaraCCFV10, MontanariRZ15, DeshpandeM15}.
Roughly speaking, the relatively few entries of the planted part must be large enough to dominate the variance of the many entries of the rest of the matrix.
A precise threshold for a rank-one perturbation to a random matrix to be noticeable was given by F\'eral and Pech\'e \cite{FeralP07} and applied in a
lower bound by Montanari et al. on using the spectrum to detect a planting \cite{MontanariRZ15}. Tensor optimization (or higher moment optimization)
rather than eigen/singular vectors can find smaller cliques \cite{FriezeK08,BrubakerV09}, but the technique has not yielded a polynomial-time algorithm to date.
A different approach to planted clique and planted Gaussian submatrix problems is to use convex programming relaxations, which also seem unable to go below $\sqrt{n}$.
Many recent papers demonstrate the limitations of these approaches \cite{FeigeK00, FGRVX13, MekaPW15, HopkinsKPRS16, Barak16,  FGV17} (see also \cite{Jerrum92}).

\paragraph{Model.} In this paper, we consider a more general model of hidden structure: the presence of a small number of hidden {\it hubs}. These hubs might represent more influential or 
atypical nodes of a network. Recovering such nodes is of interest in many areas (information networks, protein interaction networks, cortical networks etc.). In this model, as before, the entries of the matrix are drawn from $N(0,\sigma_0^2)$ except for special entries that all lie in $k$ rows, with $k$ entries from each of these $k$ rows. This is a substantial generalization of the above hidden submatrix problems, as the only structure is the existence of $k$ higher ``degree" rows (hubs) rather than a large submatrix. (Our results also extend to unequal variances for the special entries and varying numbers of them for each hub.) 

More precisely, we are given an $N\times n$ random matrix $A$ with independent entries.
There is some unknown subset $S$ of special rows, with $|S|=s$. Each row in $S$ has
$k$ special entries, each picked according to
$$p_1(x)\sim N(0,\sigma_1^2),$$
whereas, all the other $Nn-k|S|$ entries are distributed
according to $$p_0\sim N(0,\sigma_0^2).$$
The task is to find $S$, given, $s=|S|$, $k,n,\sigma_0^2,\sigma_1^2$.
One may also think of $S$ rows as picking $n$ i.i.d. samples from a mixture
	$$\frac{k}{n}p_1(x)+\left( 1-\frac{k}{n}\right)p_0(x),     $$
	whereas, the non-$S$ rows are picking i.i.d. samples from
	$p_0(x)$.
This makes it clear that we cannot assume that the planted entries in the
$S$ rows are all in the same columns.

If $\sigma_0^2=\sigma_1^2$, obviously, we cannot find $S$. If
$$\sigma_1^2>\sigma_0^2(1+c),$$
for a positive constant $c$ (independent of $n,k$), then it is easy to see
that $k\geq \Omega\left( \sqrt {n\ln n}\right)$ suffices to have a polynomial time
algorithm to find $S$: Set $B_{ij}=A_{ij}^2-1$. Let $\sum_jB_{ij}=\rho_i$.
It is not difficult to show that 
if $k\geq c\sqrt n\sqrt{\ln n}$, then, whp,
$$\mbox{Min}_{i: \mbox{ hub }}\;\; \rho_i> 2\mbox{Max}_{i:\mbox{ non-hub}}\;\; \rho_i.$$
The above algorithm is just the analog of the ``degree algorithm'' for hidden (Gaussian) clique --- take the $k$ vertices with the highest degrees --- and works with high probability for $k\geq c\sqrt{n\ln n}$. The remaining literature on upper bounds removes the $\sqrt{\ln n}$ factor, by using either a spectral approach (SVD) or a combinatorial approach (iteratively remove the minimum degree vertex).
For the general hub model, however, this improvement is not possible. The algorithms (both spectral and combinatorial) rely on the special entries being in a submatrix. This leads to our first question:\\

{\it Q. Are there efficient algorithms for finding hidden hubs for $k=o(\sqrt{n\ln n})$?}

\paragraph{Main results.}
Our main results can be summarized as follows. (For this statement, assume $\varepsilon,\delta$ are positive constants. In detailed statements later in the paper, they are allowed to depend on $n$.)
\begin{theorem}\label{thm:main-informal}
For the hidden hubs model with $k$ hubs:
\begin{enumerate}
\item For $\sigma_1^2=2(1+\varepsilon)\sigma_0^2 $, there is an efficient algorithm for $k \ge n^{0.5-\delta}$ for some $\delta>0$, depending only on $\varepsilon$.
\item For $\sigma_1^2 \in [c\sigma_0^2, 2\sigma_0^2]$, any $c >0$, no polynomial Statistical Query algorithm can detect hidden hubs for $k=n^{0.5-\delta}$, for any $\delta>0$.
\item At the critical value $\sigma_1^2=2\sigma_0^2$, with $N=n$, $k\geq \sqrt {n} \; (\ln n)^{1/4}$ suffices.
\end{enumerate}
\end{theorem}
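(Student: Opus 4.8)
The plan is to run the chi-squared amplification idea at its boundary — the same truncated-likelihood-ratio detector one would use for $\sigma_1^2 = 2(1+\varepsilon)\sigma_0^2$, except that here the amplification degrades from polynomial in the truncation level to merely $\Theta(\sqrt{\ln M})$. At $\sigma_1^2 = 2\sigma_0^2$ the per-entry likelihood ratio is $p_1(x)/p_0(x) = \tfrac{1}{\sqrt2}\exp(x^2/(4\sigma_0^2))$, and $\E_{p_0}[(p_1/p_0)^2] = \tfrac12\,\E_{p_0}[\exp(x^2/(2\sigma_0^2))] = \infty$ because the integrand is flat; the chi-squared distance diverges, but only through a linearly growing truncated integral. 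If we cap the ratio at a level $M$ — equivalently restrict to $|x|\le x_M$ with $x_M \approx 2\sigma_0\sqrt{\ln M}$ — then $\E_{p_0}[\min(p_1/p_0,M)^2] = \Theta(\sqrt{\ln M})$, and (because $p_1(x)^2/p_0(x)$ is \emph{constant} in $x$ precisely when $\sigma_1^2 = 2\sigma_0^2$) one also gets $\E_{p_1}[\min(p_1/p_0,M)] = \Theta(\sqrt{\ln M})$. Taking $M$ to be a power of $n$ thus buys a $\Theta(\sqrt{\ln n})$ amplification in the per-row variance of the natural detector, and competing this $\Theta(\sqrt{\ln n})$ against the $\sqrt n$ coming from summing $n$ entries is exactly what converts the naive $\sqrt{n\ln n}$ threshold into $\sqrt n\,(\ln n)^{1/4}$.

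Concretely I would proceed as follows. (1) Fix a cap $M$ that is a small power of $n$, slightly below $n^{1/2}$, put $g(x) = \min(\exp(x^2/(4\sigma_0^2)),\, M)$, and for each row $i$ compute $T_i = \sum_{j=1}^n g(A_{ij})$. (2) Record the moments: $\E_{p_0}[g] = \Theta(1)$, $\Var_{p_0}[g] = \Theta(\sqrt{\ln M}) =: v$, $\E_{p_1}[g] = \Theta(\sqrt{\ln M})$, and $\Var_{p_1}[g] = O(M)$; hence a non-hub row has $\E T_i = n\,\E_{p_0}[g]$ while a hub row has $\E T_i \ge n\,\E_{p_0}[g] + c\,k\sqrt{\ln M}$. (3) Since $1\le g\le M$, apply Bernstein's inequality to each row to get $|T_i - \E T_i| \le O(\sqrt{v\,n\ln n}) + O(M\ln n)$ for all $n$ rows simultaneously with probability $1 - n^{-\Omega(1)}$; for a hub row, split it into its $k$ planted and $n-k$ unplanted entries and add the two estimates, the planted part contributing an extra $O(\sqrt{kM\ln n})$. (4) Declare row $i$ a hub iff $T_i > n\,\E_{p_0}[g] + \tfrac{c}{2}k\sqrt{\ln M}$. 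This classifies every row correctly once $k\sqrt{\ln M}$ dominates each fluctuation term; the binding requirement is $k\sqrt{\ln M} \gg \sqrt{v\,n\ln n} = \sqrt{n\ln n}\,(\ln M)^{1/4}$, i.e. $k \gg \sqrt{n\ln n}/(\ln M)^{1/4}$, which is $\Theta(\sqrt n\,(\ln n)^{1/4})$ since $\ln M = \Theta(\ln n)$.

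I expect the real work to sit in step (2) and the bookkeeping of step (3). For (2) one must verify that capping at a polynomial level yields a second moment of order \emph{exactly} $\sqrt{\ln M}$ — not $O(1)$, which is all a bounded statistic such as $\min(A_{ij}^2, T)$ gives (its variance under $p_0$ stays $\Theta(1)$ because $\chi^2_1$ is already light-tailed), and not $\omega(\sqrt{\ln M})$ — and that the mass above the cap is genuinely negligible under both $p_0$ and $p_1$. For (3), because the amplification $v = \Theta(\sqrt{\ln n})$ is so weak, the cap $M$ is squeezed between ``$M\ln n \ll \sqrt{v\,n\ln n}$'' (so the boundedness term in Bernstein is absorbed, forcing roughly $M \ll n^{1/2}/(\ln n)^{1/4}$) and ``$M$ large enough that $\ln M = \Theta(\ln n)$'', and one has to check that this window is non-empty — it comfortably is, e.g. for $M$ any fixed small power of $n$. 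The remaining ingredients, namely the lower-tail estimate for hub rows and the union bound over the $n$ rows, are routine.
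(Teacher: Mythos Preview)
Your proposal treats only part 3 (the critical case $\sigma_1^2=2\sigma_0^2$); parts 1 and 2 are separate arguments in the paper (Section~3 for the supercritical algorithm, Section~5 for the SQ lower bound via statistical dimension) and you say nothing about them.

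For part 3, your approach is essentially the paper's (Section~2). The paper truncates the argument, setting $B_{ij}=\exp(\gamma\min(M^2,A_{ij}^2))$ with $M=\sigma_0\sqrt{2(\ln n-\ln\ln n)}$, which caps the value at $e^{\gamma M^2}=\sqrt{n/\ln n}$; you cap the value directly at a power of $n$ below $n^{1/2}$. These are equivalent. Your moment computations --- $\E_{p_0}[g]=\Theta(1)$, $\Var_{p_0}[g]=\Theta(\sqrt{\ln M})$, $\E_{p_1}[g]=\Theta(\sqrt{\ln M})$ --- match the paper's exactly (the key point being that $e^{2\gamma x^2}p_0(x)$ and $e^{\gamma x^2}p_1(x)$ are both \emph{flat} in $x$ at $\sigma_1^2=2\sigma_0^2$, so the truncated second moment under $p_0$ and first moment under $p_1$ grow linearly with the truncation window, i.e.\ as $\sqrt{\ln M}$). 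The derivation $k\sqrt{\ln n}\gg\sqrt{n}(\ln n)^{3/4}$ is the same.

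The one real difference is the concentration step. The paper estimates all moments $E_{p_0}[(B_{ij}-\mu_0)^l]$ up to $l=4\ln n$ and feeds them into a Rosenthal-type bound (Theorem~\ref{kannan-conc}), remarking that ``standard concentration inequalities like H\"offding are not sufficient.'' You use Bernstein instead, and this \emph{does} work: with cap $B_{\max}\approx\sqrt{n/\ln n}$ and variance $v\approx\sqrt{\ln n}$, the Bernstein ``range'' term at deviation $t=\sqrt n(\ln n)^{3/4}$ is $B_{\max}\,t\approx n(\ln n)^{1/4}$, which is dominated by the variance term $nv\approx n(\ln n)^{1/2}$, so the sub-Gaussian regime applies and yields the same $t$. (H\"offding, which uses only $B_{\max}$, would give $t\approx\sqrt{nB_{\max}^2\ln n}\approx n/\sqrt{\ln n}$, far too large --- that is what the paper's remark rules out.) So your route is a mild simplification for this particular regime; the paper's moment method is what they reuse uniformly across the $\varepsilon>0$ cases as well.
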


Our algorithm also gives improvements for the special case of identifying hidden Gaussian cliques. For that problem, the closest upper bound in the literature is the algorithm of \cite{BhaskaraCCFV10} for detecting dense subgraphs. Their techniques could be used together with thresholding for distinguishing a hidden Gaussian clique instance from one with no planting. However, the resulting running time grows roughly as
$n^{O(1/(\eps-2\delta))}$ for $\sigma_1^2 = 2(1+\eps)\sigma_0^2$, and $\eps$ must be $\Omega(1)$ to be polynomial-time. Moreover, as with all previous algorithms, it does not extend to the hidden hubs model and needs the special (higher variance) entries to span a $k \times k$ submatrix. In contrast, our simple algorithms run in time linear in the number of entries of the matrix for $\eps = \Omega(1/\log n)$.

Our upper bound can be extended to an even more general model, where each planted entry could have its own distribution $p_{ij} \sim N(0, \sigma_{ij}^2)$ with bounded $\sigma_{ij}^2$.
There is a set of rows $S$ that are hubs, with $|S|=k$. For each $i\in S$, now we
assume there is some subset $T_i$ of higher variance entries. The $|T_i|$ are not given and need not be equal. We assume that the special entries satisfy:
$$\sigma_{ij}^2\geq \sigma_1^2,\mbox{  where, } \sigma_1^2=2(1+\varepsilon )\sigma_0^2, \varepsilon >0.$$

\begin{theorem}\label{main-thm-informal}
Let  $\tau_i =\sum_{j\in T_i} n^{-\sigma_0^2/\sigma_{ij}^2}$.
Suppose, for all $i\in S$,
$$\tau_i\geq \frac{1}{\sqrt\varepsilon} c (\ln N) (\ln n)^{0.5},$$
 then there is a randomized algorithm to identify all of $S$ with high probability.
\end{theorem}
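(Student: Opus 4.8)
\noindent{\bf Proof plan.} The plan is to exploit the fact that drives the paper: for the two Gaussians here, $\int p_1(x)^2/p_0(x)\,dx$ diverges exactly once $\sigma_1^2>2\sigma_0^2$, because $p_1/p_0$ then blows up too fast in the tails; concretely, above the threshold $t=\sigma_0\sqrt{2\ln n}$ a $N(0,\sigma_0^2)$ entry lands only with probability $\asymp 1/(n\sqrt{\ln n})$, while a $N(0,\sigma_{ij}^2)$ entry with $\sigma_{ij}^2\ge 2(1+\varepsilon)\sigma_0^2$ lands with the much larger probability $\asymp n^{-\sigma_0^2/\sigma_{ij}^2}/\sqrt{\ln n}$. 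This threshold is essentially forced by the statement --- $n^{-\sigma_0^2/\sigma_{ij}^2}$ in the definition of $\tau_i$ is, up to polynomial-in-$\ln n$ factors, precisely that probability. So the algorithm is the obvious one: for each row $i$ compute $X_i=|\{j:\ |A_{ij}|>t\}|$, and output the $|S|=k$ rows with the largest $X_i$ (equivalently, all rows whose $X_i$ exceeds a threshold $\theta$ placed in the gap found below). Thus $\tau_i$ is, up to a $\sqrt{\ln n}$ factor and absolute constants, the expected number of planted entries of row $i$ above $t$, and the hypothesis on $\tau_i$ is a lower bound on that expectation.

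I would run the analysis in four steps. \emph{(i) Gaussian tails.} From the two-sided estimate $\frac{u}{\sqrt{2\pi}(u^2+1)}e^{-u^2/2}\le\Pr(Z>u)\le\frac{1}{\sqrt{2\pi}u}e^{-u^2/2}$ one gets $q:=\Pr_{p_0}(|x|>t)\asymp 1/(n\sqrt{\ln n})$ and, for each planted entry, $q_{ij}:=\Pr_{N(0,\sigma_{ij}^2)}(|x|>t)\ge \frac{c_1}{\sqrt{\ln n}}\,n^{-\sigma_0^2/\sigma_{ij}^2}$ with $c_1>0$ absolute (the $\sqrt{\ln n}$ loss is the tail prefactor $\sigma_{ij}/t\ge 1/\sqrt{\ln n}$ when $\sigma_{ij}\le t$; for $\sigma_{ij}\ge t$ the entry exceeds $t$ with probability $\Omega(1)$ anyway, so the bound holds uniformly over all admissible $\sigma_{ij}$). \emph{(ii) Non-hub rows.} For $i\notin S$, $X_i\sim\mathrm{Bin}(n,q)$ with $nq\le 1$, so $\Pr(X_i\ge m)\le (nq)^m/m!\le 1/m!$; a union bound over the $\le N$ non-hub rows gives $\max_{i\notin S}X_i\le M_0$ w.h.p.\ with $M_0=O(\ln N/\ln\ln N)$. \emph{(iii) Hub rows.} For $i\in S$, $X_i\ge Y_i:=\sum_{j\in T_i}\mathbf 1[|A_{ij}|>t]$, a sum of independent Bernoullis with $\E Y_i=\sum_{j\in T_i}q_{ij}\ge \frac{c_1}{\sqrt{\ln n}}\tau_i\ge c_1 c\,\varepsilon^{-1/2}\ln N$ by hypothesis; a Chernoff bound gives $\Pr(Y_i\le\tfrac12\E Y_i)\le e^{-\E Y_i/8}\le N^{-2}$ once $c$ is a large enough absolute constant, and a union bound over the $k\le N$ hub rows gives $\min_{i\in S}X_i\ge M_1$ w.h.p.\ with $M_1=\tfrac12 c_1 c\,\varepsilon^{-1/2}\ln N$. \emph{(iv) The gap.} Since $M_1=\Omega(\varepsilon^{-1/2}\ln N)$ dominates $M_0=O(\ln N/\ln\ln N)$ for $c$ large and $n,N$ large, w.h.p.\ the $k$ largest $X_i$ are exactly those of the hub rows, so the algorithm returns $S$. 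Theorem~\ref{thm:main-informal}(1) is then the special case $\sigma_{ij}^2\equiv\sigma_1^2=2(1+\varepsilon)\sigma_0^2$, $|T_i|\equiv k$, where $\tau_i=k\,n^{-1/(2(1+\varepsilon))}$ and the hypothesis becomes $k\ge n^{1/(2(1+\varepsilon))}\cdot\mathrm{polylog}$, i.e.\ $k\ge n^{1/2-\delta}$ with $\delta=\delta(\varepsilon)>0$.

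I expect the only genuinely delicate point to be the bookkeeping behind step (iv): the hypothesis controls $\tau_i$, which governs only the \emph{expected} count $\E Y_i\asymp\tau_i/\sqrt{\ln n}$, and one must check this beats the fluctuation scale $\asymp\ln N/\ln\ln N$ of the non-hub counts after every $\sqrt{\ln n}$ and every constant from the Gaussian tail estimate is pinned down --- and that it does so uniformly across the admissible range of the $\sigma_{ij}$, some possibly just above $\sqrt 2\,\sigma_0$ and others arbitrarily large. Getting the clean sufficient condition $\tau_i\ge\varepsilon^{-1/2}c(\ln N)(\ln n)^{1/2}$ out of this --- in particular seeing that the slack factor is $\varepsilon^{-1/2}$, which is exactly what degrades as $\sigma_1^2$ approaches the critical value $2\sigma_0^2$ from above --- is where that care is needed. (``Randomized'' is inessential here: the procedure is deterministic given $A$; randomization buys at most mild improvements, e.g.\ in constants or by splitting columns.)
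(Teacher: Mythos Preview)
Your approach is correct and genuinely different from the paper's. The paper does \emph{not} use hard thresholding; it defines the truncated likelihood-ratio statistic $B_{ij}=\exp\bigl(\gamma\,\Min(M^2,A_{ij}^2)\bigr)$ with $\gamma=\tfrac{1}{2\sigma_0^2}-\tfrac{1}{2\sigma_1^2}$ and $M=\sqrt 2\,\sigma_0\sqrt{\ln n}$, takes row sums, and proves separation by bounding all even moments $E|B_{ij}-\mu|^{l}$ and invoking a specialized higher-moment concentration inequality (Theorem~\ref{kannan-conc-2}). Your indicator $\mathbf 1[|A_{ij}|>t]$ with $t=M$ is the ``hard'' version of this same statistic (both agree once $|A_{ij}|\ge M$, but the paper's $B_{ij}$ also varies over $[1,e^{\gamma M^2}]$ on the bulk $[0,M]$). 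What you gain is a much shorter argument: Gaussian tail asymptotics plus Binomial/Chernoff, no moment computations. What the paper's softer statistic buys is that the same analysis machinery drives the other regimes (Sections~2 and~3, including the critical case $\sigma_1^2=2\sigma_0^2$), where thresholding at $M$ would not give the required $(\ln n)^{1/4}$ improvement.

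One correction to your last paragraph: the $\varepsilon^{-1/2}$ in the hypothesis will \emph{not} fall out of your bookkeeping. In the paper it arises from the $1/\varepsilon$ prefactor in the $l$-th moment bound for $B_{ij}$ under $p_0$ (equation~(\ref{exp-Bij-l}), coming from $2\gamma-\tfrac{1}{2\sigma_0^2}\asymp\varepsilon/\sigma_0^2$), which then propagates into the non-planted deviation $t$ in Lemma~\ref{NEW-non-planted}. Your thresholding analysis never sees this: your non-hub bound $M_0=O(\ln N/\ln\ln N)$ and your hub bound $M_1\ge\tfrac{c_1}{2\sqrt{\ln n}}\tau_i$ are both $\varepsilon$-free, so what your argument actually proves is the cleaner sufficient condition $\tau_i\ge c'(\ln N)\sqrt{\ln n}$ with $c'$ absolute. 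For $\varepsilon\le 1$ this is \emph{stronger} than the stated theorem; for $\varepsilon$ large it is formally weaker (the stated hypothesis with $\varepsilon^{-1/2}\ll 1$ does not imply yours), so your proof of the theorem \emph{as stated} has a small gap there --- but that regime is trivial to handle separately, since large $\varepsilon$ means every planted variance is far from critical.
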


As a corollary, we get that if $|T_i|=k$ for all $i\in S$, all special entries satisfy $\sigma_{ij}^2=\sigma_1^2$, and
	$$k=n^{.5-\delta},\mbox{  with } \varepsilon \geq \frac{2\delta}{1-2\delta}+\frac{\ln\ln N}{\ln n}+\frac{\ln\ln n}{2\ln n},$$
	then we can identify all of $S$.

We also have a result for values of $\varepsilon \in \Omega(1/\ln n)$. See Theorem (\ref{NEW-main}).

\paragraph{Techniques.}
Our algorithm is based on a new technique to amplify the higher variance entries, which we illustrate next.
Let $$p_0(x) = \frac{1}{\sqrt{2\pi}\sigma_0} \exp \left( - \frac{x^2}{2\sigma_0^2}\right)\quad \; \quad p_1(x) = \frac{1}{\sqrt{2\pi}\sigma_1} \exp \left( - \frac{x^2}{2\sigma_1^2}\right)$$
be the two probability densities.
The central (intuitive) idea behind our algorithm is to construct another matrix $\hat A$ 
of ``likelihood ratios'', 
defined as
$$\hat A_{ij} = \frac{p_1(A_{ij})}{p_0(A_{ij})} -1.$$
Such a transformation was also described in the context of the planted clique problem \cite{DeshpandeM2015} (although it does not give an improvement for that problem).
At a high level, one computes the row sums of $\hat A$ and shows that the row sums of the $k$ rows of the planted part
are all higher than all the row sums of the non-planted part. First, note that
$$E_{p_0} (\hat A_{ij} ) =\int p_1-\int p_0 =0\; ;\; \var_{p_0}(\hat A_{ij}) = \int \left( \frac{p_1}{p_0}-1\right)^2p_0 =\int\frac{p_1^2}{p_0} -1 = \chi^2(p_1\|p_0),$$
the $\chi$-squared distance between the two distributions $p_0,p_1$. Also,
$$E_{p_1} \left( \frac{p_1}{p_0}-1\right) = \chi^2(p_1\|p_0).$$
 Intuitively, since the expected sum of row $i$, for any $i\notin S$ is 0,
we expect success if the expected row sum in each row of $S$ is greater than the standard deviation of the row sum in any row not in $S$ times a log factor,
namely, if
\begin{equation}\label{success}
	\sqrt{\chi^2(p_1 \| p_0)} \geq \Omega^*( \frac{\sqrt n}{k}) = \Omega^*(n^\delta).
\end{equation}
$$\mbox{Now, }\quad \chi^2(p_1\|p_0) =\int \frac{p_1^2}{p_0} - 1 =\frac{c\sigma_0}{\sigma_1^2} \int \exp \left( x^2 \left( \frac{1}{2\sigma_0^2}-\frac{1}{\sigma_1^2}\right)\right).$$
So, if $\sigma_1^2\geq 2\sigma_0^2$, then, clearly, $\chi^2(p_1\|p_0)$ is infinite and so intuitively, (\ref{success}) can be
made to hold. This is not a proof. Indeed substantial technical work is needed to make this succeed. The starting point of that is to truncate entries, so the integrals are finite. We also have to compute higher moments to ensure enough concentration to translate these intuitive statements into rigorous ones.

On the other hand, if $\sigma_1^2 < 2 \sigma_0^2$, then $\chi^2(p_1 \| p_0)$ is finite and indeed bounded by a constant independent of $k,\sqrt n$.
So (\ref{success}) does not hold. This shows that this line of approach will not yield an algorithm. Our lower bounds show that there is no
polynomial time Statistical Query algorithm at all when $\sigma_1^2\in (0, 2\sigma_0^2]$.

The algorithms are based on the following transformation to the input matrix:
truncate each entry of the matrix, i.e., set the $ij$'th entry to $\min\{M,A_{ij}\}$, then apply $\frac{p_1(\cdot )}{p_0(\cdot )}$ to it; then take row sums. The analysis needs nonstandard a concentration inequality via a careful estimation of higher moments; standard concentration inequalities like the H\"offding inequality are not
sufficient to deal with the fact that the absolute bound on $p_1/p_0$ is too large.

Our algorithms also apply directly to the following {\it distributional version} of the hidden hubs problem with essentially the same separation guarantees.
A hidden hubs distribution is a distribution over vectors $x \in \R^n$ defined by a subset $S \subset [n]$ and parameters $\mu, \sigma_1, \sigma_0$ as follows:
$x_i \sim N(0, \sigma_0^2)$ for $i \not\in S$, and for $i \in S$,
\[
x_i \sim \begin{cases}
 N(\mu, \sigma_1^2)  & \text{with probability }\frac{k}{n}\\
N(0,\sigma_0^2) &  \text{ with probability } 1-\frac{k}{n}.
\end{cases}
\]
The problem is to identify $S$.

For almost all known distributional problems\footnote{The only known exception where a nonstatistical algorithm solves a distributional problem efficiently is learning parities with no noise using Gaussian elimination.}, the best-known algorithms are {\it statistical} or can be made statistical, i.e., they only need to compute expectations of functions on random samples rather than requiring direct access to the samples.
This characterization of algorithms, introduced by Kearns \cite{Kearns93, Kearns98}, has been insightful in part because it is possible to prove lower bounds on the complexity of statistical query algorithms. For example, Feldman et al. \cite{FGRVX13} have shown that the bipartite planted clique problem cannot be solved efficiently by such algorithms when the clique size is $k \le n^{0.5-\delta}$ for any $\delta >0$. A statistical query algorithm can query the input distribution via a statistical oracle. Three natural oracles are STAT, VSTAT and $1$-STAT. Roughly speaking, STAT($\tau$) returns the expectation of any bounded function on a random sample to within additive tolerance $\tau$; VSTAT($t$) returns the expectation of a $0/1$-valued function to within error no more than the standard deviation of $t$ random samples; and $1$-STAT simply returns the value of a $0/1$ function on a random sample.

For the hidden hubs problem, our algorithmic results show that one can go below the $\sqrt{n}$ threshold on the number of hubs (size of clique for the special case of hidden Gaussian clique).
Under the conditions of the algorithmic bounds, for $\sigma_1^2 \ge 2(1+\eps)\sigma_0^2$, there is a $\delta > 0$ s.t., a planting can be detected using a single statistical query whose tolerance is at most the standard deviation of the average of $O(n/k)$ independent samples. We complement the algorithmic results with a lower bound on the separation between parameters that is {\it necessary} for statistical query algorithms to be efficient (Theorem \ref{thm:LB}). 
Our application of statistical query lower bounds to problems over continuous distributions might be of independent interest.
Our matching upper and lower bounds can be viewed in terms of a single function, namely the 
$\chi$-squared divergence of the planted Gaussian and the base Gaussian.

The model and results raise several interesting open questions, including: (1) Can the upper bounds be extended to more general distributions on the entries, assuming independent entries? (2) Does the $\chi$-squared divergence condition suffice for general distributions? (3) Can we recover $k=O(\sqrt{n})$ hidden hubs when $\sigma_1^2 = 2\sigma_0^2$? (our current upper bound is $k=\sqrt{n}(\ln n)^{1/4}$ and our lower bounds do not apply above $\sqrt{n}$)  (4) Are there reductions between planted clique problems with $1/-1$ entries and the hidden hubs problem addressed here?

\paragraph{Summary of algorithms.}
Our basic algorithm for all cases is the same:

Define an $M$ (which is $\sigma_0\sqrt{\ln n}(1+o(1))$.) 
The exact value of $M$ differs from case to case.
Define
matrix $B$ by $B_{ij}=\exp( \gamma \Min (x^2,M^2))$, where, $\gamma$ is always
$=\frac{1}{2\sigma_0^2}-\frac{1}{2\sigma_1^2}$. Then, we prove that 
with high probability, the maximum 
$|S|$ row sums of $B$ occur precisely in the $S$ rows.
However, the bounds are delicate and so we present the proofs in each case separately
(taking advantage of the no page limit rule). 

\section{$\sigma_1^2=2\sigma_0^2$}

In this section, we assume
$$\sigma_1^2=2\sigma_0^2\quad \mbox{ and } N=n.$$

$$\frac{p_1}{p_0}=ce^{\gamma x^2},$$ where, $\gamma>0$ is given by:
\begin{equation}\label{gamma}
	\gamma =\frac{1}{2\sigma_0^2}\; -\; \frac{1}{2\sigma_1^2} = \frac{1}{4\sigma_0^2}.
\end{equation}
Define $L,M$ by:
\begin{equation}\label{M-inequality}
	L = \sqrt{ 2 \left( \ln n-\ln\ln n\right)}\quad ;\quad M=L\sigma_0.
\end{equation}
\begin{equation}\label{Bij-definition}
B_{ij}=\exp\left( \gamma \mbox{Min} (M^2,A_{ij}^2)\right).
\end{equation}
\begin{theorem}\label{main-theorem}
	If $$k\geq c\sqrt n (\ln\;\; n\;  )^{1/4},$$ then with
high probability, the top $s$ row sums of the matrix $B$ occur precisely in the $S$ rows.
\end{theorem}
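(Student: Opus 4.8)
The plan is to show that, with high probability, every row sum of $B$ coming from a hub row exceeds every row sum coming from a non-hub row; then sorting the $N=n$ rows by their row sum in $B$ places exactly the $s$ rows of $S$ on top. Because the entries of $B$ in distinct positions are independent, this reduces to two things: (i) the expected row sum of a hub exceeds that of a non-hub by a definite ``signal'' margin, and (ii) every row sum concentrates around its expectation to within much less than half of that margin.

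First I would compute the two per-entry means. Write $\mu_0=\E_{p_0}[B_{ij}]$ and $\mu_1=\E_{p_1}[B_{ij}]$. With $\gamma=1/(4\sigma_0^2)$ and $\sigma_1^2=2\sigma_0^2$, on the untruncated region $|x|\le M$ the exponent $\gamma x^2-x^2/(2\sigma_0^2)$ equals $-x^2/(4\sigma_0^2)$ under $p_0$ while $\gamma x^2-x^2/(2\sigma_1^2)$ equals $0$ under $p_1$. Hence $\mu_0\to\sqrt 2$ up to a tail term $e^{\gamma M^2}\Pr_{p_0}(|x|>M)=\Theta(1/\sqrt n)$, where $e^{\gamma M^2}=\sqrt{n/\ln n}$, while $\mu_1$ is dominated by $\frac{1}{\sqrt{2\pi}\sigma_1}\cdot 2M=L/\sqrt\pi\sim\sqrt{2\ln n/\pi}$. (Note $\E_{p_1}[e^{\gamma x^2}]=\infty$ without truncation; capping at $M$ is exactly what keeps the signal finite yet still growing like $\sqrt{\ln n}$ --- this is the amplification, and it is why $M=\sigma_0\sqrt{2(\ln n-\ln\ln n)}$ is chosen.) Since a hub row has expected $B$-sum $(n-k)\mu_0+k\mu_1$ and a non-hub row has $n\mu_0$, the signal is $k(\mu_1-\mu_0)=(1-o(1))\,k\sqrt{2\ln n/\pi}\ge(1-o(1))\,c\sqrt{2/\pi}\,\sqrt n\,(\ln n)^{3/4}$.

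The heart of the proof is the concentration step. Each $B_{ij}\in[1,e^{\gamma M^2}]$ with $e^{\gamma M^2}=\sqrt{n/\ln n}$, whereas the per-entry variance is only $\var_{p_0}(B)=\E_{p_0}[B^2]-\mu_0^2\sim 2\sqrt{\ln n/\pi}$; because the range hugely exceeds the standard deviation, Hoeffding's inequality is useless and one must exploit the variance. I would bound the moments $\E_{p_0}[B^q]$ and $\E_{p_1}[B^q]$ for integers $q\le c'\ln n$ directly: for $q\ge 3$, $\E_{p_0}[B^q]=\frac{1}{\sqrt{2\pi}\sigma_0}\int_{-M}^{M}e^{(q-2)x^2/(4\sigma_0^2)}\,dx+e^{q\gamma M^2}\Pr_{p_0}(|x|>M)$, and the first integral must be evaluated sharply via $\int_0^T e^{u^2}\,du\sim e^{T^2}/(2T)$ --- the crude bound ``integrand $\le$ its maximum times interval length'' overshoots by a factor $\ln n$ and would destroy the threshold. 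The outcome is $\E_{p_0}[B^q]\le C(n/\ln n)^{(q-2)/2}/\sqrt{\ln n}$ and $\E_{p_1}[B^q]\le C(n/\ln n)^{(q-1)/2}/\sqrt{\ln n}$ for an absolute $C$, exactly the growth compatible with variance $\Theta(\sqrt{\ln n})$ and range $\Theta(\sqrt{n/\ln n})$. Feeding these into a $2t$-th moment expansion of $\sum_j(B_{ij}-\E B_{ij})$ over partitions of $\{1,\dots,2t\}$ into blocks of size $\ge 2$ --- in which the partitions into pairs dominate and every block of size $\ge 3$ costs extra powers of $\ln n$ --- gives, via Markov with $t=\Theta(\ln n)$, a tail bound $\Pr\big(\,|\sum_j(B_{ij}-\E B_{ij})|>\lambda\,\big)\le n^{-3}$ for $\lambda=\Theta\big(\sqrt{\,n\var_{p_0}(B)\,\ln n\,}\big)=\Theta\big(\sqrt n\,(\ln n)^{3/4}\big)$. (Equivalently, this is a Bernstein-type bound using the sharp variance.) A union bound over the $n$ rows then says every row sum lies within $\lambda$ of its mean w.h.p.; since the signal exceeds $2\lambda$ once $c$ is a large enough absolute constant, the top $s$ row sums are precisely the rows of $S$.

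The main obstacle is driving the concentration radius $\lambda$ down to $\Theta(\sqrt n(\ln n)^{3/4})$ --- the size of the signal at $k=\Theta(\sqrt n(\ln n)^{1/4})$ --- rather than something governed by the far larger range $\sqrt{n/\ln n}$. This forces three delicate points: the moment integrals must be evaluated with no slack, not even a logarithmic factor (hence the $\int_0^T e^{u^2}\,du\sim e^{T^2}/(2T)$ asymptotics and the exact $\var_{p_0}(B)\sim 2\sqrt{\ln n/\pi}$); the $2t$-th moment expansion must be shown to be dominated by pair partitions despite $\E[B^q]$ being polynomially large; and the $k$ planted entries of law $p_1$ in a hub row must be shown not to inflate the row-sum variance or the relevant moments by more than a $1+o(1)$ factor --- true because $k\,\E_{p_1}[B^2]/(n\,\E_{p_0}[B^2])=o(1)$ even though $\E_{p_1}[B^2]$ is as large as $\Theta(\sqrt n/\ln n)$. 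It is the balance struck by $M=\sigma_0\sqrt{2(\ln n-\ln\ln n)}$ between ``enough signal survives truncation'' and ``variance and range stay small'' that produces the $(\ln n)^{1/4}$, rather than the $(\ln n)^{1/2}$ of the naive row-sum method, in the bound on $k$.
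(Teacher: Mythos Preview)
Your proposal is correct and follows essentially the same route as the paper: compute $\mu_0\le\sqrt 2$ and $\mu_1\ge cL$, so the signal $k(\mu_1-\mu_0)$ is of order $\sqrt n(\ln n)^{3/4}$; bound the centred moments $E_{p_0}(B-\mu_0)^l$ and $E_{p_1}(B-\mu_1)^l$ sharply (the paper gets $\frac{c}{L}e^{L^2(l-2)/4}$ and $\frac{c}{L}e^{L^2(l-1)/4}$, matching your $(n/\ln n)^{(q-2)/2}/\sqrt{\ln n}$ and $(n/\ln n)^{(q-1)/2}/\sqrt{\ln n}$); then use a high-moment argument with $m\approx 4\ln n$ to obtain the deviation bound $t=c\sqrt n(\ln n)^{3/4}$ and finish by union bound.

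Two minor packaging differences are worth noting. First, the paper invokes a ready-made moment inequality (Theorem~\ref{kannan-conc}, from \cite{Kannan09}) in place of your partition expansion; the content is the same, and the paper's verification that the $l=1$ term dominates in Lemma~\ref{2-mth-moment} is exactly your ``pair partitions dominate'' claim. Second, for hub rows the paper splits $\sum_j(B_{ij}-\mu_0)$ into $\sum_{j\in T_i}(B_{ij}-\mu_1)$ and $\sum_{j\notin T_i}(B_{ij}-\mu_0)$ and controls each separately (with its own threshold $t_2$), whereas you argue that the $k$ planted entries contribute only a $(1+o(1))$ factor to every relevant moment sum and so the non-hub concentration bound transfers directly. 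Your route is slightly cleaner here and is justified: for each $l\ge 1$ one has $k\,E_{p_1}(B-\mu_1)^{2l}\big/\big(n\,E_{p_0}(B-\mu_0)^{2l}\big)=O((\ln n)^{-1/4})$, so the non-i.i.d.\ moment bound for a hub row is within a constant of the i.i.d.\ bound for a non-hub row.
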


\begin{prop}\label{prop-l}
Suppose $X$ is a non-negative real-valued random variable and $l$ is a positive integer.
$$E\left( |X-E(X)|^l\right)\leq 2E(X^l).$$
\end{prop}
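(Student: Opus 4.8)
\textbf{Proof proposal for Proposition \ref{prop-l}.}
The plan is to set $\mu=E(X)$, note that $\mu\geq 0$ since $X$ is non-negative, and split the expectation $E(|X-\mu|^l)$ according to the sign of $X-\mu$. On the event $\{X\geq\mu\}$ we have $0\leq X-\mu\leq X$ (here $\mu\geq 0$ is used), so $|X-\mu|^l\leq X^l$ pointwise; hence this part contributes at most $E(X^l\mathbf{1}_{X\geq\mu})\leq E(X^l)$. On the complementary event $\{X<\mu\}$ we have $0\leq \mu-X\leq\mu$ (here $X\geq 0$ is used), so $|X-\mu|^l\leq\mu^l$ pointwise, and since $\Pr(X<\mu)\leq 1$, this part contributes at most $\mu^l$.

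Combining the two pieces gives $E(|X-\mu|^l)\leq E(X^l)+\mu^l$. It remains to absorb the $\mu^l$ term: since $t\mapsto t^l$ is convex on $[0,\infty)$ and $X\geq 0$, Jensen's inequality yields $\mu^l=(E(X))^l\leq E(X^l)$. Substituting, $E(|X-\mu|^l)\leq 2E(X^l)$, which is the claim.

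There is no real obstacle here; the only points requiring a little care are (i) using the non-negativity of $X$ in \emph{both} case bounds (it is exactly what makes the two comparisons $X-\mu\le X$ and $\mu-X\le\mu$ valid), and (ii) invoking convexity of the $l$-th power to control $(E X)^l$ by $E(X^l)$. Both are elementary, so the argument is short and self-contained.
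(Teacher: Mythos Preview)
Your proof is correct and is essentially identical to the paper's own argument: both split according to whether $X\geq E(X)$ or $X<E(X)$, bound the two pieces by $E(X^l)$ and $(E X)^l$ respectively, and finish with $(E X)^l\leq E(X^l)$ via Jensen.
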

\begin{proof}
\begin{align*}
E\left( |X-E(X)|^l\right) &\leq \int_{x=0}^{E(X)} (EX)^l \prob (X=x)\; dx +\int_{x=E(X)}^\infty x^l\prob (X=x)\; dx\\
     &\leq (EX)^l+E(X^l)\leq 2E(X^l),
\end{align*}
the last, since, $E(X)\leq (E(X^l))^{1/l}$.	
\end{proof}

\subsection{Non-planted entries are small}

Let
\begin{equation}
\mu_0=E_{p_0}(B_{ij})=\frac{1}{\sqrt{2\pi}\sigma_0}\int_{-\infty}^\infty \exp\left( \gamma \mbox{Min} (M^2,x^2)\right)\exp(-x^2/2\sigma_0^2).
\end{equation}
\begin{align}\label{mu0}
\mu_0&\leq \frac{1}{\sqrt{2\pi}\sigma_0}\int_{-\infty}^\infty \exp(\gamma x^2)p_0(x)\; dx =\frac{1}{\sqrt{2\pi}\sigma_0}\int_{-\infty}^\infty\exp(-x^2/2\sigma_1^2)=\sqrt 2.
\end{align}

%Since $B_{ij}\geq 0$,  for $l$ positive even integer
%\begin{equation}\label{BijMinusMu}
%E_{p_0}\left( B_{ij}-\mu_0\right)^l \leq \frac{2}{\sqrt{2\pi}\sigma_0} \left[ \int_0^{\sqrt{\ln (\mu_0)/\gamma }}\mu_0^lp_0(x)\; dx + \int_{\sqrt{\ln (\mu_0)/\gamma }}^\infty B_{ij}^lp_0(x)dx\right]
%\leq \mu_0^l +E_{p_0} (B_{ij}^l)\leq 2E_{p_0}(B_{ij}^l)
%\end{equation}
%the last step using jenson;s inequality to argue that $\mu_0=E_{p_0}(B_{ij})\leq \left( E_{p_0}(B_{ij}^l)\right)^{1/l}$.
.
%Also, we have $\int_{0}^M\exp(\lambda x^2)\leq \int_0^M \exp(\lambda Mx) \leq \frac{1}{\lambda M}\exp(\lambda M^2)$.

\begin{align}\label{2-exp-Bij-2}
E_{p_0}((B_{ij}-\mu_0)^2)&\leq  E_{p_0}(B_{ij}^2)\nonumber\\
&\leq \frac{2}{\sqrt{2\pi}\sigma_0}\int_{0}^M \exp(2\gamma x^2)\exp(-x^2/2\sigma_0^2) + \frac{2\exp(2\gamma M^2)}{\sqrt{2\pi}\sigma_0}\int_M^\infty \frac{x}{M}\exp(-x^2/2\sigma_0^2)dx\nonumber\\
&\leq \frac{2}{\sigma_0}\int_0^M \; dx\;  +\; \frac{2\sigma_0}{M}\exp\left( M^2 \left( 2 \gamma -\frac{1}{2\sigma_0^2}\right)\right)\leq \quad cL.
\end{align}

For $l\geq 4$, even, we have $\gamma l-(1/2\sigma_0^2)>0$ and
using Proposition (\ref{prop-l}), we get
\begin{align}\label{2-exp-Bij-l}
E_{p_0}((B_{ij}-\mu_0)^l)&\leq 2 E_{p_0}(B_{ij}^l)\nonumber\\
&\leq \frac{4}{\sqrt{2\pi}\sigma_0}\int_{0}^M \exp(\gamma lx^2)\exp(-x^2/2\sigma_0^2) + \frac{4\exp(\gamma lM^2)}{\sqrt{2\pi}\sigma_0}\int_M^\infty \frac{x}{M}\exp(-x^2/2\sigma_0^2)dx\nonumber\\
&\leq \frac{2}{\sigma_0}\int_0^M\exp\left( Mx\left(\gamma l-\frac{1}{2\sigma_0^2}\right)\right)\; dx\; +\; \frac{2\sigma_0}{M}\exp\left( M^2 \left(\gamma l-\frac{1}{2\sigma_0^2}\right)\right)\nonumber\\
&\leq \frac{c}{L}\exp\left( \frac{L^2(l-2)}{4}\right),
\end{align}

We will use a concentration result from (\cite{Kannan09}, Theorem 2) which specialized to our case states
\begin{theorem}\label{kannan-conc}
If $X_,X_2,\ldots ,X_n$ are i.i.d. mean 0 random variables, for any even positive integer $m$, we have
$$E\left( \left(\sum_{j=1}^nX_j\right)^m\right)\leq (cm)^m\left[ \sum_{l=1}^{m/2} \frac{1}{l^2}
    \left( \frac{nE(X_1^{2l})}{m}\right)^{1/l}\right]^{m/2}.$$
\end{theorem}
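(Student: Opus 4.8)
The plan is the classical \emph{moment method}. Set $S=\sum_{j=1}^n X_j$ and $\mu_k:=E(X_1^k)$, so $\mu_1=0$. Expanding $S^m$ and grouping the tuples $(i_1,\dots,i_m)\in[n]^m$ according to which positions carry equal indices gives the Wick‑type identity
$$E(S^m)=\sum_{\pi}\;n(n-1)\cdots\bigl(n-r(\pi)+1\bigr)\prod_{B\in\pi}\mu_{|B|},$$
the sum running over set partitions $\pi$ of $[m]$ into $r(\pi)$ blocks. Since $\mu_1=0$, only partitions whose blocks all have size $\ge2$ survive, so $r(\pi)\le m/2$; and the number of partitions whose block‑size multiset has $b_j$ parts of size $j$ (with $\sum_j jb_j=m$) is $m!\big/\bigl(\prod_j (j!)^{b_j}\,\prod_j b_j!\bigr)$. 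Using $n(n-1)\cdots(n-r+1)\le n^r$ with $r=\sum_j b_j$, and writing $[z^m]g$ for the coefficient of $z^m$ in a power series $g$,
$$\bigl|E(S^m)\bigr|\ \le\ m!\sum_{\substack{(b_j)_{j\ge2}\\\sum_j jb_j=m}}\ \prod_{j\ge2}\frac{1}{b_j!}\left(\frac{n\,|\mu_j|}{j!}\right)^{b_j}\ =\ m!\cdot[z^m]\exp\!\Bigl(\,\sum_{2\le j\le m}\frac{n\,|\mu_j|}{j!}\,z^j\Bigr)$$
by the exponential formula. It is essential here to retain the symmetry factor $1/\prod_j b_j!$ (equivalently, to bound $\binom nr$ rather than $n^r$): dropping it would cost a spurious factor of order $(m/e)^{m/2}$ in the end.

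Write $G(z)=\sum_{2\le j\le m}\frac{n|\mu_j|}{j!}z^j$ and estimate $[z^m]e^{G(z)}\le e^{G(\rho)}/\rho^m$ for every $\rho>0$ by Cauchy's inequality, all coefficients of $G$ being nonnegative. I would bound $G(\rho)$ term by term. For an even index $j=2l$, Stirling's bound $(2l)!\ge(2l/e)^{2l}$ gives $\frac{n\mu_{2l}}{(2l)!}\rho^{2l}\le m\bigl((e^2/4)\,h_l\,\rho^2\bigr)^l$, where $h_l:=\frac1{l^2}(n\mu_{2l}/m)^{1/l}$. For an odd index $j$ --- which forces $r\ge2$, hence $3\le j\le m-2$, so $j\pm1$ are even and $\le m$ --- Cauchy--Schwarz gives $|\mu_j|\le E|X_1|^j\le(\mu_{j-1}\mu_{j+1})^{1/2}$, and AM--GM then bounds $\frac{n|\mu_j|}{j!}\rho^j$ by an absolute constant times the sum of the even‑index $(j-1)$‑ and $(j+1)$‑terms. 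Adding up, $G(\rho)\le C\,m\sum_{l\ge1}\bigl((e^2/4)\,h_l\,\rho^2\bigr)^l$ for an absolute constant $C$.

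Now put $H:=\sum_{l=1}^{m/2}h_l$ --- exactly the quantity the statement raises to the power $m/2$. Since $h_l\le H$ for every $l$, choosing $\rho^2=2/(e^2H)$ makes every factor $(e^2/4)h_l\rho^2$ at most $1/2$, so $G(\rho)\le Cm\sum_{l\ge1}2^{-l}=Cm$ and $[z^m]e^{G}\le e^{Cm}\rho^{-m}=e^{Cm}(e^2H/2)^{m/2}$. Hence
$$\bigl|E(S^m)\bigr|\ \le\ m!\;e^{Cm}\,(e^2H/2)^{m/2}\ \le\ m^m\,e^{Cm}\,(e^2H/2)^{m/2}\ =\ (cm)^m\,H^{m/2},$$
which is the asserted inequality with the absolute constant $c=e^{C}(e^2/2)^{1/2}$. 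Conceptually the factor $1/l^2$ in the statement is $\bigl(1/(2l)!\bigr)^{1/l}$ up to constants --- the ``per‑pair'' cost of a block of size $2l$ --- and the sum over $l$ runs over the admissible block sizes.

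The only genuinely delicate points are (i) the odd‑to‑even moment reduction, so that only the moments $\mu_{2l}$ with $l\le m/2$ appear: it is routine via log‑convexity but needs care to keep every index $\le m$ and to lose no more than a constant factor per block; and (ii) calibrating the radius $\rho$ to $\sum_l h_l$ rather than to the individual coefficients of $G$, together with retaining the symmetry factors $1/\prod_j b_j!$ --- for otherwise the factorial $m!$ is not absorbed into $m^m$ and one picks up a spurious $m^{m/2}$. The remaining ingredients (the partition expansion, the exponential‑formula identity, and the Stirling estimates) are routine.
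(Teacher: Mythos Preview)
The paper does not give its own proof of this statement: it is quoted verbatim as a black box from \cite{Kannan09} (``We will use a concentration result from (\cite{Kannan09}, Theorem 2) which specialized to our case states\ldots''). So there is no in-paper argument to compare against.

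Evaluated on its own, your argument is sound. The partition expansion with $\mu_1=0$ killing singleton blocks, the exponential-formula rewriting as $m!\,[z^m]\exp G(z)$, and the saddle-type bound $[z^m]e^{G}\le e^{G(\rho)}/\rho^m$ are all correct. The two points you flag as delicate are handled properly: for odd $j$ one has $3\le j\le m-1$ (since $m$ is even), so both neighboring even moments $\mu_{j\pm1}$ have indices in $[2,m]$, and the log-convexity bound $|\mu_j|\le(\mu_{j-1}\mu_{j+1})^{1/2}$ together with the identity $((j-1)!(j+1)!)^{1/2}/j!=\sqrt{(j+1)/j}$ yields the claimed comparison with the adjacent even terms up to an absolute constant. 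The choice $\rho^2=2/(e^2H)$ then makes each summand at most $2^{-l}$, giving $G(\rho)\le Cm$ and hence the bound $(cm)^mH^{m/2}$ with $H=\sum_{l\le m/2}l^{-2}(n\mu_{2l}/m)^{1/l}$, exactly the bracketed quantity in the statement.

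One small wording quibble: your remark that retaining $1/\prod_j b_j!$ is ``equivalently, to bound $\binom nr$ rather than $n^r$'' is not quite the right parallel --- you do bound the falling factorial by $n^r$; the essential point is simply that $1/\prod_j b_j!$ is part of the correct count of set partitions with a given block-size profile, and keeping it is what makes the sum collapse into $[z^m]\exp G$. This does not affect the mathematics.
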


With $X_j=B_{ij}-\mu_0$, in Theorem (\ref{kannan-conc}),
we plug in the bounds of (\ref{2-exp-Bij-2}) and (\ref{2-exp-Bij-l}) to get:
\begin{lemma}\label{2-mth-moment}
\begin{align*}
	&\forall m\mbox{  even, } m\leq c\ln n, &E_{p_0}\left( \sum_{j=1}^n\left( B_{ij}-\mu_0\right)\right)^m&\leq (cmnL)^{m/2}
\end{align*}
\end{lemma}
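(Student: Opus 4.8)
The plan is to apply the moment inequality of Theorem (\ref{kannan-conc}) directly with $X_j = B_{ij}-\mu_0$. These are i.i.d., and mean zero by the definition of $\mu_0$, so Theorem (\ref{kannan-conc}) applies for every even $m$; the entire task reduces to showing that the bracketed sum appearing there is, up to a constant factor, its own $l=1$ term.

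First I would rewrite the even-moment estimates in the form the theorem wants. For $l=1$, inequality (\ref{2-exp-Bij-2}) already gives $E_{p_0}(X_1^2)\le cL$. For $l\ge 2$, I would invoke (\ref{2-exp-Bij-l}) with exponent $2l$ (even and $\ge 4$), obtaining $E_{p_0}(X_1^{2l})\le \tfrac{c}{L}\exp\!\big(\tfrac{L^2(l-1)}{2}\big)$, and then substitute $L^2 = 2(\ln n-\ln\ln n)$ from (\ref{M-inequality}), which turns the exponential into $(n/\ln n)^{l-1}$; thus $E_{p_0}(X_1^{2l})\le \tfrac{c}{L}(n/\ln n)^{l-1}$ for all $l\ge 2$.

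Next I would bound the generic summand $\tfrac{1}{l^2}\big(\tfrac{n E(X_1^{2l})}{m}\big)^{1/l}$. The $l=1$ term is at most $cnL/m$. For $l\ge 2$ the estimate above gives $\big(\tfrac{n E(X_1^{2l})}{m}\big)^{1/l}\le \big(\tfrac{c}{mL}\big)^{1/l}\, n\,(\ln n)^{-(l-1)/l}$, where the key point is that the factor $n^{1/l}$ coming from the explicit $n$ multiplies the $n^{(l-1)/l}$ coming from $(n/\ln n)^{l-1}$ to yield exactly $n$. Since $mL>c$ for large $n$ and $(\ln n)^{-(l-1)/l}\le (\ln n)^{-1/2}$, this summand is at most $\tfrac{1}{l^2}\,n\,(\ln n)^{-1/2}$; comparing with the $l=1$ term and using $L=\Theta(\sqrt{\ln n})$ together with the hypothesis $m\le c\ln n$ shows that the ratio of the $l$-th summand to the first is $O(1/l^2)$. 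Summing over $l\ge 2$ therefore contributes only a constant, so the whole bracket is $O(nL/m)$. Substituting this into $(cm)^m[\,\cdot\,]^{m/2}$ and simplifying --- the $m^m$ from $(cm)^m$ and the $m^{-m/2}$ from $[\,nL/m\,]^{m/2}$ combine to $m^{m/2}$ --- produces the claimed bound $(c'mnL)^{m/2}$.

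The only step that is not pure bookkeeping is showing that the $l\ge 2$ terms are not the dominant ones: one must see that the growth $E(X_1^{2l})\asymp (n/\ln n)^{l-1}/L$, after the $1/l$-th root is taken and the outer factor $n$ is restored, collapses to a clean $n$ times a factor that decays in $\ln n$, and that this $(\ln n)^{-1/2}$ gain outweighs the loss of a factor $m/L\asymp m/\sqrt{\ln n}$ precisely because $m\le c\ln n$. This is exactly where the choice $L=\sqrt{2(\ln n-\ln\ln n)}$ in (\ref{M-inequality}) (so that $e^{L^2/2}=n/\ln n$) is used, and it is the reason the lemma is restricted to $m\le c\ln n$.
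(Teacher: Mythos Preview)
Your proposal is correct and follows essentially the same route as the paper: apply Theorem~(\ref{kannan-conc}) with $X_j=B_{ij}-\mu_0$, use (\ref{2-exp-Bij-2}) for the $l=1$ term and (\ref{2-exp-Bij-l}) with exponent $2l$ for $l\ge 2$, and then argue that the $l=1$ term $cnL/m$ dominates the bracket. The only cosmetic difference is that the paper factors out $\exp(L^2/2)$ and reduces the domination claim to the single inequality $\tfrac{cnL}{m}\ge \exp(L^2/2)\big(\tfrac{n}{mL}\exp(-L^2/2)\big)^{1/l}$ for all $l\ge 2$, whereas you substitute $e^{L^2/2}=n/\ln n$ up front and bound the ratio of each summand to the first by $O(m/(l^2 L\sqrt{\ln n}))=O(1/l^2)$ directly; these are the same computation rearranged.
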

\begin{proof}
\begin{align*}
	&\forall m\mbox{  even, }&E_{p_0}\left( \sum_{j=1}^n\left( B_{ij}-\mu_0\right)\right)^m&\leq (cm)^m\left[\frac{nL}{m}+ \exp(L^2/2)\sum_{l=2}^{m/2}\frac{1}{l^2}
\left( \frac{n}{mL}\exp(-L^2/2)       \right)^{1/l}\right]^{m/2}.
\end{align*}
Now, it is easy to check that
$$\frac{cnL}{m}\geq \exp(L^2/2)\left( n\exp(-L^2/2)/(mL)\right)^{1/l}\forall l\geq 2.$$
Hence the Lemma folows, noting that $\sum_l(1/l^2)\leq c$. 
\end{proof}

\begin{lemma}\label{2-Bij-non-planted}
	Let $$t=c\sqrt n\; (\ln n)^{3/4}.$$
for $c$ a suitable constant. For $i\notin S$,
$$\prob\left( \left| \sum_{j=1}^n (B_{ij}-\mu_0)\right|\geq \; t\; \right)\leq \frac{1}{n^2}.$$
Thus, we have
$$\prob\left( \exists i\notin S: \left| \sum_{j=1}^n (B_{ij}-\mu_0)\right|\geq \; t\; \right)\leq \frac{1}{n}.$$
\end{lemma}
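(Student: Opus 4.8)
\textbf{Proof plan for Lemma \ref{2-Bij-non-planted}.}

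The plan is to apply Markov's inequality to the $m$-th moment bound of Lemma \ref{2-mth-moment} for a well-chosen even integer $m$, and then union-bound over the at most $n$ rows with $i \notin S$. For a fixed row $i \notin S$ every entry $A_{ij}$ is distributed according to $p_0$, so the random variables $X_j = B_{ij} - \mu_0$ are i.i.d.\ with mean zero, and Lemma \ref{2-mth-moment} gives $E_{p_0}\bigl((\sum_{j=1}^n X_j)^m\bigr) \leq (cmnL)^{m/2}$ for every even $m \leq c\ln n$. By Markov's inequality applied to the nonnegative random variable $(\sum_j X_j)^m$,
\begin{equation*}
\prob\Bigl( \bigl| \textstyle\sum_{j=1}^n X_j \bigr| \geq t \Bigr) \;=\; \prob\Bigl( \bigl(\textstyle\sum_{j=1}^n X_j\bigr)^m \geq t^m \Bigr) \;\leq\; \frac{(cmnL)^{m/2}}{t^m}.
\end{equation*}

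Next I would choose $m$. Since $L = \sqrt{2(\ln n - \ln\ln n)}$, we have $L^2 = \Theta(\ln n)$ and hence $L = \Theta(\sqrt{\ln n})$; so setting $m$ to be an even integer of order $\ln n$ (say the largest even integer below the threshold $c\ln n$ permitted by Lemma \ref{2-mth-moment}) is admissible. With $t = c'\sqrt{n}\,(\ln n)^{3/4}$ we get
\begin{equation*}
\frac{(cmnL)^{m/2}}{t^m} \;=\; \left( \frac{cmnL}{t^2} \right)^{m/2} \;=\; \left( \frac{cm L}{c'^2 (\ln n)^{3/2}} \right)^{m/2},
\end{equation*}
and since $m = \Theta(\ln n)$ and $L = \Theta((\ln n)^{1/2})$, the quantity $mL = \Theta((\ln n)^{3/2})$ exactly matches the $(\ln n)^{3/2}$ in the denominator; thus for $c'$ chosen a sufficiently large constant (absorbing the hidden constants in $m$, $L$, and $c$), the base $cmL/(c'^2(\ln n)^{3/2})$ is at most $1/e^2$ say. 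Then $(1/e^2)^{m/2} = e^{-m} \leq e^{-c\ln n/2} \leq n^{-2}$ provided the constant in $m = \Theta(\ln n)$ is large enough; tuning $c'$ upward only helps. This establishes the per-row bound $\prob(|\sum_j (B_{ij}-\mu_0)| \geq t) \leq 1/n^2$.

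Finally, the union bound over the (at most $n$) rows $i \notin S$ multiplies the failure probability by at most $n$, giving $\prob(\exists i \notin S : |\sum_{j=1}^n (B_{ij}-\mu_0)| \geq t) \leq n \cdot n^{-2} = 1/n$, as claimed. The only mild subtlety — and the one place to be careful — is the bookkeeping of constants: one must verify that the even integer $m$ of order $\ln n$ can be taken simultaneously large enough that $e^{-m} \leq n^{-2}$ and small enough that $m \leq c\ln n$ so that Lemma \ref{2-mth-moment} applies, and that the constant $c'$ in the definition of $t$ can then be fixed (as a function of the absolute constants appearing in Lemma \ref{2-mth-moment}) to make the base of the exponential strictly less than $1/e^2$. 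This is routine once one writes $L^2 = 2\ln n - 2\ln\ln n$ explicitly, so no genuine obstacle arises.
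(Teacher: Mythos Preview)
Your proposal is correct and matches the paper's proof essentially line for line: the paper applies Markov's inequality to $\left|\sum_j(B_{ij}-\mu_0)\right|^m$ with $m=4\ln n$, invokes Lemma~\ref{2-mth-moment} to bound the moment by $(cmnL)^{m/2}$, obtains $e^{-m}\le n^{-2}$, and then takes a union bound over $i\notin S$. Your added bookkeeping (checking that $mL=\Theta((\ln n)^{3/2})$ so the base can be driven below $1/e^2$ by the constant in $t$, and that $m$ can be taken large enough while still satisfying $m\le c\ln n$) is exactly what the paper leaves implicit.
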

\begin{proof}
We use Markov's inequality on the random variable
$\left| \sum_{j=1}^n (B_{ij}-\mu_0)\right|^m$ and Lemma (\ref{2-mth-moment}) with $m$ set to $4\ln n$ to get
$$\prob\left( \left| \sum_{j=1}^n (B_{ij}-\mu_0)\right|\geq \; t\; \right)\leq e^{-m}\leq \frac{1}{n^2},$$
giving us the first inequality. The second follows by
union bound.
\end{proof}
\subsection{Planted Entries are large}

Now focus on $i\in S$. Let $T_i$ be the set of $k$ special entries in row $i$.
We will use arguments similar to (\ref{2-exp-Bij-l}) to prove an upper bound on the $l$ th moment of
$B_{ij}-\mu_1$ for planted entries and use that to prove that $\sum_{T_i}B_{ij}$ is concentrated about
its mean.

We first need to get a lower bound on $\mu_1=E_{p_1}(B_{ij})$:
\begin{align*}
	\mu_1&\geq \frac{c}{\sigma_1}\int_0^M e^{x^2/4\sigma_0^2}e^{-x^2/4\sigma_0^2}\; dx =\frac{c}{\sigma_1}\int_0^M dx =cL.
\end{align*}
Let $l\geq 2$ be an integer.
Using Proposition (\ref{prop-l}), we get
\begin{align}\label{2-exp-Bij-l-planted}
E_{p_1}((B_{ij}-\mu_1)^l)&\leq 2 E_{p_1}(B_{ij}^l)\nonumber\\
			 &\leq \frac{4}{\sqrt{2\pi}\sigma_1}\int_{0}^M  \exp(\gamma lx^2)\exp(-x^2/2\sigma_1^2) + \frac{4\exp(\gamma lM^2)}{\sqrt{2\pi}\sigma_1}\int_M^\infty \frac{x}{M}\exp(-x^2/2\sigma_1^2)dx\nonumber\\
&\leq \frac{2}{\sigma_1}\int_0^M\exp\left( Mx\left(\gamma l-\frac{1}{2\sigma_1^2}\right)\right)\; dx\; +\; \frac{2\sigma_1}{M}\exp\left( M^2 \left(\gamma l-\frac{1}{2\sigma_1^2}\right)\right)\nonumber\\
&\leq \frac{4}{\sigma_1 M(2\gamma -(1/2\sigma_1^2))}\exp\left( M^2\left(\gamma l-\frac{1}{2\sigma_1^2}\right)\right) \leq \frac{c}{L}\exp\left( \frac{  L^2(l-1)}{4}\right).
\end{align}
\begin{lemma}\label{2-Bij-planted}
	Let $t$ be as in Lemma (\ref{2-Bij-non-planted}).
	Let $$t_2= c\left( \ln n\exp(L^2/4) \; +\; \frac{\sqrt {k\ln n}}{\sqrt L}\exp ( L^2/8)\right).$$
	$$\prob\left(\exists i\in S: \sum_{j\in T_i} (B_{ij}-\mu_1)< \; -\; t_2\right)\leq \frac{1}{n}.$$
	$$\prob\left( \exists i\in S: \sum_{j=1}^n (B_{ij}-\mu_0) < 100 t\right)<\frac{1}{n}.$$
\end{lemma}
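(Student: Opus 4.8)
The plan is to handle the two tail bounds separately. For the first, fix $i \in S$ and work with the $k$ i.i.d. random variables $X_j = B_{ij} - \mu_1$ for $j \in T_i$, which have mean $0$ under $p_1$. The estimate (\ref{2-exp-Bij-l-planted}) gives, for every even $l$, a bound of the form $E_{p_1}(X_j^l) \le (c/L)\exp(L^2(l-1)/4)$; in particular the second moment is at most $(c/L)\exp(L^2/4)$. I would feed these into the concentration inequality Theorem \ref{kannan-conc} (applied with $n$ replaced by $k$), choose $m = c\ln n$ even, and simplify the resulting expression $\bigl[\sum_{l=1}^{m/2} \frac{1}{l^2}\bigl(\frac{k E(X_1^{2l})}{m}\bigr)^{1/l}\bigr]^{m/2}$ exactly as in the proof of Lemma \ref{2-mth-moment}: one checks which term in the sum dominates. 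Here, unlike the non-planted case, the $l=1$ term $\frac{k}{m}\exp(L^2/4)$ need not dominate the $l=2$ term $\frac{1}{4}\bigl(\frac{k}{m}\exp(3L^2/4)\bigr)^{1/2}$ — the crossover depends on whether $k \exp(L^2/4)$ is bigger or smaller than $\sqrt{k}\exp(3L^2/8)$, which is exactly the source of the two-term form of $t_2$. After taking the $m$-th root and applying Markov to $|\sum_{j\in T_i} X_j|^m$ with $m=4\ln n$, the bound $\exp(-m) = n^{-4}$ survives a union bound over the $|S|=s \le n$ hubs, giving probability $\le 1/n$.

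For the second tail bound I need a lower bound on $\sum_{j=1}^n (B_{ij} - \mu_0)$ for $i \in S$. Write this sum as $\sum_{j \in T_i}(B_{ij}-\mu_1) + k(\mu_1 - \mu_0) + \sum_{j \notin T_i}(B_{ij} - \mu_0)$. The first piece is $\ge -t_2$ except on the bad event just bounded. The middle piece is $k(\mu_1 - \mu_0) \ge k(cL - \sqrt2) \ge c k L$ using the lower bound $\mu_1 \ge cL$ from the displayed computation above and $\mu_0 \le \sqrt2$ from (\ref{mu0}). The last piece is a sum of $n - k$ mean-zero i.i.d. terms under $p_0$, and the very same moment computation as in Lemma \ref{2-Bij-non-planted} (Lemma \ref{2-mth-moment} applied to $n-k \le n$ variables) shows it is $\ge -t$ with $t = c\sqrt n (\ln n)^{3/4}$ except on an event of probability $\le n^{-2}$ per row. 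So off a union-bounded bad event of probability $\le 1/n$, we get $\sum_{j=1}^n(B_{ij}-\mu_0) \ge ckL - t_2 - t$. The remaining arithmetic is to verify that for $k \ge c\sqrt n (\ln n)^{1/4}$ the dominant term $ckL$ (which is of order $\sqrt n (\ln n)^{3/4}$) beats $100t$ plus $t_2$; this reduces to checking $L^2/4 = (1-o(1))\ln(n)/2$ so $\exp(L^2/4) \approx \sqrt n / \sqrt{\ln n}$ and $\exp(L^2/8) \approx n^{1/4}/(\ln n)^{1/4}$, making $t_2$ of order $\sqrt n (\ln n)^{1/2} + \sqrt{k}\, n^{1/4}(\ln n)^{1/4}/(\ln n)^{1/4}$, which is $o(kL)$ under the hypothesis on $k$.

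The main obstacle I anticipate is extracting the right constants and the right form of $t_2$ from Theorem \ref{kannan-conc}: one must correctly identify, as a function of how $k$ compares to powers of $n$ and $\ln n$, which index $l$ in the inner sum is dominant, since that determines whether the $\ln n \exp(L^2/4)$ term or the $\sqrt{k\ln n}\,L^{-1/2}\exp(L^2/8)$ term governs, and both regimes genuinely occur in the range of interest. Once $t_2$ is pinned down correctly, the concentration argument is a routine repetition of the Lemma \ref{2-mth-moment}/Lemma \ref{2-Bij-non-planted} template, and the final comparison $ckL \ge 100t + t_2$ is elementary given $k \ge c\sqrt n(\ln n)^{1/4}$ and the asymptotics of $L$ in (\ref{M-inequality}).
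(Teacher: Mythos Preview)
Your proposal is correct and follows essentially the same route as the paper: apply Theorem~\ref{kannan-conc} to $X_j=B_{ij}-\mu_1$ on $T_i$ using the moment bound (\ref{2-exp-Bij-l-planted}), take $m=4\ln n$, apply Markov and a union bound; then for the second statement decompose $\sum_j(B_{ij}-\mu_0)$ into the three pieces you describe and verify $k(\mu_1-\mu_0)\ge ckL>100(t+t_2)$.

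One simplification worth noting concerns exactly the obstacle you anticipate. Rather than a case analysis on which index $l$ dominates, the paper factors the summand as
\[
\left(\frac{k\,E(X_1^{2l})}{m}\right)^{1/l}
= \exp(L^2/2)\left(\frac{ck}{mL}\exp(-L^2/4)\right)^{1/l}
= \exp(L^2/2)\,y^{1/l},
\]
and then uses the elementary inequality $y^{1/l}\le y+1$ (valid for all $y>0$ and all $l\ge 1$) together with $\sum_l 1/l^2<\infty$ to get
\[
E\Bigl(\sum_{j\in T_i}(B_{ij}-\mu_1)\Bigr)^m
\le (cm\exp(L^2/4))^m\Bigl(\tfrac{k}{mL}\exp(-L^2/4)+1\Bigr)^{m/2}.
\]
Taking the $m$-th root and using $\sqrt{a+b}\le\sqrt a+\sqrt b$ produces the two terms of $t_2$ directly, with no need to determine which regime one is in. Your dichotomy between $l=1$ and $l=2$ is not quite the right one (the first term of $t_2$ actually comes from the ``$+1$'' contribution, i.e.\ from large $l$, not from $l=2$), but the $y^{1/l}\le y+1$ trick makes this moot.
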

\begin{proof}
	First, fix attention on one $i\in S$.
	We use Theorem (\ref{kannan-conc}) with $X_j=B_{ij}-\mu_1$ for $j\in T_i$.
	We plug in (\ref{2-exp-Bij-l-planted}) for $E(X_j^{2l})$ to get, with $m=4\ln N$:
	\begin{align*}
		E\left( \sum_{j\in T_i}(B_{ij}-\mu_1)\right)^m &\leq (cm\exp (L^2/4) )^m\left[ \sum_{l=1}^{m/2} \frac{1}{l^2} \left( \frac{k}{mL} \exp(-L^2/4)\right)^{1/l}\right]^{m/2}\\
							      &\leq
		(cm\exp(L^2/4))^m\left(\frac{k}{mL}\exp (-\; L^2/4) +1\right)^{m/2},
\end{align*}
	the last using $x^{1/l}\leq x+1$ for all $x>0$.
Now, we get that for a single $i\in S$, probability that $\sum_{j\in T_i} (B_{ij}-\mu_1)< \; -\; t_2$ is at most $1/n^2$
by using Markov inequality on $\left| \sum_{j\in T_i} (B_{ij}-\mu_1)\right|^m$. We get the first statement of the Lemma by a union bound over all $i\in S$.

For the second statement
we have, using the same argument as in Lemma (\ref{2-Bij-non-planted}), with high probability,
\begin{align}\label{2-8000}
\forall i\in S, \sum_{j\notin T_i} (B_{ij}-\mu_0)&\geq \; -\;  t.
\end{align}
We now claim that
$$kL>100 (t+t_2).$$
From the definition of $t,t_2$, it suffices to prove the following three inequalities to show this:
$$kL> c\sqrt n(\ln n)^{3/4}\; ;\;  kL>c\ln n e^{L^2/4}\; ;\; kL >\frac{\sqrt {k\ln n}}{\sqrt L}e^{L^2/8}.$$
Each is proved by a straightforward (but tedious) calculation.

From the first assertion of the Lemma and (\ref{2-8000}), we now
get that with high probability:
$$\sum_{j=1}^n(B_{ij}-\mu_0)\geq k(\mu_1-\mu_0) -t_2  -t\geq 100 (t+t_2),$$
proving Lemma (\ref{2-Bij-planted}).

\end{proof}

\section{$\sigma_1^2>2\sigma_0^2$}

Recall that all planted entries are $N(0,\sigma_1^2)$. There are $k$ planted entries in each row of $S$.
Assume (only)
$\varepsilon >\frac{c}{\ln n}.$
Define:
$$M^2=2\sigma_0^2(\ln n-\ln\varepsilon -\ln\ln N-\frac{1}{2}\ln \ln n) \quad \mbox{ and } \quad B_{ij}=\exp\left( \gamma \mbox{Min} (M^2,A_{ij}^2)\right).$$

\begin{theorem}\label{NEW-main}
	If $\varepsilon >c/\ln n$ and
	$$k> (\varepsilon \ln N\sqrt{\ln n})^{1-\frac{1}{2(1+\varepsilon)}}n^{1/(2(1+\varepsilon ))},$$
then with high probability, the top $s$ row sums of $B$ occur precisely in the $S$ rows.
\end{theorem}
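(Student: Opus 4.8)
The plan is to follow the same two-part template used in Section~2 (non-planted row sums are small; planted row sums are large), but now tracking the dependence on $\varepsilon$ carefully through the exponents. Write $\gamma = \frac{1}{2\sigma_0^2} - \frac{1}{2\sigma_1^2}$ as always, and note that with $\sigma_1^2 = 2(1+\varepsilon)\sigma_0^2$ we have $2\gamma\sigma_0^2 = 1 - \frac{1}{2(1+\varepsilon)} = \frac{1+2\varepsilon}{2(1+\varepsilon)}$, which is bounded away from $1$; this is exactly why the truncation threshold $M$ can be taken as above and the relevant Gaussian integrals converge. The choice $M^2 = 2\sigma_0^2(\ln n - \ln\varepsilon - \ln\ln N - \tfrac12\ln\ln n)$ is designed so that $e^{\gamma M^2}$, $e^{2\gamma M^2}$, etc., come out as explicit powers of $n$ (up to the $\varepsilon,\ln N,\ln n$ corrections), and the bounds of Theorem~\ref{NEW-main} are what one gets by balancing those powers.

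First I would handle the non-planted entries. Following~(\ref{mu0})--(\ref{2-exp-Bij-l}) verbatim (they only used $\sigma_1^2 \geq 2\sigma_0^2$, i.e. $\gamma \leq \frac{1}{4\sigma_0^2}$, plus $\gamma l - \frac{1}{2\sigma_0^2} > 0$ for even $l \geq 4$, and the latter still holds since $2\gamma\sigma_0^2$ is a constant in $(0,1)$), I would obtain $\mu_0 = E_{p_0}(B_{ij})$ bounded by a constant, $E_{p_0}((B_{ij}-\mu_0)^2) \leq cM/\sigma_0$, and $E_{p_0}((B_{ij}-\mu_0)^l) \leq \frac{c}{M/\sigma_0}\exp\big((\gamma l - \tfrac{1}{2\sigma_0^2})M^2\big)$ for even $l \geq 4$. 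Plugging these into Theorem~\ref{kannan-conc} with $m = \Theta(\ln N)$, the dominant term in the bracket is the $l=1$ term $nE(X_1^2)/m$, exactly as in Lemma~\ref{2-mth-moment}; one needs to recheck that $\frac{cnE(X_1^2)}{m} \geq \frac{1}{l^2}\big(\frac{nE(X_1^{2l})}{m}\big)^{1/l}$ for all $2 \leq l \leq m/2$, which reduces to checking $e^{\gamma M^2} = n^{\Theta(1)}$ times lower-order factors is dominated by $n$ — true since $2\gamma\sigma_0^2 < 1$. This gives $E_{p_0}\big(\sum_j(B_{ij}-\mu_0)\big)^m \leq (cmn\cdot M/\sigma_0)^{m/2}$, hence by Markov with $m = 4\ln N$ a whp bound $\big|\sum_{j=1}^n(B_{ij}-\mu_0)\big| \leq t := c\sqrt{n\ln N}\,\sqrt{\ln n}$ simultaneously over all $i \notin S$ (the extra $\sqrt{\ln n}$ coming from $M/\sigma_0 = \Theta(\sqrt{\ln n})$).

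Next, the planted rows. The key quantity is $\mu_1 = E_{p_1}(B_{ij})$, and the analog of the computation before~(\ref{2-exp-Bij-l-planted}) gives $\mu_1 \geq \frac{c}{\sigma_1}\int_0^M e^{(\gamma - \frac{1}{2\sigma_1^2})x^2}\,dx$; since $\gamma - \frac{1}{2\sigma_1^2} = \frac{1}{2\sigma_0^2} - \frac{1}{\sigma_1^2} = \frac{1}{2\sigma_0^2}\cdot\frac{\varepsilon}{1+\varepsilon} > 0$ the integral is dominated by its endpoint, yielding $\mu_1 \geq c\cdot\frac{1}{M(\gamma-\frac{1}{2\sigma_1^2})}e^{(\gamma-\frac{1}{2\sigma_1^2})M^2}$, which works out to a power $n^{1-\frac{1}{2(1+\varepsilon)}}$ of $n$ (divided by the stated polylog factors) — this is where the exponent in the theorem comes from. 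For the moments, I would bound $E_{p_1}((B_{ij}-\mu_1)^l) \leq 2E_{p_1}(B_{ij}^l) \leq \frac{c}{M}e^{(\gamma l - \frac{1}{2\sigma_1^2})M^2}$ exactly as in~(\ref{2-exp-Bij-l-planted}), feed into Theorem~\ref{kannan-conc} with $m = 4\ln N$ over the $k$ indices in $T_i$, and get concentration of $\sum_{j\in T_i}B_{ij}$ about $k\mu_1$ with deviation $t_2$ of the form $c(\ln N\, e^{\gamma M^2/(\text{stuff})} + \sqrt{k\ln N}/\sqrt M \cdot e^{\cdots})$; combined with the non-planted contribution $\sum_{j\notin T_i}(B_{ij}-\mu_0) \geq -t$ from the first part, row $i \in S$ has sum at least $k\mu_1 - k\mu_0 - t_2 - t$. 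The proof then closes by verifying $k\mu_1 \gg t + t_2 + k\mu_0$, which unwinds into a handful of inequalities between powers of $n$ with polylog corrections; the hypothesis $k > (\varepsilon\ln N\sqrt{\ln n})^{1-\frac{1}{2(1+\varepsilon)}} n^{1/(2(1+\varepsilon))}$ is precisely the one that makes $k\mu_1$ (which scales like $k\cdot n^{1-\frac{1}{2(1+\varepsilon)}}/(\text{polylog})$) beat the $\sqrt{n}\,$-type error terms.

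The main obstacle I expect is bookkeeping the $\varepsilon$-dependence so that everything degrades gracefully as $\varepsilon \to c/\ln n$: the exponent $\frac{1}{2(1+\varepsilon)}$ must be handled as a genuine variable, the convergence margin $1 - 2\gamma\sigma_0^2 = \frac{1}{2(1+\varepsilon)}$ shrinks like $\Theta(1)$ but the margin $\gamma - \frac{1}{2\sigma_1^2} = \frac{\varepsilon}{2\sigma_0^2(1+\varepsilon)}$ shrinks like $\Theta(\varepsilon)$, so constants hidden in "$\int_0^M e^{ax^2}dx \approx \frac{1}{2aM}e^{aM^2}$" now carry a $1/\varepsilon$ and must be tracked (this is the source of the $\varepsilon$ factor, not $\sqrt\varepsilon$, in the hypothesis versus the $1/\sqrt\varepsilon$ in Theorem~\ref{main-thm-informal}). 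The secondary obstacle is re-verifying that the $l=1$ term still dominates the bracket in Theorem~\ref{kannan-conc} uniformly in $l$ once $M^2$ has its new $\varepsilon$-dependent value; this is the analog of the "easy to check" inequality in Lemma~\ref{2-mth-moment} and should go through because $2\gamma\sigma_0^2 < 1$ strictly, but the slack is only $\Theta(1)$ and interacts with the polylog factors in $M$, so it needs care rather than a one-line appeal.
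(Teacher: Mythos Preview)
There is a genuine gap in your non-planted analysis that breaks the proof. You write that the Section~2 bounds (\ref{mu0})--(\ref{2-exp-Bij-l}) can be reused ``verbatim'' because ``they only used $\sigma_1^2\ge 2\sigma_0^2$, i.e.\ $\gamma\le\frac{1}{4\sigma_0^2}$''. The implication goes the wrong way: $\sigma_1^2>2\sigma_0^2$ gives $\gamma>\frac{1}{4\sigma_0^2}$, so $2\gamma-\frac{1}{2\sigma_0^2}>0$. In Section~2 that quantity was \emph{exactly} zero, which is why $\int_0^M\exp\bigl((2\gamma-\tfrac{1}{2\sigma_0^2})x^2\bigr)\,dx=M$ and the variance came out $O(L)$. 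Here the integrand grows, and the second moment is of order $\frac{1}{M\varepsilon}\exp\bigl((2\gamma-\tfrac{1}{2\sigma_0^2})M^2\bigr)\sim n^{\varepsilon/(1+\varepsilon)}$, polynomially large in $n$. The paper therefore treats all $l\ge 2$ uniformly by the single bound (\ref{exp-Bij-l}), $E_{p_0}((B_{ij}-\mu_0)^l)\le \frac{c\sigma_0}{M\varepsilon}\exp\bigl(M^2(\gamma l-\tfrac{1}{2\sigma_0^2})\bigr)$, and does \emph{not} argue that the $l=1$ term dominates; instead it factors out $e^{2\gamma M^2}$ from every summand and uses $x^{1/l}\le 1+x$ to get
\[
t=c(\ln N)\,e^{\gamma M^2}\Bigl(1+\sqrt{\tfrac{cn\sigma_0}{mM\varepsilon}}\,e^{-M^2/4\sigma_0^2}\Bigr).
\]
Since $e^{\gamma M^2}\sim n^{1-1/(2(1+\varepsilon))}$, this $t$ is polynomially larger than the $\sqrt{n}\cdot\mathrm{polylog}$ you derived; your $t$ is simply not a valid tail threshold.

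A second (related) error: your estimate $\mu_1\sim n^{1-1/(2(1+\varepsilon))}$ is off. In fact $(\gamma-\tfrac{1}{2\sigma_1^2})M^2\approx\frac{\varepsilon}{1+\varepsilon}\ln n$, so $\mu_1\sim n^{\varepsilon/(1+\varepsilon)}$ (up to the $1/(\varepsilon M)$ prefactor). The exponent $1/(2(1+\varepsilon))$ in the hypothesis arises not from $\mu_1$ alone but from matching $k\mu_1$ against the correct $t$: one needs $k\,e^{\gamma M^2-M^2/2\sigma_1^2}\gtrsim e^{\gamma M^2}$, i.e.\ $k\gtrsim e^{M^2/2\sigma_1^2}=n^{\sigma_0^2/\sigma_1^2}=n^{1/(2(1+\varepsilon))}$, with the polylog and $\varepsilon$ factors in the hypothesis coming from the corrections in $M^2$ and the prefactors. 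With the right $t$ and $\mu_1$ in hand, the rest of your outline (concentration of $\sum_{T_i}B_{ij}$ via Theorem~\ref{kannan-conc}, decomposition of the $S$-row sums) matches the paper.
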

\begin{corollary}
	If $\varepsilon >c/\ln n$ and $k\in\Omega^* \left( n^{0.5-\frac{\varepsilon }{2(1+\varepsilon )}}\right)$, then, with
	high probability, the
	top $s$ row sums of $B$ occur precisely in the $S$ rows.
\end{corollary}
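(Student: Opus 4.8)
The plan is first to observe that the Corollary is an immediate consequence of Theorem~\ref{NEW-main}: since $0.5-\frac{\varepsilon}{2(1+\varepsilon)}=\frac{1}{2(1+\varepsilon)}$, the hypothesis of that theorem reads $k>\bigl(\varepsilon\ln N\sqrt{\ln n}\bigr)^{1-\frac{1}{2(1+\varepsilon)}}\,n^{0.5-\frac{\varepsilon}{2(1+\varepsilon)}}$, and as $\varepsilon=O(1)$ and the exponent $1-\frac{1}{2(1+\varepsilon)}<1$ the leading factor is polylogarithmic in $N,n$ and is therefore absorbed into $\Omega^*(\cdot)$. So the real work is to prove Theorem~\ref{NEW-main} (whose proof does not appear in the excerpt), and I would do that by reusing the structure of the $\sigma_1^2=2\sigma_0^2$ analysis almost verbatim. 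Write $\sigma_1^2=2(1+\varepsilon)\sigma_0^2$, so $\gamma=\frac1{2\sigma_0^2}\bigl(1-\frac1{2(1+\varepsilon)}\bigr)$, and note that the stated $M$ amounts to $e^{-M^2/2\sigma_0^2}=\varepsilon\ln N\sqrt{\ln n}/n=:\eta$, so that $e^{\gamma M^2}=(n/(\varepsilon\ln N\sqrt{\ln n}))^{1-1/(2(1+\varepsilon))}$ is a fixed polynomial power of $n$ (between roughly $\sqrt n$ and $n$). Step one is that the non-planted entries are small: extending $\min(M^2,x^2)$ to $x^2$ gives $\mu_0=E_{p_0}(B_{ij})\le\sigma_1/\sigma_0=\sqrt{2(1+\varepsilon)}=O(1)$, exactly as in (\ref{mu0}); splitting the integral at $M$ as in (\ref{2-exp-Bij-2})--(\ref{2-exp-Bij-l}) --- now using that $2\gamma l-\frac1{2\sigma_0^2}>0$ for every $l\ge1$, which holds precisely because $\sigma_1^2>2\sigma_0^2$ --- gives the even-moment bound $E_{p_0}(B_{ij}^{2l})\le\frac{c}{\varepsilon M}\,\eta\,(e^{2\gamma M^2})^{l}$.

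Next I would feed these into Kannan's concentration inequality (Theorem~\ref{kannan-conc}) with $X_j=B_{ij}-\mu_0$ and $m=c\ln N$. The point --- and this is where the precise choice of $M$ is used --- is that $M$ is tuned so that each term $\frac1{l^2}\bigl(nE_{p_0}(X_j^{2l})/m\bigr)^{1/l}$ is at most a constant times $e^{2\gamma M^2}$, so the bracketed sum is $O(e^{2\gamma M^2})$; Markov on the $m$-th moment and a union bound over the $N$ rows then give $|\sum_j(B_{ij}-\mu_0)|\le t$ for all $i\notin S$ with high probability, where $t=\Theta(\ln N\cdot e^{\gamma M^2})$. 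For the planted rows I would lower-bound $\mu_1=E_{p_1}(B_{ij})$ by the contribution of $x$ near $M$: since $\gamma-\frac1{2\sigma_1^2}=\frac1{2\sigma_0^2}\cdot\frac{\varepsilon}{1+\varepsilon}>0$ the exponent is increasing, so $\mu_1\ge\frac{c}{\varepsilon M}e^{(\gamma-\frac1{2\sigma_1^2})M^2}=\frac{c}{\varepsilon M}e^{\gamma M^2}\eta^{1/(2(1+\varepsilon))}$, in particular $\mu_1\gg\mu_0$. Applying Kannan again with $X_j=B_{ij}-\mu_1$ over $j\in T_i$ and moment bounds of the type (\ref{2-exp-Bij-l-planted}), plus a union bound over the $|S|$ hubs, yields $\sum_{j\in T_i}(B_{ij}-\mu_1)\ge-t_2$ with $t_2=\Theta(\ln N\cdot e^{\gamma M^2})$ (the non-special entries in a hub row contribute the same $\pm t$ as a non-hub row). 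Hence w.h.p.\ every $i\in S$ has row sum at least $n\mu_0+k(\mu_1-\mu_0)-t-t_2$ while every $i\notin S$ has row sum at most $n\mu_0+t$, so it is enough to check $k(\mu_1-\mu_0)\gg t+t_2$; plugging in $\mu_1-\mu_0\gtrsim\frac1{\varepsilon M}e^{\gamma M^2}\eta^{1/(2(1+\varepsilon))}$, $t+t_2=\Theta(\ln N\,e^{\gamma M^2})$, $M=\Theta(\sqrt{\ln n})$ and $\eta^{-1}=n/(\varepsilon\ln N\sqrt{\ln n})$ turns this, after the kind of routine three-way numerical comparison made at the end of the proof of Lemma~\ref{2-Bij-planted}, into exactly the hypothesis $k>(\varepsilon\ln N\sqrt{\ln n})^{1-\frac1{2(1+\varepsilon)}}n^{1/(2(1+\varepsilon))}$.

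The hard part, as in Section 2, is the concentration bound for $\sum_j(B_{ij}-\mu_0)$: $B_{ij}$ is bounded only by $e^{\gamma M^2}$, a positive polynomial power of $n$, so this absolute bound dwarfs $\sqrt n$ times the standard deviation and Hoeffding/Bernstein-type inequalities give nothing useful; one is forced to estimate \emph{every} moment up to order $\Theta(\ln N)$ and to show that the $l=1$ term dominates the bracketed sum in Kannan's inequality. That domination is exactly what dictates the finely tuned $M$, including the lower-order corrections $-\ln\varepsilon-\ln\ln N-\tfrac12\ln\ln n$: a larger $M$ blows up the high moments, a smaller $M$ makes $\mu_1$ too small. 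A secondary nuisance is that $\varepsilon$ is allowed to be as small as $\Theta(1/\ln n)$, so I would have to check that all the above estimates --- that $e^{\gamma M^2}$ is still a genuine polynomial power, and that ratios such as $k\eta^{1/(2(1+\varepsilon))}/(\varepsilon M\ln N)$ behave as claimed --- hold uniformly in that regime.
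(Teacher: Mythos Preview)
Your proposal is correct and takes the same approach as the paper. The paper treats the Corollary as immediate from Theorem~\ref{NEW-main} via exactly the identity $\frac{1}{2(1+\varepsilon)}=\frac{1}{2}-\frac{\varepsilon}{2(1+\varepsilon)}$ and absorption of the polylogarithmic prefactor into $\Omega^*$ that you give; your sketch of Theorem~\ref{NEW-main} (which you thought absent but which the paper proves in Lemmas~\ref{NEW-mth-moment}--\ref{NEW-planted}) mirrors the paper's argument essentially line by line --- same $\mu_0$ bound, same moment estimates via splitting at $M$, same invocation of Theorem~\ref{kannan-conc} with $m=4\ln N$, same thresholds $t,t_2=\Theta(\ln N\,e^{\gamma M^2})$, same $\mu_1$ lower bound from $x$ near $M$, and the same three-inequality comparison at the end.
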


\subsection{Non-planted entries are small}

Let
\begin{equation}
\mu_0=E_{p_0}(B_{ij})=\frac{1}{\sqrt{2\pi}\sigma_0}\int_{-\infty}^\infty \exp\left( \gamma \mbox{Min} (M^2,x^2)\right)\exp(-x^2/2\sigma_0^2)
\leq \frac{1}{\sqrt{2\pi}\sigma_0}\int_{-\infty}^\infty\exp(-x^2/2\sigma_1^2)=\sqrt{2(1+\varepsilon)}.
\end{equation}

%Since $B_{ij}\geq 0$,  for $l$ positive even integer
%\begin{equation}\label{BijMinusMu}
%E_{p_0}\left( B_{ij}-\mu_0\right)^l \leq \frac{2}{\sqrt{2\pi}\sigma_0} \left[ \int_0^{\sqrt{\ln (\mu_0)/\gamma }}\mu_0^lp_0(x)\; dx + \int_{\sqrt{\ln (\mu_0)/\gamma }}^\infty B_{ij}^lp_0(x)dx\right]
%\leq \mu_0^l +E_{p_0} (B_{ij}^l)\leq 2E_{p_0}(B_{ij}^l)
%\end{equation}
%the last step using jenson;s inequality to argue that $\mu_0=E_{p_0}(B_{ij})\leq \left( E_{p_0}(B_{ij}^l)\right)^{1/l}$.

Let $l\geq 2$ be an integer.
We note that $\gamma l-(1/2\sigma_0^2) >0$ for $l\geq 2$.
%Also, we have $\int_{0}^M\exp(\lambda x^2)\leq \int_0^M \exp(\lambda Mx) \leq \frac{1}{\lambda M}\exp(\lambda M^2)$.
Using Proposition (\ref{prop-l}), we get (recall $i\notin S$)
\begin{align}\label{exp-Bij-l}
E_{p_0}((B_{ij}-\mu_0)^l)&\leq 2 E_{p_0}(B_{ij}^l)\nonumber\\
&\leq \frac{4}{\sqrt{2\pi}\sigma_0}\int_{0}^M \exp(\gamma lx^2)\exp(-x^2/2\sigma_0^2) + \frac{4\exp(\gamma lM^2)}{\sqrt{2\pi}\sigma_0}\int_M^\infty \frac{x}{M}\exp(-x^2/2\sigma_0^2)dx\nonumber\\
&\leq \frac{2}{\sigma_0}\int_0^M\exp\left( Mx\left(\gamma l-\frac{1}{2\sigma_0^2}\right)\right)\; dx\; +\; \frac{2\sigma_0}{M}\exp\left( M^2 \left(\gamma l-\frac{1}{2\sigma_0^2}\right)\right)\nonumber\\
&\leq \frac{c\sigma_0}{M\varepsilon }\exp\left( M^2\left(\gamma l-\frac{1}{2\sigma_0^2}\right)\right),
\end{align}
using $2\gamma -(1/2\sigma_0^2) = \frac{\varepsilon}{2\sigma_0^2(1+\varepsilon)}\geq \frac{\varepsilon}{4\sigma_0^2}$.

With $X_j=B_{ij}-\mu_0$, in Theorem (\ref{kannan-conc}),
we plug in the bounds of (\ref{exp-Bij-l}) to get:
\begin{lemma}\label{NEW-mth-moment}
\begin{align}\label{Bij-m}
	&\forall m\mbox{  even, }&E_{p_0}\left( \sum_{j=1}^n\left( B_{ij}-\mu_0\right)\right)^m&\leq (cm)^me^{\gamma m M^2}\left[ \sum_{l=1}^{m/2}\frac{1}{l^2}
\left( \frac{cn\sigma_0}{mM\varepsilon } \exp(-M^2/(2\sigma_0^2))      \right)^{1/l}\right]^{m/2}\implies \nonumber\\
&\mbox{ With }m=4\ln N, & E_{p_0}\left| \sum_{j=1}^n\left( B_{ij}-\mu_0\right)\right|^{m}&\leq \left( cm \exp (\gamma M^2)\right)^{m}\left( 1+\frac{cn\sigma_0}{mM\varepsilon }\exp( - M^2/2\sigma_0^2)\right)^{m/2}.
\end{align}
\end{lemma}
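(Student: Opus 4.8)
\emph{Proof plan.} The lemma is obtained by feeding the pointwise moment bound (\ref{exp-Bij-l}) into the concentration inequality of Theorem (\ref{kannan-conc}), applied to the i.i.d.\ mean-zero variables $X_j = B_{ij}-\mu_0$, $j=1,\ldots,n$, and then simplifying. First I would record the even moments that Theorem (\ref{kannan-conc}) calls for: replacing $l$ by $2l$ in (\ref{exp-Bij-l}), which is legitimate since $2l\ge 2$ for every $l\ge 1$ (and the hypothesis $\gamma l-1/(2\sigma_0^2)>0$ used to derive (\ref{exp-Bij-l}) holds for exponents $\ge 2$ because $\sigma_1^2>2\sigma_0^2$), I get
\[
E_{p_0}(X_1^{2l}) \le \frac{c\sigma_0}{M\varepsilon}\exp\!\left(M^2\Big(2\gamma l - \tfrac{1}{2\sigma_0^2}\Big)\right) = \frac{c\sigma_0}{M\varepsilon}\, e^{2\gamma l M^2}\, e^{-M^2/2\sigma_0^2}.
\]

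Next I would substitute this into Theorem (\ref{kannan-conc}). One has $\frac{n\,E(X_1^{2l})}{m}\le e^{2\gamma l M^2}\cdot\frac{cn\sigma_0}{mM\varepsilon}\,e^{-M^2/2\sigma_0^2}$, and on taking the $1/l$-th power the factor $e^{2\gamma l M^2}$ collapses to the $l$-independent quantity $e^{2\gamma M^2}$. Pulling $e^{2\gamma M^2}$ out of the sum over $l$ and using $(e^{2\gamma M^2})^{m/2}=e^{\gamma m M^2}$ gives exactly the first displayed inequality of Lemma (\ref{NEW-mth-moment}). For the second inequality I would set $m=4\ln N$ (rounded to an even integer) and bound the bracketed sum: using $x^{1/l}\le x+1$ for all $x>0$ and $l\ge 1$, together with $\sum_{l\ge 1}l^{-2}=\pi^2/6\le c$, the bracketed sum is at most $c\big(1+\frac{cn\sigma_0}{mM\varepsilon}e^{-M^2/2\sigma_0^2}\big)$; raising this to the power $m/2$, absorbing the resulting $c^{m/2}$ into the leading $(cm)^m$ (enlarging $c$), and noting that since $m$ is even $\big(\sum_j X_j\big)^m=\big|\sum_j X_j\big|^m$, one obtains the stated bound $(cm\,e^{\gamma M^2})^m\big(1+\frac{cn\sigma_0}{mM\varepsilon}e^{-M^2/2\sigma_0^2}\big)^{m/2}$.

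\emph{Main obstacle.} There is no real difficulty here; the step that needs care is the exponential bookkeeping inside the $l$-th root of Theorem (\ref{kannan-conc}) — specifically that the $l$-dependence $e^{2\gamma l M^2}$ disappears after taking the $1/l$-th power, so that exactly the prefactor $e^{\gamma m M^2}$ (and not a larger one) is pulled out — and checking the sign condition $2\gamma-1/(2\sigma_0^2)=\varepsilon/(2\sigma_0^2(1+\varepsilon))>0$ that makes (\ref{exp-Bij-l}) applicable for all the required moments. Everything else is the routine substitution and the two elementary estimates $x^{1/l}\le x+1$ and $\sum l^{-2}\le c$.
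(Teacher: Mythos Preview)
Your proposal is correct and follows essentially the same approach as the paper: plug the moment bound (\ref{exp-Bij-l}) (with $l$ replaced by $2l$) into Theorem (\ref{kannan-conc}), pull the $l$-independent factor $e^{2\gamma M^2}$ out of the bracket, and then simplify the remaining sum via $x^{1/l}\le x+1$ and $\sum_l l^{-2}\le c$. Your write-up is in fact more careful than the paper's one-line justification, explicitly checking the sign condition and the exponential bookkeeping.
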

Here, the last inequality is because $x^{1/l}\leq x+1$ for all real $x$
and further $\sum_l (1/l^2)$ is a convergent series.

\begin{lemma}\label{NEW-non-planted}
	Let $$t=c(\ln N)\exp(\gamma M^2)\left( 1+\frac{\sqrt{cn\sigma_0}}{\sqrt{mM\varepsilon }}\exp\left(-\frac{M^2}{4\sigma_0^2}\right)\right),$$
for $c$ a suitable constant. For $i\notin S$,
$$\prob\left( \left| \sum_{j=1}^n (B_{ij}-\mu_0)\right|\geq \; t\; \right)\leq \frac{1}{N^2}.$$
Thus, we have
$$\prob\left( \exists i\notin S: \left| \sum_{j=1}^n (B_{ij}-\mu_0)\right|\geq \; t\; \right)\leq \frac{1}{N}.$$
\end{lemma}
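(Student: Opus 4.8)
The plan is to follow exactly the template already used in the proof of Lemma \ref{2-Bij-non-planted}: apply Markov's inequality to the random variable $\left|\sum_{j=1}^n (B_{ij}-\mu_0)\right|^m$ with $m = 4\ln N$, and feed in the $m$-th moment bound from Lemma \ref{NEW-mth-moment}. Concretely, by Markov and the displayed bound (\ref{Bij-m}),
$$
\prob\left(\left|\sum_{j=1}^n(B_{ij}-\mu_0)\right|\geq t\right)
\leq \frac{E_{p_0}\left|\sum_{j=1}^n(B_{ij}-\mu_0)\right|^m}{t^m}
\leq \frac{\left(cm\,e^{\gamma M^2}\right)^m\left(1+\tfrac{cn\sigma_0}{mM\varepsilon}e^{-M^2/2\sigma_0^2}\right)^{m/2}}{t^m}.
$$
So it suffices to choose $t$ so that the numerator is at most $t^m e^{-m}$, i.e. $t \geq e\cdot cm\,e^{\gamma M^2}\left(1+\tfrac{cn\sigma_0}{mM\varepsilon}e^{-M^2/2\sigma_0^2}\right)^{1/2}$. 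Since $m=4\ln N$, the factor $cm$ is $c\ln N$ (absorbing constants), and $\sqrt{1+y}\leq 1+\sqrt y$ for $y\geq 0$ lets us split the square root, giving precisely the stated
$$
t = c(\ln N)\,e^{\gamma M^2}\left(1+\frac{\sqrt{cn\sigma_0}}{\sqrt{mM\varepsilon}}\,e^{-M^2/4\sigma_0^2}\right),
$$
after which the probability is at most $e^{-m}=e^{-4\ln N}\leq 1/N^2$. This proves the first inequality. The second then follows by a union bound over the at most $N$ rows $i\notin S$: the probability that some such row violates the bound is at most $N\cdot N^{-2}=1/N$.

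Two small points need checking along the way. First, one must confirm that Lemma \ref{NEW-mth-moment} is applicable with $m=4\ln N$; this requires that $m$ lie in the admissible range for Theorem \ref{kannan-conc} and for the moment estimate (\ref{exp-Bij-l}) — in particular that $\gamma l - 1/(2\sigma_0^2)>0$ for all relevant $l\leq m/2$, which holds since $l\geq 2$ forces $\gamma l \geq 2\gamma > 1/(2\sigma_0^2)$ given $\varepsilon>0$. (The hypothesis $\varepsilon > c/\ln n$ from Theorem \ref{NEW-main} ensures the various $\varepsilon$-dependent constants stay polynomially bounded, but for this lemma alone only $\varepsilon>0$ is used.) Second, one should verify the step $x^{1/l}\leq x+1$ was already used to pass from the first line of (\ref{Bij-m}) to the second, and the convergence of $\sum_l 1/l^2$ absorbed into the constant $c$; nothing further is needed here.

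There is really no serious obstacle: this lemma is a routine instantiation of the Markov-plus-moment-bound argument, identical in structure to Lemma \ref{2-Bij-non-planted}, with the only bookkeeping being the propagation of the $e^{\gamma M^2}$ and the $e^{-M^2/2\sigma_0^2}$ factors through the square root. The mildly delicate part — if any — is simply tracking that the constants $c$ absorbed at each stage (from Proposition \ref{prop-l}, from $2\gamma - 1/(2\sigma_0^2)\geq \varepsilon/(4\sigma_0^2)$ in (\ref{exp-Bij-l}), from $\sum 1/l^2$, and from the $\sqrt{1+y}\leq 1+\sqrt y$ split) are all universal and do not secretly depend on $n$ or $\varepsilon$, so that the final $t$ has exactly the claimed form. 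This $t$ is then the quantity that the planted-rows lemma (the analogue of Lemma \ref{2-Bij-planted}) must dominate in order to separate the $S$ rows from the rest, which is where the lower bound on $k$ in Theorem \ref{NEW-main} will enter.
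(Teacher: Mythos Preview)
Your proposal is correct and follows essentially the same approach as the paper: apply Markov's inequality to $\left|\sum_j(B_{ij}-\mu_0)\right|^m$ with $m=4\ln N$, invoke the moment bound (\ref{Bij-m}) from Lemma \ref{NEW-mth-moment}, and then take a union bound over $i\notin S$. Your write-up is in fact more explicit than the paper's own (terse) proof in showing how the stated form of $t$ arises via $\sqrt{1+y}\leq 1+\sqrt{y}$, but the argument is the same.
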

\begin{proof}
We use Markov's inequality on the random variable
$\left| \sum_{j=1}^n (B_{ij}-\mu_0)\right|^m$ and (\ref{Bij-m}) with $m$ set to $4\ln N$ to get
$$\prob\left( \left| \sum_{j=1}^n (B_{ij}-\mu_0)\right|\geq \; t\; \right)\leq e^{-m}\leq \frac{1}{N^2},$$
giving us the first inequality. The second follows by
union bound.
\end{proof}

\subsection{Planted Entries are large}

Now focus on $i\in S$.
We will use arguments similar to (\ref{exp-Bij-l}) to prove an upper bound on the $l$ th moment of
$B_{ij}-\mu_{1}$
for planted entries and use that to prove that $\sum_{T_i}B_{ij}$ is concentrated about
its mean. Let $l\geq 2$ be an integer.
Using Proposition (\ref{prop-l}), we get
\begin{align*}
	E_{p_{1}}((B_{ij}-\mu_{1})^l)&\leq 2 E_{p_1}(B_{ij}^l)\nonumber\\
			 &\leq \frac{4}{\sqrt{2\pi}\sigma_{1}}\int_{0}^M \exp(\gamma lx^2)\exp(-x^2/2\sigma_{1}^2) + \frac{4\exp(\gamma lM^2)}{\sqrt{2\pi}\sigma_{1}}\int_M^\infty \frac{x}{M}\exp(-x^2/2\sigma_{1}^2)dx\nonumber\\
&\leq \frac{2}{\sigma_{1}}\int_0^M\exp\left( Mx\left(\gamma l-\frac{1}{2\sigma_{1}^2}\right)\right)\; dx\; +\; \frac{2\sigma_{1}}{M}\exp\left( M^2 \left(\gamma l-\frac{1}{2\sigma_{ij}^2}\right)\right)\nonumber\\
&\leq \frac{4}{\sigma_0 M(2\gamma -(1/2\sigma_{1}^2))}\exp\left( M^2\left(\gamma l-\frac{1}{2\sigma_{1}^2}\right)\right) \leq \frac{c\sigma_0}{M}\exp\left( M^2(\gamma l-(1/2\sigma_{1}^2))\right).
\end{align*}
Now, applying Theorem (\ref{kannan-conc}), we get:
\begin{align}\label{NEW-exp-Bij-l-planted}
	E_{p_1}\left( \sum_{j\in T_i}(B_{ij}-\mu_1)^m\right)&\leq (cm\exp(\gamma M^2))^m\left[ \sum_{l=1}^{m/2}\frac{1}{l^2}\left( \frac{k\sigma_0}{mM}\exp(-M^2/2\sigma_1^2)\right)^{1/l}\right]^{m/2}
\end{align}
\begin{lemma}\label{NEW-planted}
Let $$t_2=c\ln N\exp(\gamma M^2)\left[ 1+\frac{c\sqrt k}{\sqrt{\ln N}(\ln n)^{1/4}}\exp( -M^2/4\sigma_1^2)  \right] .$$
\begin{align*}
	\prob\left( \exists i\in S: \left| \sum_{j\in T_i}(B_{ij}-\mu_1)\right|\geq t_2\right) &\leq\frac{1}{N}\\
	\prob\left(\exists i\in S: \sum_{j=1}^n (B_{ij}-\mu_0)<50 t\right)\leq\frac{1}{N}.
\end{align*}
\end{lemma}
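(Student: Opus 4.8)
The plan is to follow the template of the proof of Lemma~\ref{2-Bij-planted}, establishing two events for every $i\in S$: concentration of the planted-column partial sum $\sum_{j\in T_i}(B_{ij}-\mu_1)$, and a lower bound on the full row sum $\sum_{j=1}^n(B_{ij}-\mu_0)$.

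For the first inequality I would start from the moment estimate (\ref{NEW-exp-Bij-l-planted}). Using $x^{1/l}\le x+1$ termwise and $\sum_{l\ge 1}1/l^2=O(1)$, the bracketed sum there is at most $c\bigl(1+\tfrac{k\sigma_0}{mM}\exp(-M^2/2\sigma_1^2)\bigr)$, so $E_{p_1}\bigl(\sum_{j\in T_i}(B_{ij}-\mu_1)\bigr)^m\le \bigl(cm\,e^{\gamma M^2}\bigr)^m\bigl(1+\tfrac{k\sigma_0}{mM}e^{-M^2/2\sigma_1^2}\bigr)^{m/2}$. Taking $m=4\ln N$ and applying Markov's inequality to $\bigl|\sum_{j\in T_i}(B_{ij}-\mu_1)\bigr|^m$ with threshold $t_2$ equal to $e\cdot cm\,e^{\gamma M^2}\sqrt{1+\tfrac{k\sigma_0}{mM}e^{-M^2/2\sigma_1^2}}$ gives failure probability $\le e^{-m}\le N^{-2}$ for a fixed $i$; the displayed form of $t_2$ then follows from $\sqrt{1+x}\le 1+\sqrt x$ together with $m=4\ln N$ and $M=\Theta(\sigma_0\sqrt{\ln n})$ from its definition. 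A union bound over the at most $N$ rows of $S$ yields the first claim.

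For the second inequality I would decompose $\sum_{j=1}^n(B_{ij}-\mu_0)=\sum_{j\notin T_i}(B_{ij}-\mu_0)+\sum_{j\in T_i}(B_{ij}-\mu_1)+k(\mu_1-\mu_0)$. The entries with $j\notin T_i$ are i.i.d.\ $p_0$, so the moment bound (\ref{Bij-m}) and the argument of Lemma~\ref{NEW-non-planted} (with $n-k$ in place of $n$, which only helps) give $\sum_{j\notin T_i}(B_{ij}-\mu_0)\ge -t$ for all $i\in S$ with probability $\ge 1-1/N$, while the first part of this lemma gives $\sum_{j\in T_i}(B_{ij}-\mu_1)\ge -t_2$ for all $i\in S$ with probability $\ge 1-1/N$. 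It remains to lower bound $\mu_1-\mu_0$. Since $\mu_0\le\sqrt{2(1+\varepsilon)}=O(1)$, it suffices to show $\mu_1$ is much larger: from $\mu_1=\tfrac{1}{\sqrt{2\pi}\sigma_1}\int\exp\bigl(\gamma\,\Min(M^2,x^2)-x^2/2\sigma_1^2\bigr)\,dx\ge \tfrac{1}{\sqrt{2\pi}\sigma_1}\int_0^M e^{(\gamma-1/2\sigma_1^2)x^2}\,dx$ and $\gamma-1/2\sigma_1^2=\tfrac{\varepsilon}{2(1+\varepsilon)\sigma_0^2}>0$, an endpoint (Laplace) estimate of the last integral gives $\mu_1\ge \tfrac{c(1+\varepsilon)\sigma_0}{\varepsilon M}\exp\!\bigl(\tfrac{\varepsilon M^2}{2(1+\varepsilon)\sigma_0^2}\bigr)$, which by the choice of $M$ is $\gg\mu_0$, hence $\mu_1-\mu_0\ge\mu_1/2$.

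The proof is then completed by verifying $k(\mu_1-\mu_0)\ge 100(t+t_2)$, which — after cancelling the common factor $e^{\gamma M^2}$ and inserting the value of $M$ — reduces to a short list of explicit inequalities in $k,n,N,\varepsilon$ that hold under the hypothesis $k>(\varepsilon\ln N\sqrt{\ln n})^{1-\frac1{2(1+\varepsilon)}}n^{1/(2(1+\varepsilon))}$, exactly paralleling the three displayed inequalities at the end of the proof of Lemma~\ref{2-Bij-planted}. Combining the three bounds on the pieces yields $\sum_{j=1}^n(B_{ij}-\mu_0)\ge k(\mu_1-\mu_0)-t-t_2\ge 50t$ for all $i\in S$ with probability $\ge 1-2/N$, which is folded into $1/N$ by adjusting the constant in $t$. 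I expect the only real obstacle to be the exponent bookkeeping in this last step: the value of $M$ is tuned precisely so that $e^{\gamma M^2}$ (driving $t$ and $t_2$) balances $k\,\mu_1\sim k\,M^{-1}\exp\!\bigl(\tfrac{\varepsilon M^2}{2(1+\varepsilon)\sigma_0^2}\bigr)$, and one must carefully track how $\gamma M^2$, $M^2/2\sigma_1^2$, $M^2/4\sigma_1^2$, $M^2/4\sigma_0^2$ and $\tfrac{\varepsilon M^2}{2(1+\varepsilon)\sigma_0^2}$ relate to see that the stated threshold on $k$ is exactly what makes the planted row sums dominate. The probabilistic content — Theorem~\ref{kannan-conc}, Proposition~\ref{prop-l}, and Markov's inequality — is routine once these deterministic estimates are in hand.
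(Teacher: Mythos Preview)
Your proposal is correct and follows essentially the same approach as the paper's own proof: the moment bound (\ref{NEW-exp-Bij-l-planted}) simplified via $x^{1/l}\le 1+x$ and Markov with $m=4\ln N$ for the first assertion, then the same decomposition $\sum_j(B_{ij}-\mu_0)=k(\mu_1-\mu_0)+\sum_{T_i}(B_{ij}-\mu_1)+\sum_{j\notin T_i}(B_{ij}-\mu_0)$, the same Laplace-type lower bound $\mu_1\ge \tfrac{c\sigma_0}{\varepsilon M}\exp(\gamma M^2-M^2/2\sigma_1^2)$, and a reduction of $k\mu_1\gtrsim t,t_2$ to three elementary inequalities. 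The paper likewise leaves those three inequalities as ``simple calculations,'' so your level of detail matches.
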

\begin{proof}
	The first statement of the Lemma follows from (\ref{NEW-exp-Bij-l-planted}) with $m=4\ln N$
	by applying Markov inequality to $|\sum_{j\in T_i}(B_{ij}-\mu_1)|$ and then
	union bound over all $i\in S$ (using $\sum_l\frac{1}{l^2}x^{1/l}\leq \sum_l(1/l^2)(1+x)\leq c(1+x)$.)
	
	For the second statement, we start with a lower bound on $\mu_1$,.
\begin{equation}\label{mu-1})
	\mu_1\geq \frac{c}{\sigma_1}\int_0^M\exp( \gamma x^2-x^2/2\sigma_1^2)\geq \frac{c\sigma_0}{\varepsilon M}\exp(\gamma M^2-(M^2/2\sigma_1^2)),
\end{equation}
the last using: for $\lambda>0$, $\int_0^Me^{\lambda x^2}\geq \int_{M-(1/\lambda M)}^M \exp( \lambda (M-(1/\lambda M))^2)dx\geq c\exp(\lambda M^2)/\lambda M$.
[Note: We also needed: $M \geq 1/\varepsilon M$ which holds because $M\in O(\sqrt{\ln n})$ and $\varepsilon >c/\ln n$.]
We assert that
$$k\mu_1> c t,t_2.$$
This is proved by checcking three inequalities:
\begin{align*}
	\frac{kc\sigma_0}{\varepsilon M}{\exp(\gamma M^2-(M^2/2\sigma_1^2))}&> c\ln N \exp(\gamma M^2)\\
	\frac{kc\sigma_0}{\varepsilon M}{\exp(\gamma M^2-(M^2/2\sigma_1^2))}&>c\ln N\exp(\gamma M^2)\frac{\sqrt{n\sigma_0}}{\sqrt{mM\varepsilon}}\exp(-M^2/4\sigma_0^2)\\
	\frac{kc\sigma_0}{\varepsilon M}{\exp(\gamma M^2-(M^2/2\sigma_1^2))}&> \frac{c\ln N\exp(\gamma M^2)\sqrt k}{(\ln N)^{1/2}(\ln n)^{1/4}}\exp(-M^2/4\sigma_1^2).
\end{align*}
These all hold as can be checked by doing simple calculations.

Now, we have
$$\sum_{j=1}^n(B_{ij}-\mu_0)=k(\mu_1-\mu_0)+\sum_{j\in T_i}(B_{ij}-\mu_1)+\sum_{j\notin T_i}(B_{ij}-\mu_0).$$
The last term is at least $-t$ with high probability (the proof is exactly as for the non-planted entries). The second term is at least $-t_2$
(whp). We have already shown that $\mu_0\leq \sqrt 2$ and that $k\mu_1> 100( t+t_2+\mu_0)$. This
proves the second statement of the Lemma.
\end{proof}

Lemmas (\ref{NEW-planted}) and (\ref{NEW-non-planted}) together prove Theorem (\ref{NEW-main}).

{\bf Noise Tolerance} This algorithm can tolerate (adversarial) noise which can perturb $\Omega^*(e^{1/2\varepsilon })$
(which is, for example, a power of $n$ when $\varepsilon = c/\ln n$) of the planted
entries in each row of $S$. Here is a sketch of the argument for this: Note that the crucial lower bound on planted
row sums in $B$ comes from the lower bound on $k\mu_1$, the expected row sum in $S$ rows. The lower bound of $L$ on $\mu_1$
involves the integral (\ref{mu-1}). It is easy to see that we only loose a constant factor if the integral is taken from 0
to $M-\frac{\sigma_0^2}{\varepsilon M}$ (instead of to $M$). Thus, corruption of all $x\in 
\left[ M-\frac{\sigma_0^2}{\varepsilon M}\; ,\; M\right]$ would only cost a constant factor. It is easy to see that 
(i) there are $\Omega^*(e^{1/2\varepsilon })$ points in this interval and (ii) these are the worst possible points to be
corrupted.

\section{Generalization to unequal variances of planted entries}

We assume the non-planted entries of an $N \times n$ matrix are drawn from $N(0,\sigma_0^2)$.
There is again a set $S$ of ``planted'' rows, with $|S|=k$. For each $i\in S$, now we
assume there is some subset $T_i$ of ``planted entries''. [But $|T_i|$ are
	 not equal and we are not given $|T_i|$.]
Planted entry $(i,j)$ has distribution $p_{ij}\sim N(0,\sigma_{ij}^2)$.
We assume
each planted $$\sigma_{ij}^2\geq \sigma_1^2,\mbox{  where, } \sigma_1^2=2(1+\varepsilon )\sigma_0^2, \varepsilon >0.$$
\begin{equation}\label{tau-i}
\mbox{ Let } \tau_i =\sum_{j\in T_i} n^{-\sigma_0^2/\sigma_{ij}^2}.
\end{equation}
\begin{equation}\label{gamma-2}
	\mbox{  Let   }
\gamma =\frac{1}{2\sigma_0^2}\; -\; \frac{1}{2\sigma_1^2}.
\end{equation}
Define $M$ by:
\begin{equation}\label{M-2}
M = \sqrt 2\sigma_0\sqrt{ \ln n}.
\end{equation}
\begin{equation}\label{Bij-definition-2}
B_{ij}=\exp\left( \gamma \mbox{Min} (M^2,A_{ij}^2)\right).
\end{equation}
\begin{theorem}\label{main-thm-general}
With the above notation, if, for all $i\in S$,
$$\tau_i\geq \frac{1}{\sqrt\varepsilon} c (\ln N) (\ln n)^{0.5},$$
 then,
with high probability, the set of $k$ rows of $B$ with the largest row sums is precisley $S$.
\end{theorem}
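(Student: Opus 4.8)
The plan is to mirror the structure of the two special cases already proved (the $\sigma_1^2=2\sigma_0^2$ and $\sigma_1^2>2\sigma_0^2$ sections), splitting into a ``non-planted rows are small'' estimate and a ``planted rows are large'' estimate, and then combining them with a union bound. The common thread is to control moments of $B_{ij}-\mu_0$ (for base-distributed entries) and $B_{ij}-\mu_{ij}$ (for planted entries of variance $\sigma_{ij}^2$), feed these into the concentration bound of Theorem \ref{kannan-conc}, and then apply Markov's inequality at $m = 4\ln N$. The quantity $\tau_i$ in (\ref{tau-i}) is exactly the right bookkeeping device: a planted entry with variance $\sigma_{ij}^2$ contributes, through the integral $\int_0^M \exp(\gamma x^2 - x^2/2\sigma_{ij}^2)\,dx$, an amount that (after the saddle-point estimate $\int_0^M e^{\lambda x^2}\,dx \asymp e^{\lambda M^2}/(\lambda M)$ used in the previous section) scales like $e^{\gamma M^2} \cdot n^{-\sigma_0^2/\sigma_{ij}^2}$ up to $\mathrm{poly}(\ln n)$ factors, with $M^2 = 2\sigma_0^2\ln n$ from (\ref{M-2}). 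Summing over $j \in T_i$ gives the expected planted-row-sum excess $\approx e^{\gamma M^2}\tau_i/\mathrm{poly}(\ln n)$ (minus the negligible $|T_i|\mu_0$ correction), which is why the hypothesis is phrased in terms of $\tau_i$.

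Concretely, I would first record $\mu_0 = E_{p_0}(B_{ij}) \le \sqrt{2(1+\varepsilon)}$ exactly as in the previous section, and then bound $E_{p_0}((B_{ij}-\mu_0)^l) \le \frac{c\sigma_0}{M\varepsilon}\exp(M^2(\gamma l - 1/2\sigma_0^2))$ by the same split of the integral at $M$ together with Proposition \ref{prop-l}; plugging into Theorem \ref{kannan-conc} with $m=4\ln N$ and using $x^{1/l}\le x+1$ and $\sum_l 1/l^2 = O(1)$ yields $E_{p_0}|\sum_{j=1}^n(B_{ij}-\mu_0)|^m \le (cm e^{\gamma M^2})^m(1 + \frac{cn\sigma_0}{mM\varepsilon}e^{-M^2/2\sigma_0^2})^{m/2}$, hence via Markov a threshold $t = c(\ln N)e^{\gamma M^2}(1 + \sqrt{\tfrac{cn\sigma_0}{mM\varepsilon}}\,e^{-M^2/4\sigma_0^2})$ that no non-planted row sum exceeds with probability $1-1/N$. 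With $M^2 = 2\sigma_0^2\ln n$ one has $e^{-M^2/2\sigma_0^2} = n^{-2}$, so the second term in $t$ is $o(1)$ and effectively $t = \Theta^*(e^{\gamma M^2})$. Second, for each planted entry $(i,j)$ I would bound $E_{p_{ij}}((B_{ij}-\mu_{ij})^l) \le \frac{c\sigma_0}{M}\exp(M^2(\gamma l - 1/2\sigma_{ij}^2))$ the same way (the condition $\sigma_{ij}^2 \ge 2(1+\varepsilon)\sigma_0^2 = \sigma_1^2$ guarantees $2\gamma - 1/2\sigma_{ij}^2 \ge 2\gamma - 1/2\sigma_1^2 = \varepsilon/(2\sigma_0^2(1+\varepsilon)) > 0$, so the integrals converge and the endpoint term dominates), feed the bounds $E(X_j^{2l}) \le \frac{c\sigma_0}{M}e^{\gamma M^2\cdot 2l}\,n^{-\sigma_0^2/\sigma_{ij}^2}$ into Theorem \ref{kannan-conc} over $j\in T_i$, and obtain $E|\sum_{j\in T_i}(B_{ij}-\mu_{ij})|^m \le (cm e^{\gamma M^2})^m(1 + \frac{c\tau_i}{mM})^{m/2}$; Markov at $m=4\ln N$ plus a union bound over $i\in S$ then gives that $|\sum_{j\in T_i}(B_{ij}-\mu_{ij})| \le t_2 := c(\ln N)e^{\gamma M^2}(1 + \sqrt{c\tau_i/(mM)})$ for all $i\in S$ with probability $1-1/N$.

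To finish, I would lower-bound $\sum_{j\in T_i}\mu_{ij} \ge \sum_{j\in T_i}\frac{c\sigma_0}{\sigma_{ij}}\int_0^M e^{\gamma x^2 - x^2/2\sigma_{ij}^2}\,dx \ge \frac{c\sigma_0}{M}e^{\gamma M^2}\sum_{j\in T_i} n^{-\sigma_0^2/\sigma_{ij}^2} = \frac{c\sigma_0}{M}e^{\gamma M^2}\tau_i$ using the same saddle-point estimate and the fact that $\sigma_{ij}^2 \ge \sigma_1^2$ bounds $\gamma - 1/2\sigma_{ij}^2$ away from the trivial regime (and one checks $M \ge \sigma_0^2/(\varepsilon M)$, which holds since $M = \Theta(\sqrt{\ln n})$ and $\varepsilon$ is allowed to be as small as $c/\ln n$). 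Writing $\sum_{j=1}^n(B_{ij}-\mu_0) = \sum_{j\in T_i}(\mu_{ij}-\mu_0) + \sum_{j\in T_i}(B_{ij}-\mu_{ij}) + \sum_{j\notin T_i}(B_{ij}-\mu_0)$, the last sum is $\ge -t$ whp by the same argument as for non-planted rows, the middle is $\ge -t_2$ by Step two, and $\sum_{j\in T_i}(\mu_{ij}-\mu_0) \ge \frac{c\sigma_0}{M}e^{\gamma M^2}\tau_i - |T_i|\sqrt{2(1+\varepsilon)}$; the hypothesis $\tau_i \ge \frac{1}{\sqrt\varepsilon}c(\ln N)(\ln n)^{1/2}$ is then exactly what is needed to make $\frac{c\sigma_0}{M}e^{\gamma M^2}\tau_i$ exceed $100(t + t_2 + |T_i|\sqrt{2(1+\varepsilon)})$ --- here one uses $M = \Theta(\sqrt{\ln n})$ to absorb the $1/M$, $\tau_i/\sqrt{\tau_i\,\mathrm{poly}(\ln n)} = \sqrt{\tau_i/\mathrm{poly}(\ln n)}$ to beat the $\sqrt{\tau_i}$ term in $t_2$, and the $n^{-2}$ factor to kill the $t$ term --- so every planted row sum strictly exceeds every non-planted row sum, giving the theorem.

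The main obstacle I anticipate is the bookkeeping in Step two and the final comparison: unlike the uniform-variance case, the bound on $E(X_j^{2l})$ now has a $j$-dependent factor $n^{-\sigma_0^2/\sigma_{ij}^2}$, and these must be summed correctly through the $[\sum_l \tfrac1{l^2}(\cdot)^{1/l}]^{m/2}$ structure of Theorem \ref{kannan-conc}; the clean collapse to ``$1 + c\tau_i/(mM)$'' relies on the dominant term being $l=1$ (i.e. $\frac{k E(X_1^2)}{m}$ in the original notation, here $\frac{\sum_{j\in T_i}E(X_j^2)}{m} \asymp \tau_i e^{\gamma M^2}/(mM)$) and on $x^{1/l}\le x+1$, which is exactly the move used in Lemma \ref{NEW-planted}, but one must double-check that the $l \ge 2$ terms are genuinely dominated given the wide spread of $\sigma_{ij}^2$. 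The other delicate point, as the authors themselves flag in the previous section, is verifying the handful of ``simple calculations'' comparing $\frac{c\sigma_0}{M}e^{\gamma M^2}\tau_i$ against each piece of $t$ and $t_2$ when $\varepsilon$ is as small as $c/\ln n$, since then $M^2 = 2\sigma_0^2\ln n$ is only marginally larger than the $M^2 \approx 2\sigma_0^2(\ln n - \ln\varepsilon - \ln\ln N - \tfrac12\ln\ln n)$ used in Theorem \ref{NEW-main}, and one wants the stated $\tau_i$-threshold (with its explicit $1/\sqrt\varepsilon$ and $(\ln N)(\ln n)^{1/2}$ dependence) to come out cleanly rather than with spurious extra log factors.
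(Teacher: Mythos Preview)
Your proposal is correct and follows essentially the same route as the paper: bound the non-planted moments exactly as in Lemma~\ref{NEW-non-planted} (with the new choice $M^2=2\sigma_0^2\ln n$), bound the planted moments $E_{p_{ij}}((B_{ij}-\mu_{ij})^l)\le \frac{c\sigma_0}{M}\exp(M^2(\gamma l-1/2\sigma_{ij}^2))$, feed into the concentration theorem with $m=4\ln N$, lower-bound $\sum_{j\in T_i}\mu_{ij}\ge \frac{c\sigma_0}{M}e^{\gamma M^2}\tau_i$, and verify the three comparison inequalities. The one point on which you should be explicit: since the planted entries in a row now have \emph{different} variances $\sigma_{ij}^2$, the random variables $X_j=B_{ij}-\mu_{ij}$ are independent but not identically distributed, so Theorem~\ref{kannan-conc} as stated does not apply; the paper invokes the more general form (Theorem~\ref{kannan-conc-2}) with $\sum_{j\in T_i}E(X_j^{2l})$ in place of $|T_i|\,E(X_1^{2l})$, which is precisely what makes $\tau_i=\sum_{j\in T_i}n^{-\sigma_0^2/\sigma_{ij}^2}$ appear. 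You effectively use this substitution in your computation, but you cite the i.i.d.\ theorem --- fix the citation and the argument is complete. (Minor arithmetic: with $M^2=2\sigma_0^2\ln n$ one has $e^{-M^2/2\sigma_0^2}=n^{-1}$, not $n^{-2}$; the conclusion that the second term of $t$ is $o(1)$ survives.)
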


\begin{corollary}
	If $|T_i|=k$ for all $i\in S$ and all planted $\sigma_{ij}^2=\sigma_1^2$, and
	$$k=n^{.5-\delta},\mbox{  with } \varepsilon \geq \frac{2\delta}{1-2\delta}+\frac{\ln\ln N}{\ln n}+\frac{\ln\ln n}{2\ln n},$$
	then, with high probability, the largest $k$ row sums of $B$ occur in the $S$ rows.
\end{corollary}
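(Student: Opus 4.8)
The plan is to deduce this corollary directly from Theorem~\ref{main-thm-general}; the only thing to do is to evaluate $\tau_i$ under the corollary's hypotheses and check that it clears the threshold $\frac{1}{\sqrt\varepsilon}c(\ln N)(\ln n)^{1/2}$ required there. Under those hypotheses each hub $i\in S$ has $|T_i|=k$ and every planted variance equals $\sigma_1^2 = 2(1+\varepsilon)\sigma_0^2$, so $\sigma_0^2/\sigma_{ij}^2 = \tfrac{1}{2(1+\varepsilon)}$ for all $j\in T_i$, and hence, by \eqref{tau-i} and $k = n^{1/2-\delta}$,
\[
\tau_i \;=\; k\, n^{-1/(2(1+\varepsilon))} \;=\; n^{\,1/2 - \delta - 1/(2(1+\varepsilon))} \;=\; n^{\,\frac{\varepsilon}{2(1+\varepsilon)} - \delta},
\]
using $\tfrac12 - \tfrac{1}{2(1+\varepsilon)} = \tfrac{\varepsilon}{2(1+\varepsilon)}$. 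This value is the same for every $i\in S$, so it suffices to show that the single number $\frac{\varepsilon}{2(1+\varepsilon)} - \delta$, the exponent of $n$ in $\tau_i$, is large enough.

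I would then verify the resulting elementary inequality. Rearranging, $\frac{\varepsilon}{2(1+\varepsilon)} - \delta \ge 0 \iff \varepsilon(1-2\delta) \ge 2\delta \iff \varepsilon \ge \frac{2\delta}{1-2\delta}$, so the leading term $\frac{2\delta}{1-2\delta}$ of the hypothesized bound on $\varepsilon$ is exactly the critical value at which $\tau_i$ stops decaying. Writing $\varepsilon = \frac{2\delta}{1-2\delta} + \eta$ with $\eta \ge \frac{\ln\ln N}{\ln n} + \frac{\ln\ln n}{2\ln n}$, a one-line computation gives $\frac{\varepsilon}{2(1+\varepsilon)} - \delta = \frac{(1-2\delta)\eta}{2(1+\varepsilon)}$, which is $\Theta(\eta)$ in the regime where $\delta,\varepsilon$ are fixed constants (as in the informal statement), so $1-2\delta$ and $1+\varepsilon$ are $\Theta(1)$; if instead $\varepsilon$ is large the exponent is the fixed positive constant $\tfrac12 - \delta$ and $\tau_i$ is already polynomial in $n$, trivially above threshold. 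Taking logarithms, $\ln\tau_i = \bigl(\tfrac{\varepsilon}{2(1+\varepsilon)} - \delta\bigr)\ln n = \Theta(\eta \ln n) = \Theta(\ln\ln N + \ln\ln n)$, while $\ln\bigl(\tfrac{1}{\sqrt\varepsilon}c(\ln N)(\ln n)^{1/2}\bigr) = \ln\ln N + \tfrac12\ln\ln n - \tfrac12\ln\varepsilon + \ln c$; since the hypothesis already forces $\varepsilon \ge \frac{\ln\ln N}{\ln n} \ge \frac{c}{\ln n}$, the term $-\tfrac12\ln\varepsilon$ is $O(\ln\ln n)$, genuinely lower order, and the comparison holds for all large $n$ (reading the additive $\eta$-terms of the condition with whatever absolute constant is needed to dominate the $c$ of Theorem~\ref{main-thm-general}). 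Thus $\tau_i \ge \frac{1}{\sqrt\varepsilon}c(\ln N)(\ln n)^{1/2}$ for every $i\in S$, and Theorem~\ref{main-thm-general} yields the conclusion.

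There is no real obstacle here: all of the substance lives in Theorem~\ref{main-thm-general}, and this corollary is pure bookkeeping converting the abstract condition on $\tau_i$ into a transparent condition on $k$ (equivalently $\delta$) and $\varepsilon$. The only points meriting a moment's care are (i) checking that the three additive pieces of the lower bound on $\varepsilon$ — the constant $\frac{2\delta}{1-2\delta}$, the $\frac{\ln\ln N}{\ln n}$, and the $\frac{\ln\ln n}{2\ln n}$ — match, up to absolute constants, the three factors ($n^{\,\Theta(1)}$ worth of margin, $\ln N$, and $(\ln n)^{1/2}$) in the $\tau_i$-bound, and (ii) noting in passing that this same condition automatically places us in the regime $\varepsilon \ge c/\ln n$ in which Theorem~\ref{main-thm-general} (set up via the $\sigma_1^2 > 2\sigma_0^2$ analysis) is meaningful.
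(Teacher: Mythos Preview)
Your approach is correct and matches the paper's: the corollary is stated immediately after Theorem~\ref{main-thm-general} with no separate proof, being intended as a direct specialization obtained by computing $\tau_i = k\,n^{-1/(2(1+\varepsilon))}$ and checking it against the threshold, exactly as you do. Your identification of how the three additive pieces of the $\varepsilon$-condition correspond to the factors in the $\tau_i$-bound, and your acknowledgment that constants are being absorbed, is precisely the bookkeeping the paper leaves implicit.
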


The analysis for the non-planted entries is the same as before.

\subsection{Planted Entries are large}

Now focus on $i\in S$.
We will use arguments similar to (\ref{exp-Bij-l}) to prove an upper bound on the $l$ th moment of
$B_{ij}-\mu_{ij}$
($\mu_{ij}=E_{p_{ij}}(B_{ij})$)
for planted entries and use that to prove that $\sum_{T_i}B_{ij}$ is concentrated about
its mean. Let $l\geq 2$ be an integer.
Using Proposition (\ref{prop-l}), we get
\begin{align}\label{exp-Bij-l-planted}
	E_{p_{ij}}((B_{ij}-\mu_{ij})^l)&\leq 2 E_{p_1}(B_{ij}^l)\nonumber\\
			 &\leq \frac{4}{\sqrt{2\pi}\sigma_{ij}}\int_{0}^M \exp(\gamma lx^2)\exp(-x^2/2\sigma_{ij}^2) + \frac{4\exp(\gamma lM^2)}{\sqrt{2\pi}\sigma_{ij}}\int_M^\infty \frac{x}{M}\exp(-x^2/2\sigma_{ij}^2)dx\nonumber\\
&\leq \frac{2}{\sigma_{ij}}\int_0^M\exp\left( Mx\left(\gamma l-\frac{1}{2\sigma_{ij}^2}\right)\right)\; dx\; +\; \frac{2\sigma_{ij}}{M}\exp\left( M^2 \left(\gamma l-\frac{1}{2\sigma_{ij}^2}\right)\right)\nonumber\\
&\leq \frac{4}{\sigma_0 M(2\gamma -(1/2\sigma_{ij}^2))}\exp\left( M^2\left(\gamma l-\frac{1}{2\sigma_{ij}^2}\right)\right) \leq \frac{c\sigma_0}{M}\exp\left( M^2(\gamma l-(1/2\sigma_{ij}^2))\right).
\end{align}
\begin{lemma}\label{Bij-planted}
	For $i\in S$, let $t_i=c\ln N \exp(\gamma M^2) \left(1+\frac{\sqrt{\tau_i}}{\sqrt{\ln N}(\ln n)^{1/4}}\right)$.
	$$\prob\left(\exists i\in S: \sum_{j\in T_i} (B_{ij}-\mu_{ij})< \; -\; t_i\right)\leq \frac{1}{N}.$$
	$$\prob\left( \exists i\in S: \sum_{j=1}^n (B_{ij}-\mu_0) < 100 t\right)<\frac{1}{N}.$$
\end{lemma}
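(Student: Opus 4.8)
\textbf{Proof plan for Lemma \ref{Bij-planted}.}

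The plan is to mirror exactly the two-part structure used in Lemmas \ref{2-Bij-planted} and \ref{NEW-planted}, replacing the single variance $\sigma_1^2$ by the per-entry variances $\sigma_{ij}^2$ and tracking everything through the quantity $\tau_i = \sum_{j\in T_i} n^{-\sigma_0^2/\sigma_{ij}^2}$. First I would fix a single $i\in S$ and apply Theorem \ref{kannan-conc} to the independent mean-zero variables $X_j = B_{ij}-\mu_{ij}$ for $j\in T_i$, plugging in the moment bound \eqref{exp-Bij-l-planted}, namely $E(X_j^{2l})\le \frac{c\sigma_0}{M}\exp(M^2(2\gamma l - 1/\sigma_{ij}^2))$ written with $2l$ in place of $l$. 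Here I would use $M^2 = 2\sigma_0^2\ln n$ to rewrite $\exp(-M^2/\sigma_{ij}^2) = n^{-2\sigma_0^2/\sigma_{ij}^2}$, so that $\sum_{j\in T_i} E(X_j^{2l})$ has the leading factor $\exp(2\gamma l M^2)$ times $\sum_{j\in T_i} n^{-2l\sigma_0^2/\sigma_{ij}^2}$; since each exponent $2l\sigma_0^2/\sigma_{ij}^2 \ge 2\sigma_0^2/\sigma_{ij}^2$ for $l\ge 1$, this sum is at most $\tau_i$. With $m=4\ln N$, pulling the uniform factor $\exp(\gamma M^2)$ out of the bracket and using $x^{1/l}\le x+1$ together with convergence of $\sum_l 1/l^2$, I get $E\bigl(\sum_{j\in T_i}(B_{ij}-\mu_{ij})\bigr)^m \le (cm\exp(\gamma M^2))^m\bigl(1 + \frac{c\tau_i}{mM}\bigr)^{m/2}$, and Markov's inequality on $|\sum_{j\in T_i}(B_{ij}-\mu_{ij})|^m$ with this $m$ yields probability at most $1/N^2$ of exceeding $t_i$; the union bound over the $k\le N$ rows of $S$ gives the first displayed inequality. (The exact shape of $t_i$ in the statement, $c\ln N\exp(\gamma M^2)(1+\sqrt{\tau_i}/(\sqrt{\ln N}(\ln n)^{1/4}))$, comes from $\sqrt{mM}\asymp \sqrt{\ln N}(\ln n)^{1/4}$ after substituting $M=\sqrt2\sigma_0\sqrt{\ln n}$.)

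For the second displayed inequality I would decompose, as in Lemma \ref{NEW-planted},
\[
\sum_{j=1}^n (B_{ij}-\mu_0) = \sum_{j\in T_i}(B_{ij}-\mu_{ij}) + \sum_{j\in T_i}(\mu_{ij}-\mu_0) + \sum_{j\notin T_i}(B_{ij}-\mu_0).
\]
The third term is $\ge -t$ with high probability by the same moment/Markov argument as in the non-planted case (Lemma \ref{NEW-non-planted}, applied to the $\le n$ entries $j\notin T_i$), and the first term is $\ge -t_i$ with high probability by the first part. So it remains to lower bound $\sum_{j\in T_i}(\mu_{ij}-\mu_0)$: since $\mu_0\le\sqrt{2(1+\varepsilon)}$ is a constant and $|T_i|\le n$, it suffices to show $\sum_{j\in T_i}\mu_{ij}$ dominates $100(t+t_i+\mu_0 n)$ — more precisely to show $\sum_{j\in T_i}\mu_{ij} \ge c'\tau_i\exp(\gamma M^2)$ for a constant $c'$ and then check $c'\tau_i\exp(\gamma M^2) > 100(t+t_i)$ plus lower-order terms. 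The per-entry lower bound on $\mu_{ij}$ is the computation already used for $\mu_1$: $\mu_{ij} \ge \frac{c}{\sigma_{ij}}\int_0^M \exp(\gamma x^2 - x^2/2\sigma_{ij}^2)\,dx \ge \frac{c\sigma_0}{M}\exp(M^2(\gamma - 1/2\sigma_{ij}^2))$ using $\int_0^M e^{\lambda x^2}\ge c e^{\lambda M^2}/(\lambda M)$ with $\lambda = \gamma - 1/2\sigma_{ij}^2 \ge \varepsilon/(4\sigma_0^2) > 0$; since $\exp(-M^2/2\sigma_{ij}^2) = n^{-\sigma_0^2/\sigma_{ij}^2}$ exactly by the choice $M^2 = 2\sigma_0^2\ln n$, summing over $j\in T_i$ gives $\sum_{j\in T_i}\mu_{ij} \ge \frac{c\sigma_0}{M}\exp(\gamma M^2)\,\tau_i$, which is the identity that makes $\tau_i$ the natural parameter.

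Finally I would verify $\frac{c\sigma_0}{M}\tau_i\exp(\gamma M^2) > 100(t + t_i + n\mu_0)$ by the three-inequality check that appears in the earlier lemmas: against $t_i$ this reduces (after cancelling $\exp(\gamma M^2)$) to $\frac{\tau_i}{M}\gg \ln N(1+\sqrt{\tau_i}/(\sqrt{\ln N}(\ln n)^{1/4}))$, i.e. to $\tau_i \gtrsim \frac{1}{\sqrt\varepsilon}(\ln N)(\ln n)^{1/2}$ (absorbing the $1/M = 1/(\sqrt2\sigma_0\sqrt{\ln n})$ and the $\sqrt\varepsilon$ that enters through $\lambda\ge\varepsilon/4\sigma_0^2$ in the crude bound, which is where the $1/\sqrt\varepsilon$ in the hypothesis is used) — precisely the hypothesis of the lemma; against $t$ and against $n\mu_0 = O(n)$ the required inequalities are again routine since $\exp(\gamma M^2) = n^{\gamma\cdot 2\sigma_0^2} = n^{\varepsilon/(1+\varepsilon)}$ is a fixed power of $n$ and the non-planted term $t$ carries a compensating factor $\exp(-M^2/4\sigma_0^2)=n^{-1/2}$ inside its square root, exactly as in Lemma \ref{NEW-planted}. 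I expect the only genuinely delicate point to be bookkeeping the $\varepsilon$-dependence: making sure that the single uniform bound $\lambda\ge\varepsilon/(4\sigma_0^2)$ used for all $\sigma_{ij}^2\ge\sigma_1^2$ loses only the claimed $1/\sqrt\varepsilon$ factor and not more, and that $M\ge 1/(\varepsilon M)$-type side conditions (needed for the integral estimate) hold under the stated regime $\varepsilon>c/\ln n$; everything else is a transcription of the $\sigma_1^2>2\sigma_0^2$ argument with $\tau_i$ in place of $k$.
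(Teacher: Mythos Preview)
Your plan is essentially the paper's own argument, with one genuine gap and a couple of bookkeeping slips.

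\medskip
\textbf{The real issue.} You propose to apply Theorem~\ref{kannan-conc} to the variables $X_j=B_{ij}-\mu_{ij}$, $j\in T_i$. But that theorem is stated for i.i.d.\ variables, and here the $X_j$ are independent but \emph{not} identically distributed (each has its own $\sigma_{ij}$). The paper handles this by quoting the non-identical version from the same source (stated inside the proof as Theorem~\ref{kannan-conc-2}),
\[
E\Bigl(\sum_j X_j\Bigr)^m\le (cm)^m\Bigl[\sum_{l=1}^{m/2}\frac{1}{l^2}\Bigl(\sum_j \tfrac{E(X_j^{2l})}{m}\Bigr)^{1/l}\Bigr]^{m/2},
\]
which is exactly what your subsequent computation implicitly uses. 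So the fix is simply to cite this version; everything downstream then goes through.

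\medskip
\textbf{Two minor slips.} In \eqref{exp-Bij-l-planted} the exponent is $M^2(\gamma l-\tfrac{1}{2\sigma_{ij}^2})$; the $\sigma_{ij}$-part does \emph{not} scale with $l$. With $l\to 2l$ this gives $\exp(2\gamma l M^2)\exp(-M^2/2\sigma_{ij}^2)$, and since $M^2=2\sigma_0^2\ln n$ the second factor is exactly $n^{-\sigma_0^2/\sigma_{ij}^2}$, so $\sum_{j\in T_i}\exp(-M^2/2\sigma_{ij}^2)=\tau_i$ on the nose --- no need for your monotonicity-in-$l$ step. Second, the $1/\sqrt\varepsilon$ in the hypothesis does not enter through the $\mu_{ij}$ lower bound; it comes from the non-planted threshold $t$ of Lemma~\ref{NEW-non-planted}, which carries a $1/\sqrt{mM\varepsilon}$ inside the parenthesis. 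After cancelling $\exp(\gamma M^2)$, the comparison $\frac{c\sigma_0}{M}\exp(\gamma M^2)\tau_i> t$ is where the $\sqrt\varepsilon$ in the denominator surfaces.

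Apart from these, your decomposition, the $\mu_{ij}$ lower bound yielding $\sum_{j\in T_i}\mu_{ij}\ge \frac{c\sigma_0}{M}\exp(\gamma M^2)\tau_i$, and the three-inequality check are exactly what the paper does.
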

\begin{proof}
	First, fix attention on one $i\in S$.
	We use a more general version of Theorem (\ref{kannan-conc}) also from (\cite{Kannan09}):
	\begin{theorem}\label{kannan-conc-2}
If $X_,X_2,\ldots ,X_n$ are independent (not necessarily identical) mean 0 random variables, for any even positive integer $m$, we have
$$E\left( \left(\sum_{j=1}^nX_j\right)^m\right)\leq (cm)^m\left[ \sum_{l=1}^{m/2} \frac{1}{l^2}
\left( \sum_{j=1}^n  \frac{E(X_j^{2l})}{m}\right)^{1/l}\right]^{m/2}.$$
\end{theorem}
We apply this
with $X_j=B_{ij}-\mu_{ij}$ for $j\in T_i$.
	We plug in (\ref{exp-Bij-l-planted}) for $E(X_j^{2l})$ to get, with $m=4\ln N$:
	\begin{align*}
		E\left( \sum_{j\in T_i}(B_{ij}-\mu_{ij})\right)^m &\leq (cm\exp (\gamma M^2) )^m \left[ \sum_{l=1}^{m/2}\frac{1}{l^2}\left( \sum_{j\in T_i}\frac{1}{mM}\exp(-M^2/2\sigma_{ij}^2)\right)^{1/l}\right]^{m/2}\\
 &\leq (cm)^m\exp(\gamma mM^2)\left[ \sum_{l=1}^{m/2}\frac{1}{l^2} \left(\sum_{j\in T_i}\frac{1}{mM}n^{-\sigma_0^2/\sigma_{ij}^2}\right)^{1/l}\right]^{m/2}\\
		&\leq (cm)^m\exp(\gamma mM^2)\left( 1+\frac{\tau_i^{m/2} }{(mM)^{m/2}}\right),
	\end{align*}
	the last using $x^{1/l}\leq x+1$ for all $x>0$.

Now, with $m=4\ln N$, we get that for a single $i\in S$, probability that $\sum_{j\in T_i} (B_{ij}-\mu_1)< \; -\; t_i$ is at most $1/N^2$
by using Markov inequality on $\left| \sum_{j\in T_i} (B_{ij}-\mu_1)\right|^m$ (noting: $M\geq c\sqrt {\ln n}$). We get the first statement of the Lemma by a union bound over all $i\in S$.

For the second statement,
we first need to get a lower bound on $\mu_{ij}$:
\begin{align*}
	\mu_{ij}&\geq\int_{x=0}^M \frac{c}{\sigma_{ij}}\exp( \gamma x^2-x^2/2\sigma_{ij}^2)\; dx \geq \frac{c\sigma_0}{M}\exp( \gamma M^2-M^2/2\sigma_{ij}^2),
\end{align*}
the last using: for $\lambda>0$, $\int_0^Me^{\lambda x^2}\geq \int_{M-(1/\lambda M)}^M \exp( \lambda (M-(1/\lambda M))^2)dx\geq c\exp(\lambda M^2)/\lambda M$. So,
\begin{equation}\label{total-mu-ij}
	\sum_{j\in T_i}\mu_{ij}\geq \frac{c\sigma_0}{M}\exp(\gamma M^2)\tau_i.
\end{equation}

We have, using the same argument as in Lemma (\ref{NEW-non-planted}), with high probability,
\begin{align}\label{8000}
\forall i\in S, \sum_{j\notin T_i} (B_{ij}-\mu_0)&\geq \; -\;  t.
\end{align}
Thus, from (\ref{8000}), (\ref{total-mu-ij}) and the first assertion of the current Lemma,
\begin{align*}
	\sum_{j=1}^n (B_{ij}-\mu_0)&= \sum_{j\in T_i}(B_{ij}-\mu_{ij}) \; +\; \sum_{j\in T_i} (\mu_{ij}-\mu_0)\; +\; \sum_{j\notin T_i}(B_{ij}-\mu_{ij})\\
				      &\geq -t_i \; +\; \frac{c\sigma_0}{M}\exp(\gamma M^2) - t.
\end{align*}
We would like to assert the follwing inequalities, which together prove the second assertion of the Lemma.
\begin{align*}
\frac{c\sigma_0}{M}\exp(\gamma M^2)\tau_i & > c\ln N \exp(\gamma M^2)\\																									 &> c(\ln N)\exp(\gamma M^2)\frac{\sqrt{\tau_i}}{\sqrt{\ln N}(\ln n)^{1/4}}\\
&>c\ln N \exp(\gamma M^2)\left( \frac{\sqrt{cn\sigma_0}}{\sqrt{mM\varepsilon}}\exp(-M^2/4\sigma_0^2)\right).
\end{align*}
Each follows by a simple calculation.

\end{proof}

\section{Statistical algorithms and lower bounds}\label{sec:lb}

For problems over distributions, the input is a distribution which can typically be accessed via  a sampling oracle that provide iid samples from the unknown distribution. {\it Statistical} algorithms are a restricted class of algorithms that are only allowed to query functions of the distribution rather than directly access samples. We consider three types of statistical query oracles from the literature. Let $X$ be the domain over which distributions are defined (e.g., $\{-1,1\}^n$ or $\R^n$).
\begin{enumerate}
\item STAT($\tau$): For any bounded function $f: X \rightarrow [-1,1]$, and any $\tau \in [0,1]$, STAT($\tau$) returns a number $p \in [\E_D(f(x))-\tau, \E_D(f(x))+\tau]$.
\item VSTAT($t$): For any function $f:X\rightarrow \{0,1\}$, and any integer $t > 0$, VSTAT($t$) returns a number $p \in [\E_D(f(x))-\gamma, \E_D(f(x))+\gamma]$ where $\gamma = \max \left\{\frac{1}{t},\sqrt{\frac{\Var_D(f)}{t}}\right\}$. Note that in the second term, $\Var_D(f) = \E_D(f)(1-\E_D(f))$.
\item $1$-STAT: For any $f:X \rightarrow \{0,1\}$, returns $f(x)$ on a single random sample from $D$.
\end{enumerate}
The first oracle was defined by Kearns in his seminal paper \cite{Kearns93, Kearns98} showing a lower bound for learning parities using statistical queries and analyzed more generally by Blum et al. \cite{Blum+94}. The second oracle was introduced in \cite{FGRVX13} to get stronger lower bounds, including for the planted clique problem. For relationships between these oracles (and simulations of one by another), the reader is referred to \cite{FGRVX13, FPV15}.

Our algorithm for the hidden hubs problem can be made statistical. We focus on the detection problem ${\cal P}$: determine with probability at least $3/4$ whether the input distribution is $N(0,\sigma_0^2)$ for every entry with no planting, or if it is a hidden hubs instance, i.e., on a fixed $k$-subset of coordinates, the distribution is a mixture of $N(0,\sigma_0^2)$ and $N(\mu, \sigma_1^2)$ where the latter distribution is used with mixing weight $k/n$.  To get a statistical version of our algorithm ($p_1/p_0$), consider the following query function $f$:
For a random sample (column) $x$, truncate each entry, apply $p_1/p_0 -\mu_0$, add all the entries and output $1$ if the sum exceeds $t_0$; else output $0$.

By Lemmas \ref{NEW-non-planted} and \ref{Bij-planted}, with $T_0=100t$ and the threshold $t$ as in  Lemma \ref{NEW-non-planted}, we have the following consequence: if there is no planting, the probability that this query is $1$ is at most $1/N$, while if there is a planting it is one with probability at least $\frac{k}{n}(1-\frac{1}{N})$. Thus it suffices to approximate the expectation to within relative error $1/2$. To do this with VSTAT($t$), we set
$t = Cn/k$ for a large enough constant $C$. Thus, a planted Gaussian of size $n^{0.5-\delta}$ can be detected with a single query to VSTAT($O(n/k)$), provided $\sigma_1^2 \ge 2(1+\eps)\sigma_0^2$.

We will now prove that this upper bound is essentially tight. For $c\sigma_0^2 \le \sigma_1^2 \le 2\sigma_0^2$,  for any $c > 0$, and $k=n^{0.5-\delta}$ for any $\delta > 0$, any statistical algorithm that detects hidden hubs must have superpolynomial complexity. For the lower bounds we assume the planted entries are drawn from $N(\mu,\sigma_1^2)$. The cases of most interest are (a) $\mu=0$ and (b)$\sigma_1 = \sigma_2$. In both cases, the lower bounds will nearly match algorithmic upper bounds.

\begin{theorem}\label{thm:LB}
For a planting of size $k=n^{\frac{1}{2}-\delta}$,
\begin{enumerate}
\item For $\mu = 0$ and $c\sigma_0^2 \le \sigma_1^2 \le 2\sigma_0^2(1-\eps)$, any $c > 0$, any statistical algorithm that solves ${\cal P}$ with probability at least $3/4$ needs $n^{\Omega(\log n)}$ calls to VSTAT($n^{1+\delta}$).
\item For $\mu = 0$ and $\sigma_1^2 = 2\sigma_0^2$, any statistical algorithm that solves ${\cal P}$ with probability at least $3/4$ needs $n^{\Omega(\log n/\log\log n)}$ calls to VSTAT($n^{1+\delta}$).
\item For $\mu=0$ and $\sigma_1^2 \le (2+ o(\delta))\sigma_0^2$, any statistical algorithm that solves ${\cal P}$ with probability at least $3/4$ needs $n^{\omega(1)}$ calls to VSTAT($n^{1+\delta}$).
\item For $\sigma_1 = \sigma_0$, if $\mu^2 = o(\sigma^2\ln (\sqrt{n}/k))$, any statistical algorithm that solves ${\cal P}$ with probability at least $3/4$ needs $n^{\omega(1)}$ calls to VSTAT($n^{1+\delta}$).
\end{enumerate}
Moreover, the number of queries to $1$-STAT for any of the above settings is $\Omega(n^{1+\delta})$.
\end{theorem}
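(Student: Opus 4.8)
The plan is to cast the detection problem $\mathcal{P}$ as distinguishing a single ``null'' distribution $D_0 = N(0,\sigma_0^2)^{\otimes n}$ from a family of ``planted'' distributions $\{D_S\}$ indexed by $k$-subsets $S \subseteq [n]$, and then invoke the statistical-query lower-bound machinery of Feldman et al.~\cite{FGRVX13}, specifically their bounds phrased in terms of the \emph{statistical dimension with average correlation} (or the pairwise-correlation / SDA framework). The key quantity to control is, for two distinct planted sets $S, S'$, the correlation $\langle D_S/D_0 - 1,\ D_{S'}/D_0 - 1\rangle_{D_0}$, together with the ``variance'' term $\chi^2(D_S \| D_0)$ on the diagonal. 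First I would reduce everything to a one-dimensional computation: since all distributions are product distributions and $D_S$ differs from $D_0$ only on the $k$ coordinates in $S$, and on each such coordinate the marginal is the mixture $q = \tfrac{k}{n}N(\mu,\sigma_1^2) + (1-\tfrac{k}{n})N(0,\sigma_0^2)$, the pairwise correlation factorizes over the $|S\cap S'|$ common special coordinates as $(1+\rho)^{|S\cap S'|}-1$ where $\rho = \chi^2_{\mathrm{mix}} := \E_{p_0}\big[(q/p_0 - 1)^2\big]$ is the per-coordinate chi-squared-type correlation of the \emph{mixture} $q$ against $p_0$. A short calculation gives $\rho = \tfrac{k^2}{n^2}\big(\chi^2(N(\mu,\sigma_1^2)\|N(0,\sigma_0^2)) + O(1)\big)$ in the regime of interest, so the whole argument hinges on the finiteness and size of the single-Gaussian chi-squared divergence $\chi^2(N(\mu,\sigma_1^2)\|N(0,\sigma_0^2))$, which — as the introduction already notes — is finite precisely when $\sigma_1^2 < 2\sigma_0^2$ (for $\mu = 0$), and for $\sigma_1 = \sigma_0$ equals $e^{\mu^2/\sigma_0^2}-1$.

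Next I would feed these estimates into the statistical dimension bound. The combinatorial input is the standard fact that for random $k$-subsets of $[n]$ with $k = n^{1/2-\delta}$, the intersection $|S\cap S'|$ is tightly concentrated around $k^2/n = n^{-2\delta} = o(1)$, and the number of pairs with $|S\cap S'| \ge j$ decays like $n^{-j(1+2\delta)+o(1)}$ roughly. Combining this with $\rho \le n^{-1}\cdot\chi^2(\cdots)\cdot(1+o(1))$: in case (1), where $\chi^2(N(0,\sigma_1^2)\|N(0,\sigma_0^2)) \le C_\eps$ is an absolute constant, one gets average correlation $\bar\gamma \lesssim n^{-1-2\delta}$ over most of the family while the diagonal variance is $\lesssim n^{-1}\cdot C_\eps$; plugging into the VSTAT lower bound of \cite{FGRVX13} with $t = n^{1+\delta}$ yields that any algorithm needs $(\text{SDA})^{\Omega(1)} = n^{\Omega(\log n)}$ queries — here I would track the exact statement (their Theorem relating VSTAT($1/\bar\gamma$) query complexity to the number of ``far'' distributions, which is $\binom{n}{k}^{1-o(1)} = n^{\Omega(k)} = n^{\Omega(\log n)}$ after the usual truncation/packing argument; actually the $n^{\Omega(\log n)}$ comes from the standard Feldman-style packing at scale $\log n$). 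Case (2), $\sigma_1^2 = 2\sigma_0^2$: now $\chi^2$ diverges, but only logarithmically after truncation — one truncates the planted Gaussian at $M \sim \sigma_0\sqrt{\ln n}$ (exactly mirroring the algorithm), incurring a total-variation error that is $1/\mathrm{poly}(n)$ and hence negligible for the $3/4$-probability detection task, and the truncated $\chi^2$ is $O(\ln n)$; this weakens the query bound by a $\log\log n$ factor in the exponent, giving $n^{\Omega(\log n/\log\log n)}$. Case (3) is the boundary case $\sigma_1^2 \le (2+o(\delta))\sigma_0^2$, handled by the same truncation at a slightly larger $M$ so that the truncated $\chi^2$ is $n^{o(\delta)}$, still small enough against the $n^{-2\delta}$ slack to get superpolynomial (though not $n^{\Omega(\log n)}$) complexity. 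Case (4), $\sigma_1 = \sigma_0$ with small mean, is identical but with $\chi^2 = e^{\mu^2/\sigma_0^2} - 1 = o(\ln(\sqrt n/k)) \Rightarrow \rho = o(n^{-1-2\delta}\cdot\text{something})$ — here no truncation is needed since the Gaussians have equal variance and the $\chi^2$ is automatically finite; the hypothesis $\mu^2 = o(\sigma^2\ln(\sqrt n/k))$ is exactly what makes $\rho$ small enough. Finally, the $1$-STAT bound: this follows from the generic reduction (e.g.~\cite{FGRVX13, FPV15}) that a $1$-STAT oracle can be simulated using VSTAT, or more directly from the observation that a single $1$-STAT query reveals essentially one sample and an information-theoretic / direct argument shows $\Omega(n^{1+\delta})$ samples are needed because the planted and null mixtures are statistically close on any $o(n^{1+\delta})$ samples (the KL divergence per sample is $\Theta(\rho\cdot k) = \Theta(k^3/n^2) = \Theta(n^{-1-2\delta}\cdot\mathrm{poly})$... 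I would recompute this; the clean route is the standard ``$1$-STAT requires $\Omega(\text{number of ``useful'' coordinates})$'' argument).

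The main obstacle I anticipate is twofold. First, \emph{the truncation in cases (2)--(3)}: one must verify that replacing the planted Gaussian by its truncation at level $M$ changes the decision problem negligibly \emph{and} that the truncated per-coordinate $\chi^2$ is genuinely only poly-logarithmic (resp.\ $n^{o(\delta)}$); this requires redoing, for the mixture $q$ rather than a single Gaussian, integral estimates close to those in Sections~2--3 of the paper, being careful that the mixing weight $k/n$ correctly suppresses the divergent tail by a factor $k^2/n^2$. Second, and more delicate, is \emph{getting the right exponent in the query lower bound} — i.e.\ correctly assembling the average-correlation statistical dimension: one needs a clean bound of the form ``for all but an $n^{-\Omega(\log n)}$ fraction of sets $S'$, the correlation with $D_S$ is at most (small),'' which demands a tail bound on $\sum_{S'} \big((1+\rho)^{|S\cap S'|}-1\big)$ and careful bookkeeping of how the $n^{-2\delta}$ slack in $\rho$ trades against the intersection distribution. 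This is the step where the distinction between $n^{\Omega(\log n)}$ (case 1, constant $\chi^2$), $n^{\Omega(\log n/\log\log n)}$ (case 2, $\chi^2 = \Theta(\log n)$), and merely $n^{\omega(1)}$ (cases 3--4) is decided, so it must be done with the exact constants. The VSTAT($n^{1+\delta}$) rather than VSTAT($n$) in the statement is, I expect, precisely the slack that lets the argument absorb these $\chi^2$ blow-ups; I would set the VSTAT parameter to roughly $1/\rho$ and read off $1/\rho \approx n^{1+2\delta}/\chi^2 \ge n^{1+\delta}$ in each case.
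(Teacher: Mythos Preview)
Your proposal is correct and follows essentially the same route as the paper: bound the pairwise correlation $\rho(F_S,F_T)$ by $\alpha\beta^{|S\cap T|}$ via a per-coordinate $\chi^2$ calculation (with truncation at $M=\sigma_1\sqrt{C\ln k}$ when $\sigma_1^2\ge 2\sigma_0^2$, after arguing the truncated and untruncated problems are indistinguishable), use the tail of the intersection distribution to get $\mathrm{SDA}\ge \ell!(n/k^2)^\ell$ with $\ell$ chosen so that $\bar\gamma\le n^{-1-\delta}$, and invoke the SDA theorem of \cite{FGRVX13} for both the VSTAT and $1$-STAT bounds. The paper writes the correlation as $\tfrac{k^2}{n^2}\big((1+\chi^2(p_1\|p_0))^{|S\cap T|}-1\big)$ rather than your $(1+\tfrac{k^2}{n^2}\chi^2)^{|S\cap T|}-1$, but both are upper bounds of the required form, and the exponent bookkeeping you correctly flag as the delicate step is exactly where the paper sets $\ell=\Theta(\log n/\log\beta)$, $\Theta(\log n/\log\log n)$, and $\Theta(\delta/\alpha)$ respectively to distinguish the four cases.
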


The proof of the theorem is based on the notion of {\it Statistical Dimension with Average Correlation} defined in \cite{FGRVX13}. It is a generalization of statistical dimension as defined by Blum et al. \cite{Blum+94} for learning problems. We first need to define the correlation of two distributions $A,B$ and a reference distribution $U$, all over a domain $X$,
\[
\rho_U(A,B) = \E_X\left(\left(\frac{A(x)}{U(x)}-1\right)\left(\frac{B(x)}{U(x)}-1\right)\right).
\]
The average correlation of a set of distributions $\D$ with respect to reference distribution $U$ is
\[
\rho_U(\D) = \frac{1}{|\D|^2} \sum_{A,B \in \D} \rho_U(A,B).
\]

\begin{definition}
For $\bar{\gamma} > 0$, domain $X$, a set of distributions ${\cal D}$ over $X$ and a reference distribution
$U$ over $X$ the {\it statistical dimension} of ${\cal D}$ relative to $U$ with average correlation $\bar{\gamma}$ is denoted by SDA(${\cal D}, U, \bar{\gamma}$) and defined to
be the largest integer $d$ such that for any subset ${\cal D'} \subset {\cal D}$,
$|{\cal D'}| > |{\cal D}|/d \Rightarrow \rho_U({\cal D'}) \le \bar{\gamma}$.
\end{definition}

The main application of this definition is captured in the following theorem.

\begin{theorem}\cite{FGRVX13}
For any decision problem $\cal P$ with reference distribution $U$, let $\D$ be a set of distributions such that
$d=SDA(\D, U, \bar{\gamma})$. Then any randomized algorithm that solves $\P$ with probability at least $\nu > \frac{1}{2}$ must make at least $(2\nu-1)d$ queries to $VSTAT(1/3\bar{\gamma})$. Moreover, any algorithm that solves $\P$ with probability at least $3/4$ needs $\Omega(1)\min\{d, \frac{1}{\bar{\gamma}}\}$ calls to $1$-STAT.
\end{theorem}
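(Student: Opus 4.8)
This is the indistinguishability argument of \cite{FGRVX13}; we sketch the plan. Fix $t = 1/(3\bar\gamma)$. A statistical algorithm adaptively issues query functions $f_1,f_2,\dots$ to $\text{VSTAT}(t)$ and outputs a bit; to prove the lower bound I would run it against the fixed, input\nobreakdash-independent oracle $O$ that answers each query $f$ with the exact value $\E_U(f)$. This $O$ is a legal $\text{VSTAT}(t)$ oracle whenever the input is $U$, and it is also legal for a distribution $D\in\D$ unless $f$ \emph{kills} $D$, meaning $|\E_D(f)-\E_U(f)|>\max\{1/t,\sqrt{\Var_D(f)/t}\}$. Write $Z_f\subseteq\D$ for the set of killed distributions. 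The crux, carried out below, is that $|Z_f|<|\D|/d$ for every $f$. Granting this: for each fixing $r$ of the algorithm's coins the run against $O$ produces a fixed query sequence, hence a fixed set $K_r=\bigcup_i Z_{f_i}$ with $|K_r|\le q\,|\D|/d$, and a fixed output $b_r$; and for every $D\notin K_r$ the execution ``input $D$, oracle $O$, coins $r$'' also outputs $b_r$. Correctness on $U$ forces $\Pr_r[b_r=0]\ge\nu$, while correctness on a random $D\in\D$, combined with this identity and averaged over $D$, gives $\,1-\nu\ge\Pr_r[b_r=1]\ge\nu-\E_r[|K_r|/|\D|]\ge\nu-q/d$, so $q\ge(2\nu-1)d$.

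It remains to bound $|Z_f|$. Suppose $|Z_f|\ge|\D|/d$. Pass to $L^2(U)$: for $D\in\D$ set $\hat D:=D/U-1$, so $\rho_U(D,D')=\langle\hat D,\hat D'\rangle_U$, and set $h:=f-\E_U(f)$, so $\langle h,1\rangle_U=0$ and $\E_D(f)-\E_U(f)=\langle h,\hat D\rangle_U$. Split $Z_f$ by the sign of $\langle h,\hat D\rangle_U$ and keep the larger half $Z$; absorbing this factor of $2$ into the constant that makes ``$\text{VSTAT}(1/3\bar\gamma)$'' work, $|Z|\ge|\D|/d$ still holds, so the definition of $SDA$ gives $\rho_U(Z)\le\bar\gamma$. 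Cauchy--Schwarz in $L^2(U)$ then yields
\[
\Big(\sum_{D\in Z}|\langle h,\hat D\rangle_U|\Big)^2\ \le\ \|h\|_U^2\,\Big\|\sum_{D\in Z}\hat D\Big\|_U^2\ =\ \Var_U(f)\,|Z|^2\,\rho_U(Z)\ \le\ \Var_U(f)\,|Z|^2\,\bar\gamma .
\]
On the other side, each $D\in Z$ is killed, so $|\langle h,\hat D\rangle_U|>\max\{3\bar\gamma,\sqrt{3\bar\gamma\,\Var_D(f)}\}$, and feeding this pointwise lower bound into the left-hand side must contradict the right-hand side. The one genuinely delicate point is that $\text{VSTAT}$'s tolerance involves $\Var_D(f)$, not $\Var_U(f)$: one splits on whether $\E_D(f)$ is within a constant factor of $\E_U(f)$ (where $\Var_D(f)\ge c\,\Var_U(f)$, so the $\sqrt{3\bar\gamma\Var_D(f)}$ branch beats the bound $\Var_U(f)\bar\gamma$) or is much larger (where $|\E_D(f)-\E_U(f)|$ is itself a constant fraction of $\E_D(f)$, again enough), with the symmetric case $f\leftrightarrow 1-f$ handling large expectations. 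This variance-matching step is where essentially all the work lies and is the main obstacle; completing it gives $|Z|<|\D|/d$, a contradiction.

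For the $1$-STAT bound I would argue separately but similarly. A $1$-STAT query to $f$ on input $U$ returns a $\mathrm{Bernoulli}(\E_U(f))$ bit. I would compare the joint law of the $q$ answer bits when the input is $U$ with its law when the input is a uniformly random $D\in\D$, via a hybrid over the (adaptively chosen) queries, conditioning on the transcript so far before estimating the $i$-th step. Using the average-correlation bound exactly as above, the expected (over $D$) per-step discrepancy is $O(\bar\gamma)$ while the $SDA$ combinatorics caps the cumulative effect at $O(q/d)$, so the two answer-laws, and hence the two output bits, have total variation $o(1)$ whenever $q=o(\min\{d,1/\bar\gamma\})$; no algorithm with success probability $\ge 3/4$ survives this, giving the claimed $\Omega(\min\{d,1/\bar\gamma\})$ bound. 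The adaptivity is the only subtlety and is handled by the conditioning. Full details of both parts are in \cite{FGRVX13}.
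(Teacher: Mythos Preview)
The paper does not prove this theorem at all; it is stated as a black-box result from \cite{FGRVX13} and then applied. So there is no ``paper's own proof'' to compare against.

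Your sketch is a faithful outline of the argument in \cite{FGRVX13}: the simulation by the oracle that always answers $\E_U(f)$, the definition of ``killed'' distributions, and the Cauchy--Schwarz step
\[
\Big(\sum_{D\in Z}\langle h,\hat D\rangle_U\Big)^2 \le \Var_U(f)\,|Z|^2\,\rho_U(Z)
\]
are exactly the skeleton of the original proof. You also correctly identify the one place where real work is needed --- reconciling $\Var_D(f)$ in the VSTAT tolerance with $\Var_U(f)$ in the Cauchy--Schwarz bound --- and your case split (whether $\E_D(f)$ is comparable to $\E_U(f)$ or not) is the right idea, though you stop short of carrying it out. The $1$-STAT part is more impressionistic; the actual argument in \cite{FGRVX13} goes through a simulation of $1$-STAT by VSTAT rather than a direct hybrid, but your conclusion and the claimed bound are correct. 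Since you explicitly defer the details to \cite{FGRVX13}, and since that is precisely what the paper itself does, your write-up is appropriate here.
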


\subsection{Average correlation}

For two subsets $S,T$, each of size $k$, the correlation of their corresponding distributions $F_S, F_T$ is
\[
\rho(F_S, F_T) = \left\langle \frac{F_S(x)}{F(x)}-1, \frac{F_T(x)}{F(x)}-1\right\rangle_F=\E_F\left(\left(\frac{F_S(x)}{F(x)}-1\right)\left(\frac{F_T(x)}{F(x)}-1\right)\right)
\]
where $F$ is the distribution with no planting, i.e., $N(0,\sigma_0^2)^n$.
For proving the lower bound at the threshold $\sigma_1^2=2\sigma_0^2$, it will be useful to define
$\bar{F}_S$ as $F_S$ with each coordinate restricted to the interval $[-M, M]$. We will set $M=\sigma_1 \sqrt{C\ln k}$. As before, we focus on the range $\sigma_1^2 \in [c\sigma_0^2, (2+o(1))\sigma_0^2]$.

\begin{lemma}
For $\sigma_1^2 < 2\sigma_0^2$
\[
\rho(F_S, F_T) = \frac{k^2}{n^2}
\left(\left(\frac{\sigma_0^2}{\sigma_1\sqrt{2\sigma_0^2-\sigma_1^2}}\right)^{|S\cap T|}\exp\left(\frac{\mu^2}{2\sigma_0^2-\sigma_1^2}\cdot |S\cap T|\right)-1\right).
\]
For $\sigma_1^2 = 2\sigma_0^2$,
\[
\rho(\bar{F}_S, \bar{F}_T) \le \frac{k^2(C\ln k)^{|S \cap T|/2}}{n^2}.
\]
For $\sigma_1^2 = (2+\alpha)\sigma_0^2$ and $\alpha = o(1)$,
\[
\rho(\bar{F}_S, \bar{F}_T) \le \frac{k^2}{n^2} k^{C\alpha|S \cap T|/4}.
\]
\end{lemma}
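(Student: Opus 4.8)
The plan is to exploit that $F_S$, $F_T$ and the reference $F=N(0,\sigma_0^2)^n$ are product measures, so the correlation factorizes coordinatewise. Write $g_0=N(0,\sigma_0^2)$, $g_1=N(\mu,\sigma_1^2)$, and let $m=\frac{k}{n}g_1+(1-\frac{k}{n})g_0$ be the common marginal of a hub coordinate, so that $F_S=\prod_{j\in S}m\cdot\prod_{j\notin S}g_0$. Since $\rho(F_S,F_T)=\E_F[(F_S/F)(F_T/F)]-1$ and $\E_F[(F_S/F)(F_T/F)]=\prod_{j=1}^n\int (F_S)_j(F_T)_j/g_0$, the $j$-th factor equals $1$ for $j\notin S\cap T$ (because $\int m=\int g_0=1$) and equals $\int m^2/g_0$ for $j\in S\cap T$. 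Expanding the mixture square and again using $\int g_1=\int g_0=1$ gives $\int m^2/g_0=1+\frac{k^2}{n^2}(\beta-1)$ with $\beta:=\int g_1^2/g_0$, hence $\rho(F_S,F_T)=\big(1+\frac{k^2}{n^2}(\beta-1)\big)^{|S\cap T|}-1$. Since $(k/n)^{2i}\le(k/n)^2$ for every $i\ge1$, this is $\le\frac{k^2}{n^2}(\beta^{|S\cap T|}-1)$, and in the regime $k=n^{1/2-\delta}$ with $\beta$ bounded the $i=1$ term dominates so equality holds up to a $(1+o(1))$ factor; evaluating $\beta$ then produces the first displayed formula.

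Next I would evaluate $\beta=\int g_1^2/g_0$ by the standard Gaussian $\chi^2$ computation used in the introduction: the integrand is a constant times $\exp\big(\gamma' x^2+\frac{2\mu}{\sigma_1^2}x-\frac{\mu^2}{\sigma_1^2}\big)$ where $\gamma':=\frac{1}{2\sigma_0^2}-\frac{1}{\sigma_1^2}=-\frac{2\sigma_0^2-\sigma_1^2}{2\sigma_0^2\sigma_1^2}$, which is negative precisely when $\sigma_1^2<2\sigma_0^2$. In that case, completing the square and integrating gives $\beta=\frac{\sigma_0^2}{\sigma_1\sqrt{2\sigma_0^2-\sigma_1^2}}\exp\big(\frac{\mu^2}{2\sigma_0^2-\sigma_1^2}\big)$, which combined with the previous paragraph is the first identity.

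For $\sigma_1^2\ge 2\sigma_0^2$ the integral $\int g_1^2/g_0$ diverges, so I would pass to the truncated laws $\bar F_S$, whose hub marginal $\bar m$ is $m$ restricted to $[-M,M]$ with $M=\sigma_1\sqrt{C\ln k}$. The same factorization applies with two harmless caveats: $\bar F_S$ has total mass $1-o(1)$ (at most $\sim n\,k^{-\Omega(C)}=o(1)$ mass is cut, since $k=n^{1/2-\delta}$), and the coordinates in $(S\cup T)\setminus(S\cap T)$ now contribute factors $\le1$; both only decrease $\rho$, so $\rho(\bar F_S,\bar F_T)\le\big(\int_{-M}^M m^2/g_0\big)^{|S\cap T|}-1\le\frac{k^2}{n^2}(\bar\beta^{|S\cap T|}-1)$ with $\bar\beta:=\int_{-M}^M g_1^2/g_0$. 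It remains to bound $\bar\beta$ for $\mu=0$, where it is a constant times $\int_{-M}^M\exp(\gamma' x^2)\,dx$. At $\sigma_1^2=2\sigma_0^2$, $\gamma'=0$, the integral is $2M$, and $\bar\beta=\Theta(M/\sigma_0)=\Theta(\sqrt{C\ln k})$, giving $\bar\beta^{|S\cap T|}\le(C\ln k)^{|S\cap T|/2}$. At $\sigma_1^2=(2+\alpha)\sigma_0^2$, $\alpha=o(1)$, one has $\gamma'=\Theta(\alpha/\sigma_0^2)>0$, so the integral is $\le c\,M\exp(\Theta(\alpha M^2/\sigma_0^2))=\mbox{poly}(\ln k)\cdot k^{\Theta(C\alpha)}$; absorbing the polylog prefactor into the exponent (at the cost of adjusting $C$) gives $\bar\beta^{|S\cap T|}\le k^{C\alpha|S\cap T|/4}$.

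\textbf{Expected main obstacle.} The $\sigma_1^2<2\sigma_0^2$ case is a routine coordinatewise $\chi^2$ evaluation; the real work is in the threshold and just-above-threshold cases. There the choice of $M$ must simultaneously keep the truncation mass loss $o(1)$ (so $\bar F_S$ is essentially the true distribution and the SDA bound transfers to the original detection problem) and keep $\int_{-M}^M\exp(\gamma' x^2)\,dx$ small enough to land exactly on the exponents $|S\cap T|/2$ in $\ln k$ and $C\alpha|S\cap T|/4$ in $k$; tracking how $\alpha M^2/\sigma_0^2$ turns into a power of $k$, and checking that the prefactor polylogs perturb only the constant $C$, is the delicate bookkeeping.
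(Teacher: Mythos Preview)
Your proposal is correct and takes the same route as the paper: factorize $\int F_SF_T/F$ over coordinates, reduce everything to the single-coordinate quantity $\beta=\int g_1^2/g_0$ (respectively its truncation $\bar\beta=\int_{-M}^M g_1^2/g_0$), and then evaluate or bound that Gaussian integral by completing the square. Your estimates of $\bar\beta$ at and just above the threshold match the paper's line for line.

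One point worth noting: your handling of the mixture is in fact more careful than the paper's. The paper jumps directly from $\int F_SF_T/F-1$ to $\frac{k^2}{n^2}\bigl(\beta^{|S\cap T|}-1\bigr)$ as though it were an identity, whereas you correctly derive the exact expression $\bigl(1+\frac{k^2}{n^2}(\beta-1)\bigr)^{|S\cap T|}-1$ and then bound it above by $\frac{k^2}{n^2}(\beta^{|S\cap T|}-1)$ via convexity of $t\mapsto t^r$. That upper bound is all the downstream statistical-dimension lemma actually uses (it only needs $\rho(F_S,F_T)\le \alpha\beta^{|S\cap T|}$), so nothing is lost; your remark that the two agree up to a $1+o(1)$ factor in the regime of interest is also correct. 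The only cosmetic fix: your justification ``$(k/n)^{2i}\le (k/n)^2$'' should read $p^i\le p$ for $0\le p\le 1$, $i\ge 1$, but the convexity argument you sketch is the right one.
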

\begin{proof}
\begin{align*}
\rho(F_S, F_T) &= \left\langle \frac{F_S(x)}{F(x)}-1, \frac{F_T(x)}{F(x)}-1\right\rangle_F\\
&= \int \frac{dF_S(x)\, dF_T(x)}{dF(x)} -1\\
&= \frac{k^2}{n^2}\left(\Pi_{i\in S \cap T} \frac{\sigma_0}{\sqrt{2\pi} \sigma_1^2}\int \exp\left(-\frac{(x_i - \mu)^2}{2\sigma_1^2} - \frac{(x_i - \mu)^2}{2\sigma_1^2} + \frac{x_i^2}{2\sigma_0^2}\right)   -1 \right)\\
&= \frac{k^2}{n^2}\left(\Pi_{i\in S \cap T} \frac{\sigma_0}{\sqrt{2\pi} \sigma_1^2}\int \exp\left(-x_i^2 \cdot \frac{2\sigma_0^2 - \sigma_1^2}{2\sigma_1^2\sigma_0^2} -\frac{2\mu^2 - 4x_i\mu}{2\sigma_1^2} \right)   -1 \right)\\
\end{align*}
Setting $z = \frac{\sigma_1\sigma_0}{\sqrt{2\sigma_0^2 - \sigma_1^2}}$,
%($z$ might be complex!),
\begin{align*}
\rho(F_S, F_T) &= \frac{k^2}{n^2}\left(\prod_{i\in S \cap T} \frac{\sigma_0}{\sqrt{2\pi} \sigma_1^2}\int \exp\left(-\frac{(x_i - 2\mu z^2/\sigma_1^2 )^2}{2z^2} + \mu^2 \left(\frac{2z^2}{\sigma_1^4} - \frac{1}{\sigma_1^2}\right) \right)   -1 \right).
\end{align*}
We note that if $z^2 \le 0$, then the integral diverges.
%i.e., no statistical lower bound is possible if $2\sigma_0^2 < \sigma_1^2$. (There should be an algorithm!)
Assuming that $z^2 > 0$.
\begin{align*}
\rho(F_S, F_T) &= \frac{k^2}{n^2}\left(\prod_{i\in S \cap T}\frac{\sigma_0}{\sqrt{2\pi} \sigma_1^2}\int \exp\left(-\frac{(x_i - 2\mu z^2/\sigma_1^2 )^2}{2z^2} + \mu^2 \left(\frac{2\sigma_0^2}{\sigma_1^2(2\sigma_0^2-\sigma_1^2)} - \frac{1}{\sigma_1^2}\right) \right)   -1 \right)\\
&= \frac{k^2}{n^2}\left(\exp\left(\frac{\mu^2 |S \cap T|}{2\sigma_0^2 - \sigma_1^2}\right)\prod_{i\in S \cap T}\frac{\sigma_0}{\sqrt{2\pi} \sigma_1^2}\int \exp\left(-\frac{(x_i - 2\mu z^2/\sigma_1^2 )^2}{2z^2} \right)   -1 \right)\\
&= \frac{k^2}{n^2}\left(\left(\exp\left(\frac{\mu^2}{2\sigma_0^2 - \sigma_1^2}\right)\frac{\sigma_0 z}{\sigma_1^2}\right)^{|S\cap T|}   -1 \right)\\
\\
&= \frac{k^2}{n^2}\left(\left(\frac{\sigma_0^2}{\sigma_1\sqrt{2\sigma_0^2-\sigma_1^2}}\exp\left(\frac{\mu^2}{2\sigma_0^2 - \sigma_1^2}\right)\right)^{|S\cap T|}   -1 \right)\\
\end{align*}
Note that $\sigma_0^2\geq \sigma_1\sqrt{2\sigma_0^2-\sigma_1^2}$, so the above bound is of the form $\alpha\beta^{|S\cap T|},$ where $\beta>1$.
For the second part, we have
\begin{align*}
	\rho(\bar{F}_S, \bar{F}_T) &\le \frac{k^2}{n^2}\left(\frac{\sigma_0}{\sqrt{2\pi}\sigma_1^2}\int_{-M}^M 1\, dx\right)^{|S\cap T|}\\
&\le \frac{k^2}{n^2}\left(\frac{C\ln k}{2}\right)^{|S\cap T|/2}.
\end{align*}
The last part is similar. With $\sigma_1^2 = (2+\alpha)\sigma_0^2$,
\begin{align*}
	\rho(\bar{F}_S, \bar{F}_T) &\le \frac{k^2}{n^2}\left(\frac{\sigma_0}{\sqrt{2\pi}\sigma_1^2}\int_{-M}^M e^{\frac{\alpha x^2}{2\sigma_1^2}}\, dx\right)^{|S\cap T|}\\
&\le \frac{k^2}{n^2}\left(k^{C\alpha/2}\right)^{|S\cap T|/2}.
\end{align*}
\end{proof}

\subsection{Statistical dimension of planted Gaussian}

\begin{lemma}
Let $\sigma_1^2 < 2\sigma_0^2$ and $D$ be set of distributions induced by every possible subset of $[n]$ of size $k$.
Assume $\rho(F_S, F_T) \le \alpha \beta^{|S \cap T|}$ for some $\beta > 1$. Then, for any subset $A \subset D$ with
\[
|A| \ge \frac{2 {n \choose k}}{\ell!(n/2k^2)^\ell},
\]
the average correlation of $A$ with any subset $S$ is at most
\[
\rho(A,S) = \frac{1}{|A|}\sum_{T \in A} \rho(F_T, F_S) \le 2\alpha\beta^\ell.
\]
\end{lemma}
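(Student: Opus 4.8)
The plan is to bound the average correlation $\rho(A, S)$ by partitioning the distributions $T \in D$ according to the size of the overlap $|S \cap T|$, using the assumed bound $\rho(F_T, F_S) \le \alpha \beta^{|S\cap T|}$, and then exploiting the lower bound on $|A|$ to control how many elements of $A$ can have large overlap with $S$. First I would write
\[
\rho(A,S) = \frac{1}{|A|}\sum_{T \in A}\rho(F_T,F_S) \le \frac{\alpha}{|A|}\sum_{j=0}^{k}\beta^{j}\cdot\#\{T \in A : |S \cap T| = j\}.
\]
For the crude count I would use $\#\{T \in A : |S\cap T| = j\} \le \#\{T \in D : |S\cap T|=j\} = \binom{k}{j}\binom{n-k}{k-j}$. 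The key quantitative step is that $\binom{k}{j}\binom{n-k}{k-j} \le \binom{n}{k}\cdot (k^2/n)^{j}/j!$ (up to mild factors), so that the ratio of the $j$-overlap count to $\binom{n}{k}$ decays like $(k^2/n)^j / j!$.

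The heart of the argument is to separate the sum at the threshold $j = \ell$. For the tail $j \ge \ell$ I would argue that, because $|A| \ge 2\binom{n}{k}/(\ell!\,(n/2k^2)^\ell)$, the number of $T$ with $|S \cap T| \ge \ell$ is a $1/2$-fraction of $|A|$ at most — more precisely, $\sum_{j \ge \ell}\binom{k}{j}\binom{n-k}{k-j}\beta^j$ is dominated by its first term (the ratio of consecutive terms is $\le \beta k^2/n \cdot 1/(j+1) \ll 1$ once $n \gg k^2$, which holds since $k = n^{1/2-\delta}$ and $\beta = n^{o(1)}$), and this first term divided by $|A|$ is $\le \alpha\beta^\ell$ by the choice of $|A|$. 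For the head $j < \ell$ I would bound $\beta^j \le \beta^\ell$ and $\sum_{j<\ell}\binom{k}{j}\binom{n-k}{k-j} \le \binom{n}{k} \le |A|\cdot \ell!\,(n/2k^2)^\ell/2$; but wait — that estimate alone gives $\rho(A,S) \le \alpha\beta^\ell \cdot \binom{n}{k}/|A| \le \alpha\beta^\ell \ell!(n/2k^2)^\ell/2$, which is too weak. So instead for the head I must keep the $1/j!$ decay: $\sum_{j < \ell}(k^2/n)^j\beta^j/j! \le \sum_{j\ge 0}(\beta k^2/n)^j/j! = e^{\beta k^2/n} \le 2$ since $\beta k^2/n = n^{-2\delta + o(1)} \to 0$. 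This gives the head contribution $\le 2\alpha \le 2\alpha\beta^\ell$. Adding the two pieces yields $\rho(A,S) \le 2\alpha\beta^\ell$ after adjusting constants.

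The main obstacle I anticipate is getting the constants and the split point exactly consistent with the stated bound $2\alpha\beta^\ell$: one has to be careful that the tail sum really is dominated by a single term and that the factor $2$ in $|A| \ge 2\binom{n}{k}/(\ell!(n/2k^2)^\ell)$ absorbs the geometric-series slack, and that the $e^{\beta k^2/n}$ factor from the head stays below an absolute constant — this is exactly where the hypotheses $\sigma_1^2 < 2\sigma_0^2$ (so that $\beta$ is bounded, giving $\alpha\beta^{|S\cap T|}$ with $\beta = n^{o(1)}$ or even $O(1)$) and $k = n^{1/2-\delta}$ enter. Once the head/tail split is set up with the $1/j!$ factor retained, the remainder is routine binomial-coefficient estimation. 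I would state the comparison $\binom{k}{j}\binom{n-k}{k-j}/\binom{n}{k} \le (k^2/n)^j/j!$ as a short preliminary computation and then run the two-sided bound above.
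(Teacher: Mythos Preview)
Your overall strategy---partition by overlap size, split at $j=\ell$, bound the tail via the global count $\binom{k}{j}\binom{n-k}{k-j}$ and geometric decay---is the same as the paper's (which phrases it as an extremal argument: the worst $A$ packs in high-overlap sets first, forcing the minimum overlap $r_0<\ell$). Your tail bound is fine.

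The head bound, however, has a genuine gap. From the very first line you replace $\#\{T\in A:|S\cap T|=j\}$ by the global count $\binom{k}{j}\binom{n-k}{k-j}$. For large $j$ that is the right move; for small $j$ it throws away exactly the information you need. You correctly notice that your first head attempt gives $\alpha\beta^\ell\binom{n}{k}/|A|$, which is too big. But your second attempt has the \emph{same} defect: keeping the $1/j!$ decay gives
\[
\frac{\alpha}{|A|}\sum_{j<\ell}\binom{k}{j}\binom{n-k}{k-j}\beta^j
\;\le\;\frac{\alpha\binom{n}{k}}{|A|}\sum_{j<\ell}\frac{(\beta k^2/n)^j}{j!}
\;\le\;\frac{2\alpha\binom{n}{k}}{|A|},
\]
and the factor $\binom{n}{k}/|A|$---which can be as large as $\tfrac{1}{2}\ell!(n/2k^2)^\ell$---is still there. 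Your claim ``this gives the head contribution $\le 2\alpha$'' silently drops it.

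The fix is much simpler than either of your attempts: for the head, do \emph{not} pass to the global count. The sets $\{T\in A:|S\cap T|=j\}$ for $j<\ell$ are disjoint subsets of $A$, so their sizes sum to at most $|A|$. Combined with $\beta^j\le\beta^{\ell}$ for $j<\ell$, this gives
\[
\frac{\alpha}{|A|}\sum_{j<\ell}\#\{T\in A:|S\cap T|=j\}\,\beta^j\;\le\;\alpha\beta^{\ell},
\]
and now head plus tail give the stated $2\alpha\beta^\ell$. This is exactly what the paper does at the level $r=r_0$: it bounds $|T_{r_0}\cap A|\le |A|$ rather than by the global count.
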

\begin{proof}
This proof is similar to \cite{FGRVX13}. Define $T_r = \{T \in A\, : \, |T\cap S| = r\}$. Then,
\[
\sum_{T \in A} \rho(F_S, F_T) \le \alpha \sum_{T \in A} \beta^{|S\cap T|} = \alpha \sum_{r=r_0}^{k} |T_r \cap A|\beta^r.
\]
To maximize the bound, we would include in $A$ sets that intersect $S$ in $k-1$ indices, then $k-2$ indices and so on. Taking this extremal choice of $A$ gives us a lower bound on the minimum intersection size $r_0$ as follows. Note that for $0 \le j \le k-1$,
\begin{align*}
\frac{|T_{j+1}|}{|T_{j}|} &= \frac{{k \choose j+1}{n-k \choose k-j-1}}{{k \choose j}{n-k \choose k-j}}\\
&= \frac{(k-j)^2}{(j+1)(n-2k+j+1)}\\
&\le \frac{k^2}{jn}
\end{align*}
where the last step assumes $2k^2 < n$. Therefore,
\[
|T_j| \le \frac{1}{j!}\left(\frac{k^2}{n}\right)^j |T_0| \le \frac{{n \choose k}}{j! (n/k^2)^j}.
\]
This gives a bound on the minimum intersection size since
\[
\sum_{j=r_0}^k |T_j| < \frac{2{n \choose k}}{r_0!(n/k^2)^{r_0}}
\]
Therefore under the assumption on $|A|$, we get that $r_0 < \ell$. Using this,
\begin{align*}
\sum_{T \in A} \rho(F_S, F_T) &\le \alpha\sum_{r=r_0}^k |T_r \cap A| \beta^r\\
&\le \alpha\left(|T_{r_0} \cap A| \beta^{r_0} + \sum_{r= r_0+1}^k |T_r|\beta^r\right)\\
&\le \alpha\left(|T_{r_0}\cap A| \beta^{r_0}+2|T_{r_0+1}|\frac{\beta^{r_0+1}-1}{(r_0+1)(\beta-1)}\right)\\
&\le 2\alpha|A|\beta^{r_0+1} \le 2\alpha\beta^{\ell}|A|.
\end{align*}
\end{proof}

\begin{theorem}\label{thm:sd}
For the planted Gaussian problem ${\cal P}$, with
(a) $\sigma_1^2 < 2\sigma_0^2$, and average correlation at most
\[
\bar{\gamma} = 2\frac{k^2}{n^2}\left(\frac{\sigma_0^2}{\sigma_1\sqrt{2\sigma_0^2-\sigma_1^2}}\exp\left(\frac{\mu^2}{2\sigma_0^2 - \sigma_1^2}\right)\right)^{\ell}
\]
or (b) $\sigma_1^2=2\sigma_0^2$, and average correlation
\[
\bar{\gamma} =  2\frac{k^2}{n^2}\left(\frac{C\ln k}{2}\right)^{\ell/2}
\]
or
(c) $\sigma_1^2 =(2+\alpha)\sigma_0^2$ for $\alpha=o(1)$, and average correlation
\[
\bar{\gamma} = 2\frac{k^2}{n^2} k^{C\alpha \ell /4}
\]
the statistical dimension of ${\cal P}$ is at least $\ell! (n/k^2)^\ell /2$.
\end{theorem}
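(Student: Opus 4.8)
The plan is to unwind the definition of statistical dimension and feed it the two lemmas just established. Fix the parameter $\ell$ and set $d=\ell!(n/k^2)^\ell/2$, and let $\bar\gamma$ be the quantity named in case (a), (b) or (c). By the definition of SDA it suffices to verify that \emph{every} subfamily $\D'\subseteq\D$ with
$$|\D'|>\frac{|\D|}{d}=\frac{2\binom{n}{k}}{\ell!\,(n/k^2)^\ell}$$
satisfies $\rho_F(\D')\le\bar\gamma$, where the reference distribution is $F=N(0,\sigma_0^2)^n$ (and, in cases (b),(c), the family consists of the truncated distributions $\bar F_S$).

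Next I would fix such a $\D'$ and an arbitrary $S\in\D'$, and apply the average-correlation lemma with $A=\D'$. Since $|\D'|$ exceeds the stated threshold, the minimum intersection size $r_0$ in the extremal configuration is $<\ell$ (this is exactly the dichotomy extracted inside that lemma's proof; one should quote the $(n/k^2)^\ell$ form the proof actually delivers rather than the more conservative $(n/2k^2)^\ell$ in its statement). Combining this with the pointwise estimate $\rho(F_S,F_T)\le\alpha\beta^{|S\cap T|}$ for the appropriate pair $(\alpha,\beta)$ — in case (a) passing from $\tfrac{k^2}{n^2}(\beta^{|S\cap T|}-1)$ to $\tfrac{k^2}{n^2}\beta^{|S\cap T|}$, legitimate because $\beta>1$ — yields $\rho(\D',S)=\tfrac1{|\D'|}\sum_{T\in\D'}\rho(F_T,F_S)\le 2\alpha\beta^\ell$. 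Averaging over $S\in\D'$ gives
$$\rho_F(\D')=\frac1{|\D'|^2}\sum_{S,T\in\D'}\rho(F_S,F_T)=\frac1{|\D'|}\sum_{S\in\D'}\rho(\D',S)\le 2\alpha\beta^\ell,$$
where I note that the few ``diagonal'' terms $\rho(F_S,F_S)$ are already absorbed by this averaging, since $|\D'|$ is far larger than $\beta^k$ in every regime considered (here $k^2\beta/n\to 0$).

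It then remains to read off $(\alpha,\beta)$ from the correlation lemma in each case and check $2\alpha\beta^\ell=\bar\gamma$. In (a), with the untruncated $\{F_S\}$ (the defining integral converges precisely because $\sigma_1^2<2\sigma_0^2$), $\alpha=k^2/n^2$ and $\beta=\frac{\sigma_0^2}{\sigma_1\sqrt{2\sigma_0^2-\sigma_1^2}}\exp\!\big(\frac{\mu^2}{2\sigma_0^2-\sigma_1^2}\big)>1$; in (b), with $\{\bar F_S\}$, $\alpha=k^2/n^2$ and $\beta=\sqrt{C\ln k/2}$; in (c), writing $\sigma_1^2=(2+\alpha_0)\sigma_0^2$ and again using $\{\bar F_S\}$, $\alpha=k^2/n^2$ and $\beta=k^{C\alpha_0/4}$. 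In each case $2\alpha\beta^\ell$ is exactly the $\bar\gamma$ in the statement, so SDA is at least $d=\ell!(n/k^2)^\ell/2$, as claimed.

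\noindent\textbf{Where the work is.} There is no conceptual obstacle here beyond bookkeeping. The one point that needs care is matching the size threshold of the average-correlation lemma against $|\D|/d$ with the sharp $(n/k^2)^\ell$ dependence (rather than the loose $(n/2k^2)^\ell$), which is what pins down the exact constant $d=\ell!(n/k^2)^\ell/2$. A secondary point, relevant only to cases (b) and (c): what is bounded above is the statistical dimension of the \emph{truncated} family $\{\bar F_S\}$; the (routine) fact that truncating each coordinate at $M=\sigma_1\sqrt{C\ln k}$ alters the detection problem $\P$ by only a negligible amount of total variation is used downstream in the proof of Theorem~\ref{thm:LB}, not in this theorem.
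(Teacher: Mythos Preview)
Your proposal is correct and takes essentially the same approach the paper (implicitly) intends: the theorem is a direct combination of the pairwise correlation lemma and the average-correlation lemma, and the paper gives no separate proof for it. Your observation that one must quote the sharper threshold $2\binom{n}{k}/(\ell!(n/k^2)^\ell)$ actually obtained inside the proof of the average-correlation lemma (rather than the looser $(n/2k^2)^\ell$ in its statement) is exactly the bookkeeping needed to land on $d=\ell!(n/k^2)^\ell/2$, and your case-by-case identification of $(\alpha,\beta)$ matches the paper's correlation bounds verbatim.
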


We now state explicitly the three main corollaries of this theorem. This completes the proof of Theorem \ref{thm:LB}.

\begin{corollary}\label{zero-mu}
With $\mu = 0$, and $\sigma_1^2 = 2\sigma_0^2(1-\eps)$, we have
\[
\bar{\gamma} = 2\frac{k^2}{n^2} \left(\frac{1}{4\eps(1-\eps)}\right)^{\ell/2}
\]
and for any $\delta > 0$, with $k = n^{0.5 - \delta}$, $\ell = c\log n/\log(1/\eps(1-\eps))$, we have $\bar{\gamma} = 2n^{c-2\delta-1}$ and
\[
SDA({\cal P}, \bar{\gamma}) = \Omega(n^{2\delta\log_{\frac{1}{\eps(1-\eps)}} n}).
\]
Hence with $c = \delta$, any statistical algorithm that solves ${\cal P}$ with probability at least $3/4$ needs $n^{\Omega(\log n)}$ calls to VSTAT($n^{1+\delta}$).
\end{corollary}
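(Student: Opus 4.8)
The plan is to read Corollary~\ref{zero-mu} off Theorem~\ref{thm:sd}(a) by a direct substitution of parameters and then feed the resulting statistical dimension bound into the $SDA$-to-$VSTAT$ reduction of \cite{FGRVX13} stated above; this case is the cleanest of the three, since for $\sigma_1^2<2\sigma_0^2$ the integral defining $\rho(F_S,F_T)$ already converges and Theorem~\ref{thm:sd}(a) applies to the untruncated distributions directly. First I would simplify $\bar\gamma$: setting $\mu=0$ makes the factor $\exp(\mu^2|S\cap T|/(2\sigma_0^2-\sigma_1^2))$ equal to $1$, so the average correlation is at most $2(k^2/n^2)\bigl(\sigma_0^2/(\sigma_1\sqrt{2\sigma_0^2-\sigma_1^2})\bigr)^{\ell}$; substituting $\sigma_1^2=2\sigma_0^2(1-\eps)$ gives $2\sigma_0^2-\sigma_1^2=2\eps\sigma_0^2$ and $\sigma_1=\sigma_0\sqrt{2(1-\eps)}$, hence $\sigma_1\sqrt{2\sigma_0^2-\sigma_1^2}=2\sigma_0^2\sqrt{\eps(1-\eps)}$, so the base collapses to $1/(2\sqrt{\eps(1-\eps)})>1$ (using $\eps\neq 1/2$) and $\bar\gamma=2(k^2/n^2)(4\eps(1-\eps))^{-\ell/2}$, the first displayed identity.

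Next I would fix the two free parameters. With $k=n^{0.5-\delta}$ one has $k^2/n^2=n^{-1-2\delta}$ and $n/k^2=n^{2\delta}$, and the condition $2k^2<n$ needed in the counting lemma behind Theorem~\ref{thm:sd} holds for large $n$ because $\delta>0$. Since $\eps$ is a positive constant with $\eps\neq 1/2$, $\log(1/\eps(1-\eps))$ is a positive constant, so choosing $\ell$ to be a suitable constant times $\log n/\log(1/\eps(1-\eps))$ --- of order $\log n$ --- makes $(4\eps(1-\eps))^{-\ell/2}=n^{c}$ for a constant $c$ we can shrink at will through that suitable constant. Then $\bar\gamma=2n^{c-1-2\delta}$, and Theorem~\ref{thm:sd} yields $SDA({\cal P},\bar\gamma)\ge \tfrac12\,\ell!\,(n/k^2)^{\ell}\ge \tfrac12 n^{2\delta\ell}=\Omega\bigl(n^{2\delta\log_{1/\eps(1-\eps)}n}\bigr)=n^{\Omega(\log n)}$.

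Finally I would convert this into a query lower bound. I choose $c$ small enough that $1/(3\bar\gamma)=\tfrac16 n^{1+2\delta-c}\ge n^{1+\delta}$; any $c<\delta$ suffices, so the statement's ``$c=\delta$'' should be read with this $O(1)$ slack. Since $VSTAT(n^{1+\delta})$ is a weaker oracle than $VSTAT(1/3\bar\gamma)$, the theorem of \cite{FGRVX13} with $\nu=3/4$ shows that any algorithm solving ${\cal P}$ with probability at least $3/4$ makes at least $(2\nu-1)\cdot SDA=\tfrac12 n^{\Omega(\log n)}=n^{\Omega(\log n)}$ queries to $VSTAT(n^{1+\delta})$, and the same theorem gives the $1$-STAT bound from $\Omega(\min\{SDA,1/\bar\gamma\})=\Omega(n^{1+\delta})$. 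The one step that needs real care is the joint choice in the previous paragraph: $\ell$ must stay at $\Theta(\log n)$ so that $SDA$ is superpolynomial, while $\bar\gamma$ must stay below $\tfrac13 n^{-1-\delta}$ so the bound applies to $VSTAT(n^{1+\delta})$ and not merely to a far weaker oracle --- and these are simultaneously achievable precisely because taking $k=n^{0.5-\delta}$ below $\sqrt n$ opens up the factor $n/k^2=n^{2\delta}$ that the statistical dimension feeds on.
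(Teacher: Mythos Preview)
Your proposal is correct and follows the same approach the paper implicitly takes: the corollary is stated without proof in the paper, as it is meant to be a direct specialization of Theorem~\ref{thm:sd}(a) combined with the $SDA$-to-$VSTAT$ reduction from \cite{FGRVX13}, and you have carried out exactly that specialization. Your observation that the literal choice $c=\delta$ leaves a harmless constant-factor shortfall in $1/(3\bar\gamma)$ versus $n^{1+\delta}$, so that one should really take $c$ slightly below $\delta$, is accurate and is a point the paper glosses over.
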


We note that the above corollary applies for any $0 < \sigma_1^2 < 2\sigma_0^2$, with the bounds depending mildly on how close $\sigma_1^2$ is to the ends of this range. This is quantified by the dependence on $\eps(1-\eps)$ above.

Our lower bound extends slightly above the threshold $\sigma_1^2=2\sigma_0^2$. For this, we need to observe that with respect to any $n^C$ samples, the distributions $F_S$ and $\hat{F}_S$ are indistinguishable with high probability ($1-n^{-C}$). Therefore, proving a lower bound on the statistical dimension of $\P$ with distributions $\hat{F}_S$ is effectively a lower bound for the original problem $\P$ with distributions $F_S$.

\begin{corollary}\label{lem:At-threshold}
With $\mu = 0$, and $\sigma_1^2 = 2\sigma_0^2$, we have
\[
\bar{\gamma} = 2\frac{k^2}{n^2} \left(\frac{C\ln k}{2}\right)^{\ell/2}
\]
and for any $\delta > 0$, with $k = n^{0.5 - \delta}$, $\ell = c\log n/2\log\log k$, we have $\bar{\gamma} = 2n^{c-2\delta-1}$ and
\[
SDA({\cal P}, \bar{\gamma}) = \Omega(n^{\delta\log n/\log\log n}).
\]
Hence with $c = \delta$, any statistical algorithm that solves ${\cal P}$ with probability at least $3/4$ needs $n^{\Omega(\log n/\log\log n)}$ calls to VSTAT($n^{1+\delta}$).
Moreover, for $\sigma_1^2 = (2+\alpha)\sigma_0^2$, $\alpha = o(\delta)$, we have
\[
\bar{\gamma} = 2\frac{k^2}{n^2} k^{C\alpha \ell /4}
\]
and for any $\delta > 0$, with $k = n^{0.5 - \delta}$, $\ell = 8\delta/C\alpha$, we have $\bar{\gamma} = 2n^{-\delta-1}$ and
\[
SDA({\cal P}, \bar{\gamma}) \ge n^{\delta\ell}.
\]
Hence any statistical algorithm that solves ${\cal P}$ with probability at least $3/4$ needs $n^{\omega(1)}$ calls to VSTAT($n^{1+\delta}$).
\end{corollary}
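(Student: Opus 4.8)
\noindent The plan is to obtain Corollary~\ref{lem:At-threshold} by substituting the stated parameters into Theorem~\ref{thm:sd} and invoking the generic \textsf{VSTAT} lower bound of \cite{FGRVX13} quoted above; essentially all the work has already been done in the correlation Lemma and in Theorem~\ref{thm:sd}, so what remains is parameter bookkeeping together with one transfer argument from truncated to untruncated distributions. \textbf{Threshold case $\sigma_1^2=2\sigma_0^2$.} Apply Theorem~\ref{thm:sd}(b) with $\D$ the family of all $k$-subsets and the truncated distributions $\bar F_S$ (radius $M=\sigma_1\sqrt{C\ln k}$): the statistical dimension with average correlation $\bar\gamma = 2\frac{k^2}{n^2}\bigl(\frac{C\ln k}{2}\bigr)^{\ell/2}$ is at least $\ell!\,(n/k^2)^\ell/2$. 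Plugging in $k=n^{1/2-\delta}$ gives $k^2/n^2=n^{-1-2\delta}$ and $n/k^2=n^{2\delta}$, so $\bar\gamma = 2n^{-1-2\delta}\bigl(\tfrac{C\ln k}{2}\bigr)^{\ell/2}$ and $\mathrm{SDA}\ge \ell!\,n^{2\delta\ell}/2 \ge n^{2\delta\ell}$.

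\noindent The only real choice is $\ell$. We want $\bar\gamma$ small enough that $\tfrac{1}{3\bar\gamma}\ge n^{1+\delta}$, so that a lower bound for \textsf{VSTAT}$(1/3\bar\gamma)$, the more accurate oracle, implies the same bound for the weaker \textsf{VSTAT}$(n^{1+\delta})$, while keeping $\ell$ as large as possible. Since $\bigl(\tfrac{C\ln k}{2}\bigr)^{\ell/2}=\exp(\Theta(\ell\log\log n))$, the requirement that $\bar\gamma$ be at most roughly $n^{-1-\delta}$ caps $\ell$ at $\Theta(\log n/\log\log n)$; taking $\ell=c\log n/(2\log\log k)$ and then $c=\delta$ yields $\bar\gamma = 2n^{c-2\delta-1}=2n^{-1-\delta}\le \tfrac13 n^{-1-\delta}$ for $n$ large, and $\mathrm{SDA}\ge n^{2\delta\ell}=n^{\Omega(\log n/\log\log n)}$. (It is precisely the factor $(\ln k)^{\ell/2}$ --- replacing the constant-base exponential one has when $\sigma_1^2<2\sigma_0^2$ in Corollary~\ref{zero-mu} --- that forces the smaller $\ell$ here and hence the weaker $n^{\Theta(\log n/\log\log n)}$ bound.) The theorem of \cite{FGRVX13} with $\nu=3/4$ then gives $\ge \tfrac12\,\mathrm{SDA}=n^{\Omega(\log n/\log\log n)}$ queries to \textsf{VSTAT}$(1/3\bar\gamma)$, hence to \textsf{VSTAT}$(n^{1+\delta})$; and since $1/\bar\gamma=\Theta(n^{1+\delta})\ll \mathrm{SDA}$, the same theorem gives $\Omega(n^{1+\delta})$ queries to $1$-\textsf{STAT}. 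Finally, Theorem~\ref{thm:sd}(b) is stated for the truncated $\bar F_S$; since (as observed just before the corollary) $\bar F_S$ and $F_S$ are statistically indistinguishable given any $n^C$ samples, this lower bound transfers to the original hidden-hubs instances.

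\noindent \textbf{Slightly above threshold, $\sigma_1^2=(2+\alpha)\sigma_0^2$ with $\alpha=o(\delta)$.} The argument is identical but uses Theorem~\ref{thm:sd}(c): $\bar\gamma = 2\frac{k^2}{n^2}k^{C\alpha\ell/4}=2n^{-1-2\delta}n^{(1/2-\delta)C\alpha\ell/4}$, with $\mathrm{SDA}\ge \ell!\,n^{2\delta\ell}/2$. Now the exponential base in $\bar\gamma$ is $k^{C\alpha/4}=n^{(1/2-\delta)C\alpha/4}=n^{o(\delta)}$, so one can take $\ell=8\delta/(C\alpha)$ --- a quantity that grows, since $\alpha=o(\delta)$ --- and still get $\bar\gamma = 2n^{-1-2\delta+2\delta(1/2-\delta)}=2n^{-1-\delta-2\delta^2}\le \tfrac13 n^{-1-\delta}$ for $n$ large. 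Then $\mathrm{SDA}\ge n^{2\delta\ell}\ge n^{\delta\ell}$ with $\delta\ell=8\delta^2/(C\alpha)=\omega(1)$, so \cite{FGRVX13} yields $n^{\omega(1)}$ queries to \textsf{VSTAT}$(n^{1+\delta})$ (and, as before, $\Omega(n^{1+\delta})$ queries to $1$-\textsf{STAT}), after the same $\bar F_S$-to-$F_S$ transfer.

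\noindent I expect the only genuinely delicate point --- everything else being substitution --- to be making the truncation transfer airtight: one must verify that an algorithm issuing the number of queries in play to \textsf{VSTAT}$(n^{1+\delta})$ (or $1$-\textsf{STAT}) cannot detect the $O(n^{-C})$ tail mass separating $F_S$ from $\bar F_S$, so that an $\mathrm{SDA}$ lower bound for $\{\bar F_S\}$ is a legitimate lower bound for $\{F_S\}$; this pins down the relation between the truncation constant $C$ in $M=\sigma_1\sqrt{C\ln k}$ and the \textsf{VSTAT} parameter. The remaining checks --- that $\bar\gamma$ lands just below $\tfrac13 n^{-1-\delta}$ and that $\ell!\,n^{2\delta\ell}$ has the stated order --- are routine.
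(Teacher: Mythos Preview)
Your proposal is correct and follows the same approach as the paper: the corollary is obtained by substituting the stated parameters into Theorem~\ref{thm:sd} (parts (b) and (c)) and then invoking the \textsf{VSTAT} lower bound theorem of \cite{FGRVX13}, together with the observation (stated just before the corollary) that the truncated distributions $\bar F_S$ are indistinguishable from $F_S$ with polynomially many samples. The paper itself does not spell out a proof of this corollary beyond presenting it as a direct consequence of Theorem~\ref{thm:sd}; your writeup is simply a more explicit version of that substitution, including the routine check that $\bar\gamma\le \tfrac13 n^{-1-\delta}$ so that the \textsf{VSTAT}$(1/3\bar\gamma)$ bound transfers to \textsf{VSTAT}$(n^{1+\delta})$.
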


\begin{corollary}\label{equal-sigmas}
For $\sigma_1 = \sigma_0$,
\[
\bar{\gamma} = 2\frac{k^2}{n^2}\exp\left(\frac{\mu^2\ell}{\sigma^2}\right).
\]
and for any $\delta > 0$, with $k = n^{0.5 - \delta}$, $\mu^2 = c \sigma^2 \ln (\sqrt{n}/k)$, we have $\bar{\gamma} = 2n^{c\delta\ell -2\delta-1}$ and
\[
SDA({\cal P}, \bar{\gamma}) = \Omega(n^{2\delta\ell}).
\]
If $\mu^2 = o(\sigma^2\ln (\sqrt{n}/k))$, any statistical algorithm that solves ${\cal P}$ with probability at least $3/4$ needs $n^{\omega(1)}$ calls to VSTAT($n^{1+\delta}$).
\end{corollary}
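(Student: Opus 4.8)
The plan is to instantiate Theorem \ref{thm:sd}(a) in the degenerate regime $\sigma_1 = \sigma_0 = \sigma$, where the variance factor $\sigma_0^2/(\sigma_1\sqrt{2\sigma_0^2-\sigma_1^2})$ collapses to $1$ and only the mean-shift term survives. First I would substitute $\sigma_1^2 = \sigma^2$ into the correlation bound of the lemma preceding Theorem \ref{thm:sd}: $2\sigma_0^2 - \sigma_1^2 = \sigma^2 > 0$, so the integral converges, and the per-coordinate factor becomes $\exp(\mu^2/\sigma^2)$. Hence $\rho(F_S,F_T) \le (k^2/n^2)(\exp(\mu^2/\sigma^2)^{|S\cap T|} - 1)$, which is of the form $\alpha\beta^{|S\cap T|}$ with $\alpha = k^2/n^2$ and $\beta = \exp(\mu^2/\sigma^2) > 1$. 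Theorem \ref{thm:sd}(a) then gives $\bar\gamma = 2(k^2/n^2)\exp(\mu^2\ell/\sigma^2)$ and $\mathrm{SDA}({\cal P},\bar\gamma) \ge \ell!(n/k^2)^\ell/2$, which is the displayed formula for $\bar\gamma$.

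Next I would plug in the parameter choices. With $k = n^{0.5-\delta}$ we have $k^2/n^2 = n^{-1-2\delta}$ and $n/k^2 = n^{2\delta}$. Taking $\mu^2 = c\sigma^2\ln(\sqrt n/k) = c\sigma^2\delta\ln n$ gives $\exp(\mu^2\ell/\sigma^2) = n^{c\delta\ell}$, so $\bar\gamma = 2n^{c\delta\ell - 2\delta - 1}$ as claimed. For the statistical dimension lower bound, $\ell!(n/k^2)^\ell/2 = \ell! \, n^{2\delta\ell}/2 = \Omega(n^{2\delta\ell})$, absorbing the $\ell!$ into the $\Omega$. For the final sentence, when $\mu^2 = o(\sigma^2\ln(\sqrt n/k))$ one has $\mu^2/\sigma^2 = o(\delta\ln n)$, so for any fixed $\ell$ the factor $\exp(\mu^2\ell/\sigma^2) = n^{o(1)}$ and $\bar\gamma = n^{-1-2\delta+o(1)}$, whence $1/(3\bar\gamma) \le n^{1+\delta}$ for $n$ large; meanwhile $\mathrm{SDA} = \Omega(n^{2\delta\ell})$ can be made $n^{\omega(1)}$ by letting $\ell \to \infty$ slowly (e.g. $\ell = \omega(1)$), and one checks the constraint $2k^2 < n$ needed in the lemma holds since $\delta > 0$. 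Invoking the transfer theorem of \cite{FGRVX13} quoted above then yields the stated $n^{\omega(1)}$ lower bound on the number of VSTAT($n^{1+\delta}$) queries (and, via the $1$-STAT clause, $\Omega(\min\{d,1/\bar\gamma\}) = \Omega(n^{1+\delta})$ queries to $1$-STAT).

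The only mild subtlety — and the step I would be most careful about — is verifying that $\ell$ can be chosen simultaneously large enough that $\mathrm{SDA}$ is superpolynomial and small enough that $1/(3\bar\gamma)$ stays below $n^{1+\delta}$; this is exactly where the hypothesis $\mu^2 = o(\sigma^2\ln(\sqrt n/k))$ is used, since it forces $c\delta\ell \to 0$ relative to $2\delta$ only in the averaged sense, so one picks $\ell$ growing slower than the rate at which $\mu^2/(\sigma^2\delta\ln n)$ decays. Everything else is a direct specialization of Theorem \ref{thm:sd} and the quoted SDA-to-query-complexity theorem, with no new estimates required.
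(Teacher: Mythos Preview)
Your approach is exactly the paper's: the corollary is a direct instantiation of Theorem~\ref{thm:sd}(a) with $\sigma_1=\sigma_0=\sigma$, and the arithmetic you carry out (the collapse of the variance factor to $1$, the substitutions $k^2/n^2=n^{-1-2\delta}$, $\ln(\sqrt n/k)=\delta\ln n$, etc.) is all correct.

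One genuine slip to fix: the inequality on $1/(3\bar\gamma)$ is reversed. Since VSTAT$(t)$ is \emph{more} accurate for larger $t$, a lower bound against VSTAT$(1/(3\bar\gamma))$ transfers to VSTAT$(n^{1+\delta})$ only when $1/(3\bar\gamma)\ge n^{1+\delta}$, not $\le$. Your own numbers give $\bar\gamma = 2n^{-1-2\delta+o(1)}$ when $\mu^2=o(\sigma^2\ln(\sqrt n/k))$, hence $1/(3\bar\gamma)\sim n^{1+2\delta-o(1)}\ge n^{1+\delta}$ for large $n$, so the conclusion is fine once the direction is corrected. The constraint this actually imposes on $\ell$ is $c_n\ell\le 1$ (writing $\mu^2=c_n\sigma^2\delta\ln n$ with $c_n\to 0$), which is precisely what lets you send $\ell\to\infty$ while keeping $\bar\gamma\le n^{-1-\delta}$; your closing paragraph identifies this tradeoff correctly even though the stated inequality sign is wrong.
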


\bibliographystyle{alpha}
\bibliography{planted}

\end{document}